\renewcommand{\paragraph}[1]{\noindent \textbf{#1}\quad}
\newtheorem{proposition}{Proposition}
\newtheorem{result}{Result}
\newtheorem{lemma}{Lemma}
\newtheorem{corollary}{Corollary}
\newtheorem{remark}{Remark}
\newtheorem{definition}{Definition}
\providecommand{\cref}[1]{Chapter~\ref{chap:#1}}
\providecommand{\sref}[1]{Section~\ref{sec:#1}}
\providecommand{\R}{\ensuremath{\mathbb{R}}}
\providecommand{\N}{\ensuremath{\mathbb{N}}}
\providecommand{\abs}[1]{\lvert#1\rvert}
\providecommand{\norm}[1]{\lVert#1\rVert}
\providecommand{\inprod}[1]{\langle#1\rangle}
\providecommand{\set}[1]{\left\{#1\right\}}
\providecommand{\bydef}{\coloneqq}
\providecommand{\mA}{A}
\providecommand{\w}{\omega}
\newcommand*\diff{\mathop{}\!\mathrm{d}}
\newcommand{\E}{\mathbb{E}}
\renewcommand{\P}{\mathbb{P}}
\newcommand{\Ea}[1]{\E\left[#1\right]}
\newcommand{\Eb}[2]{\E_{#1}\left[#2\right]}
\newcommand{\pM}{\Pi}
\newcommand{\ps}{\pi}
\newcommand{\tvs}{\Omega}
\newcommand{\btr}{b_\mathrm{tr}}
\newcommand{\Rtr}{R_\mathrm{tr}}
\newcommand{\Atr}{A_\mathrm{tr}}
\newcommand{\Ate}{A_\mathrm{test}}
\newcommand{\bte}{b_\mathrm{test}}
\newcommand{\Bte}{B_\mathrm{test}}
\newcommand{\Rte}{R_\mathrm{test}}
\newcommand{\Etr}{E_\mathrm{tr}}
\newcommand{\GammaEq}{\Gamma^\ast_\mathrm{eq}}
\DeclareMathOperator{\tr}{tr}
\DeclareMathOperator{\vecop}{vec}
\newcommand{\cl}{\ell} 
\newcommand{\nsamp}{n} 
\newcommand{\load}{\tau} 
\newcommand{\cload}{\alpha} 
\newcommand{\tload}{\kappa} 
\newcommand{\tv}{w}  
\newcommand{\ntv}{k} 
\newcommand{\nv}{\rho} 
\newcommand{\nrv}{\epsilon} 
\newcommand{\Ptest}{\mathcal{P}_\mathrm{test}}
\newcommand{\Ptrain}{\mathcal{P}_\mathrm{train}}
\newcommand{\eicl}{e^\mathrm{ICL}}
\newcommand{\eiclrl}{e^\mathrm{ICL}_\mathrm{ridgeless}}
\newcommand{\eidg}{e^\mathrm{IDG}}
\newcommand{\eidgrl}{e^\mathrm{IDG}_\mathrm{ridgeless}}
\newcommand{\unif}[1]{\mathsf{Unif}(#1)}
\providecommand{\sph}{\mathcal{S}^{d-1}(\sqrt{d})}
\providecommand{\unifsp}{\unif{\sph}}
\renewcommand{\@fnsymbol}[1]{%
  \ifcase#1\or *\or **\or †\or ‡\or §\else \@arabic{#1}\fi}
\begin{document}

\title{Asymptotic theory of in-context learning by linear attention}

\author{Yue M. Lu}
\email{yuelu@seas.harvard.edu}
\affiliation{The John A. Paulson School of Engineering and Applied Sciences, Harvard University, Cambridge, MA 02138}

\author{Mary I. Letey}
\email{maryletey@fas.harvard.edu}
\affiliation{The John A. Paulson School of Engineering and Applied Sciences, Harvard University, Cambridge, MA 02138}

\author{Jacob A. Zavatone-Veth}
\email[]{jzavatoneveth@fas.harvard.edu}
\altaffiliation[]{Equal contribution.}
\affiliation{The John A. Paulson School of Engineering and Applied Sciences, Harvard University, Cambridge, MA 02138}
\affiliation{Center for Brain Science, Harvard University, Cambridge, MA 02138}
\affiliation{Society of Fellows, Harvard University, Cambridge, MA 02138}
\affiliation{Department of Physics, Harvard University, Cambridge, MA 02138}

\author{Anindita Maiti}
\email[]{amaiti@perimeterinstitute.ca}
\altaffiliation[]{Equal contribution.}
\affiliation{Perimeter Institute for Theoretical Physics, Waterloo, ON N2L 2Y5, Canada}

\author{Cengiz Pehlevan}
\email{cpehlevan@seas.harvard.edu}
\affiliation{The John A. Paulson School of Engineering and Applied Sciences, Harvard University, Cambridge, MA 02138}
\affiliation{Center for Brain Science, Harvard University, Cambridge, MA 02138}
\affiliation{The Kempner Institute for the Study of Natural and Artificial Intelligence, 150 Western Avenue, Alston, MA 02134}

\begin{abstract}
\noindent Transformers have a remarkable ability to learn and execute tasks based on examples provided within the input itself, without explicit prior training. It has been argued that this capability, known as in-context learning (ICL), is a cornerstone of Transformers' success, yet questions about the necessary sample complexity, pretraining task diversity, and context length for successful ICL remain unresolved. Here, we provide a precise answer to these questions in an exactly solvable model of ICL of a linear regression task by linear attention. We derive sharp asymptotics for the learning curve in a phenomenologically-rich scaling regime where the token dimension is taken to infinity; the context length and pretraining task diversity scale proportionally with the token dimension; and the number of pretraining examples scales quadratically. We demonstrate a double-descent learning curve with increasing pretraining examples, and uncover a phase transition in the model's behavior between low and high task diversity regimes: In the low diversity regime, the model tends toward memorization of training tasks, whereas in the high diversity regime, it achieves genuine in-context learning and generalization beyond the scope of pretrained tasks. These theoretical insights are empirically validated through experiments with both linear attention and full nonlinear Transformer architectures. 
\end{abstract}

\maketitle

\setlength{\abovedisplayskip}{6pt}
\setlength{\belowdisplayskip}{6pt}
\setlength{\abovedisplayshortskip}{0pt}
\setlength{\belowdisplayshortskip}{4pt}

\section{Introduction}
Since their introduction by Vaswani et al. in 2017 \cite{vaswani2017attention}, Transformers have become a cornerstone of modern artificial intelligence (AI). Originally designed for sequence modeling tasks, such as language modeling and machine translation, Transformers achieve state-of-the art performance across many domains, even those that are not inherently sequential \cite{dosovitskiy2021image}. Most strikingly, they underpin the breakthroughs achieved by large language models such as BERT \cite{devlin2018bert},
LLaMA \cite{touvron2023llama}, the GPT series \cite{radford2018improving,radford2019language,brown2020language,achiam2023gpt}, the Claude model family \cite{TheC3}, and DeepSeek R1 \cite{deepseekai2025}.

The technological advancements enabled by Transformers have inspired a substantial body of research aimed at understanding their working principles. One key observation is that language models gain new behaviors and skills as their number of parameters and the size of their training datasets grow \cite{brown2020language,ganguli2022predictability,srivastava2022beyond,wei2022emergent}.  A particularly important emergent skill is \textit{in-context learning} (ICL), which describes the model's ability to learn and execute tasks based on the context provided within the input itself, without the need for explicit prior training on those specific tasks. To give an example from natural language processing, a pretrained large language model might be able to successfully translate English to Italian after being prompted with a few example translations, even if it has not been specifically pretrained on that translation task \cite{brown2020language}. ICL enables language models to perform new, specialized tasks without retraining, which is arguably a key reason for their general-purpose abilities. 

Despite recent progress in understanding ICL, fundamental questions about when and how ICL emerges in large language models remain unresolved. These models are typically trained or pretrained using a next-token prediction objective, but the impact of various algorithmic and hyperparameter choices during pretraining on ICL performance is still not well understood. What mechanisms do Transformers implement to facilitate ICL? How many pretraining examples are necessary for ICL capabilities to arise? Furthermore, how many examples must be provided within the input context for the model to successfully perform an in-context task? Another important question is the degree of task diversity in the training data: how diverse must the training tasks be to enable ICL for truly novel tasks that the model has not encountered before?

In this paper, we address these questions by investigating the ICL capabilities of a linear attention module for linear regression tasks. This simplified model setting, which exhibits the minimal architectural features to perform ICL for linear regression, enables us to derive an asymptotically precise theory of ICL performance, elucidating its exact dependence on various hyperparameters. In the remainder of this section, we first provide an overview of related work on ICL. We then summarize our main contributions. 

\subsection{Related Work}

\paragraph{ICL in Transformer architectures.} 
The striking ICL abilities of Transformers were thrust to the fore by Brown \emph{et al.} \cite{brown2020language}'s work on GPT-3. Focusing on natural language processing (NLP) tasks, they showed that ICL performance dramatically improves with an increase in the number of model parameters, with an increase in the number of examples in the model's context, and with the addition of a natural language task description. In subsequent work, Wei \emph{et al.} \cite{wei2022emergent} proposed that the emergence of ICL with increasing scale is an abrupt, unpredictable transition. This perspective has substantially influenced proposed theories for the emergence of ICL \cite{olsson2022context}. However, Schaeffer \emph{et al.} \cite{schaeffer2023mirage} have challenged the idea that the emergence of ICL is unpredictable; they suggest that appropriately-chosen measures of otherwise hidden progress \cite{barak2022hidden} reveal that ICL gradually develops with scale.

\paragraph{Empirical studies of synthetic ICL tasks.} 
Though ICL in NLP is both impressive and useful, these natural data do not allow precise experimental control and study. Towards a fine-grained understanding of the conditions required for ICL, many recent works have explored ICL of parametrically-controllable synthetic tasks, notably linear regression and classification. These works have identified various features of pretraining data distributions that contribute to the emergence of ICL \cite{chan2022data,singh2023transient,bietti2023birth,raventos2023pretraining,reddy2023mechanistic}. Closely related to our work is a study of ICL of linear regression by Ravent\'{o}s \emph{et al.} \cite{raventos2023pretraining}. Their work identified a task diversity threshold for the emergence of ICL, below which a pretrained Transformer behaves as a Bayesian estimator with a prior determined by the limited set of pretraining tasks. Above this threshold, the model's performance approaches that of within-context optimal Bayesian ridge regression, corresponding to a Gaussian prior over all tasks, including those not seen during pretraining. A motivating objective of our work is to provide a theoretical account of the empirical findings made by Ravent\'{o}s \emph{et al.} \cite{raventos2023pretraining}, which underscore the roles of task diversity, regularization, model capacity, and data structure in the emergence of ICL.

\paragraph{Theoretical studies of ICL.} Many theoretical studies of ICL have centered on the idea that Transformers learn a particular algorithm during pretraining, which is then flexibly deployed to solve in-context tasks. In broad strokes, papers from this program of research often consider a particular algorithm for solving an in-context task, prove that Transformers can approximately implement this algorithm, and then empirically compare the ICL performance of a pre-trained Transformer to the performance of that algorithm \cite{bai2023transformers,li2023transformers,akyurek2023what,ahn2023transformers,fu2023transformers,vonoswald2023transformers,zhang2024incontext,zhang2023trained}. A clear consensus on which algorithm underlies ICL of linear regression in full transformers has yet to emerge \cite{bai2023transformers,li2023transformers,akyurek2023what,ahn2023transformers,fu2023transformers,vonoswald2023transformers,zhang2024incontext,zhang2023trained}. Within this line of research, closest to our work are a series of papers that consider ICL of linear regression by simplified Transformers using linear, rather than softmax, attention modules \cite{zhang2023trained,zhang2024incontext,ahn2023transformers,wu2023pretraining,vonoswald2023transformers,chandra2024towards,duraisamy2024finite}. Zhang, Frei, and Bartlett \cite{zhang2023trained} studied these models in the limit of infinite pretraining dataset size (\emph{i.e.}, the population risk limit), and show that their performance on in-context linear regression nearly matches that of the Bayes-optimal estimator for the ICL task. However, they found that linear Transformers are not robust to shifts in the within-context covariate distribution. Zhang, Wu, and Bartlett \cite{zhang2024incontext} then showed that any optimizer of the within-context risk for a linear Transformer solves the ICL task with an approximation to one step of gradient descent from a learnable initialization, and that the resulting estimator can saturate the Bayes error for tasks with a Gaussian prior and non-zero mean. As we will discuss in \Cref{sec:formulation}, our reduction of the linear attention module is inspired in part by these works. In very recent work, Duraisamy \cite{duraisamy2024finite} has studied the fine-sample risk of in-context linear regression with a single step of gradient descent, without directly analyzing Transformers. An \textit{et al.} \cite{ahn2023transformers} and Wu \textit{et al.} \cite{wu2023pretraining} investigated how linear Transformers adapt to limited pretraining data and context length, again showing that in certain cases nearly-optimal error is achievable. Like these studies, our work considers linear attention; but our analysis, with its asymptotically sharp predictions, allows us to elucidate the exact dependence of ICL performance on various hyperparameters and to pinpoint when and how the transition from memorization to ICL of linear regression occurs. In closing, we highlight the work of Reddy \cite{reddy2023mechanistic}, who analyzed the emergence of in-context classification through a phenomenological model.

\subsection{Summary of contributions}

We now summarize the main contributions of our paper relative to the prior art reviewed above. Building on recent literature, we focus on a simplified model of a Transformer that captures its key architectural motif: the linear self-attention module \cite{vonoswald2023transformers, zhang2024incontext, chandra2024towards, wu2023pretraining, zhang2023trained, ahn2023transformers}. Linear attention includes the quadratic pairwise interactions between inputs that lie at the heart of softmax attention, but it omits the normalization steps and fully connected layers. This simplification makes the model more amenable to theoretical analysis. Our main result is a sharp asymptotic analysis of ICL for linear regression using linear attention, leading to a more precisely predictive theory than previous population risk analyses or finite-sample bounds \cite{zhang2023trained, zhang2024incontext}. The main contributions of our paper are structured as follows:

\begin{enumerate}[leftmargin=*]
    \item We begin in \S\ref{sec:formulation} by developing a simplified parameterization of linear self-attention that allows pretraining on the ICL linear regression task to be performed using ridge regression. 

    \item Within this simplified model, we identify a phenomenologically rich scaling limit in which the ICL performance can be analyzed (\S\ref{sec:theory_results}). As the token dimension tends to infinity, we allow the number of pretraining examples scale quadratically with the token dimension, while the context length and pretraining task diversity scale linearly. In this joint limit, we compute sharp asymptotics for ICL performance using random matrix theory. 

    \item The asymptotically precise theory curves we derive reveal several interesting phenomena (\S\ref{sec:theory_results}). First, we observe double-descent in the model's ICL generalization performance as a function of pretraining dataset size, reflecting our assumption that it is pretrained to interpolation. Second, we study the non-monotonic dependence of ICL performance on context length. Last, we uncover a transition from memorization to in-context learning as the pretraining task diversity increases. 
    This transition recapitulates the empirical findings of \cite{raventos2023pretraining} in full Transformer models. 

    \item In \S\ref{sec:experiments}, we demonstrate through numerical experiments that the insights from our theory, derived using the simplified linear attention model, transfer to full Transformer models with softmax self-attention. In particular, the scaling of pretraining sample complexity and task diversity with token dimension required for successful ICL is consistent.
    
\end{enumerate}

\section{Problem formulation}\label{sec:formulation}

We begin by describing the setting of our study.

\subsection{ICL of linear regression} In an ICL task, the model takes as input a sequence of tokens $\lbrace x_1, y_1, x_2, y_2, \ldots, x_{\cl},y_{\cl}, x_{\cl+1}\rbrace$, and outputs a prediction of $y_{\cl+1}$. We will often refer to an input sequence as a ``context.''
The pairs $\lbrace x_i,y_i\rbrace_{i=1}^{\cl+1}$ are i.i.d. samples from a \emph{context-dependent} joint distribution $P(x,y)$. Hence, the model needs to gather information about $P(x,y)$ from the first $\cl$ examples and use this information to predict $y_{\cl+1}$ from $x_{\cl+1}$. We will refer to $\cl$ as the ``context length''.

In this work, we focus on an approximately linear mapping between  $x_i\in \R^d$ and $y_i\in \R$:
\begin{equation}\label{eq:linear_function}
    y_i = \inprod{x_i, \tv} + \nrv_i,
\end{equation}
where $\nrv_i$ is a Gaussian noise $\tv \in \R^d$ is referred to as a task vector. We note that the task vector $\tv$ is fixed within a context, but can change between different contexts. The model has to learn $\tv$ from the $\cl$ pairs presented within the context, and use it to predict $y_{\cl+1}$ from $x_{\cl+1}$.

\subsection{Linear self-attention} 
The model that we will analytically study is the linear self-attention block \cite{wang2020linformer}. Linear self-attention takes as input an embedding matrix $Z$, whose columns hold the sequence tokens. The mapping of sequences to matrices is not unique. Here, following the convention in \cite{zhang2023trained,wu2023pretraining,wang2020linformer}, we will embed the input sequence $\lbrace x_1, y_1, x_2, y_2, \ldots, x_{\cl},y_{\cl}, x_{\cl+1}\rbrace$ as:
\begin{align}\label{eq:Zstructure}
Z = \left[\begin{array}{ccccc} x_1 & x_2 & \ldots & x_{\cl} & x_{\cl+1} \\ y_1 & y_2 & \ldots & y_{\cl} & 0 \end{array}\right] \in \R^{(d+1)\times(\cl+1)},
\end{align}
where $0$ in the lower-right corner is a token that prompts the missing value $y_{\cl+1}$ to be predicted.

For value matrix $V\in\mathbb{R}^{(d+1)\times (d+1)}$ and key and query matrices $K,Q$ such that $K^\top Q\in \mathbb{R}^{(d+1)\times (d+1)}$, the output of a linear-attention block \cite{shen2021efficient,katharopoulos2020transformers,wang2020linformer} is given by
\begin{align}\label{eq:LA}
A \bydef Z + \frac 1{\cl} VZ(KZ)^\top(QZ). 
\end{align} 
The output $A$ is a matrix while our goal is to predict a scalar, $y_{\cl+1}$. Following the choice of positional encoding in \eqref{eq:Zstructure}, we will take $A_{d+1,\cl+1}$, the element of $A$ corresponding to the $0$ prompt, as the prediction for $y_{\cl+1}$:
\begin{align}\label{eq:full_model}
\hat y\bydef A_{d+1,\cl+1}.
\end{align}

\subsection{Pretraining data} The model is pretrained on $n$ sample sequences, where the $\mu$th sample is a collection of $\cl+1$ vector-scalar pairs $\{x_i^{\mu}\in \R^d, y_i^{\mu}\in\R\}_{i=1}^{\cl+1}$ related by the approximate linear mapping in \eqref{eq:linear_function}: $ y_i^{\mu} = \inprod{x_i^{\mu}, \tv^{\mu}} + \epsilon_i^{\mu}$. Here, $\tv^\mu$ denotes the task vector associated with the $\mu$th sample. We make the following statistical assumptions:
\begin{enumerate}[leftmargin=*]
    \item $x_i^{\mu}$ are $d$-dimensional random vectors, sampled i.i.d. over both $i$ and $\mu$ from an isotropic Gaussian distribution $\mathcal{N}(0, I_d/d)$.

    \item At the beginning of training, construct a finite set with $\ntv$ task vectors, denoted by 
    \begin{equation}
        \tvs_\ntv = \set{\tv_1, \tv_2, \ldots, \tv_k}.
    \end{equation}
    The elements of this set are independently drawn once from 
    \begin{equation}\label{eq:tv_training}
        \tv_i \sim_\text{i.i.d.} \mathcal{N}(0,I_d).
    \end{equation}
For $1 \le \mu \le n$, the task vector $\tv^{\mu}$ associated with the $\mu$th sample context is uniformly sampled from $\tvs_\ntv$. Note that the variable $\ntv$ controls the task diversity in the pretraining dataset. Importantly, $\ntv$ can be less than $n$, in which case the same task vector from $\tvs_\ntv$ will be repeated multiple times. 
    
\item The noise terms $\epsilon_i^{\mu}$ are i.i.d. over both $i$ and $\mu$, and drawn  from a normal distribution $\mathcal{N}(0,\nv)$.
\end{enumerate} We denote a sample from this distribution by $(Z,y_{\cl+1}) \sim \Ptrain$.

\subsection{Parameter reduction}\label{sec:reduction}

Before specifying the training procedure, it is insightful to first examine the prediction mechanism of the linear attention module for the ICL task. This proves to be a fruitful exercise, shedding light on key questions: Can linear self-attention learn linear regression in-context? If so, what do the model parameters learn from the data to solve this ICL problem? By closely analyzing these aspects, we can also formulate a simplified problem that lends itself to analytical study.

We start by rewriting the output of the linear attention module, \eqref{eq:full_model}, in an alternative form. Following \cite{zhang2023trained}, we define
\begin{align}\label{eq:VM}
V = \left[\begin{array}{cc} V_{11} & v_{12} \\ v_{21}^{\top} & v_{22}
\end{array}\right], \quad M  = \left[\begin{array}{cc} M_{11} & m_{12} \\ m_{21}^{\top} & m_{22}
\end{array}\right] \bydef K^{\top}Q,
\end{align}
where $V_{11} \in \R^{d \times d}$, $v_{12}, v_{21} \in \R^{d}$, $v_{22}\in \R$, $M_{11} \in \R^{d \times d}$, $m_{12}, m_{21} \in \R^{d}$, and $m_{22}\in \R$. We assume that the inner dimension of $K^{\top}Q$ is greater than equal to $d+1$ so that matrix $M$ can achieve full rank.  From \eqref{eq:LA} and \eqref{eq:full_model}, one can check that
\begin{align} 
\hat y &=  \frac 1{\cl} \big\langle x_{\cl+1}, v_{22}M_{11}^{\top} \sum_{i = 1}^{\cl} y_i x_i+ v_{22}m_{21}\sum_{i=1}^{\cl}y_i^2 + M_{11}^{\top}\sum_{i=1}^{\cl+1} x_{i}x_{i}^{\top}v_{21}+m_{21}\sum_{i=1}^{\cl}y_i x_{i}^{\top}v_{21} \big\rangle,
\end{align}
where $\inprod{\cdot,\cdot}$ stands for the standard inner product.

This expression reveals several interesting points, including how this model could express a solution to the ICL task. First, not all parameters in \eqref{eq:VM} contribute to the output: we can discard all the parameters except for the last row of $V$ and the first $d$ columns of $M$.
Second, the first term 
\begin{equation}
\frac{1}{\cl} v_{22}M_{11}^{\top} \sum_{i = 1}^{\cl} y_i x_i    
\end{equation}
offers a hint about how the linear attention module might be solving the task. The sum $\frac 1{\cl}\sum_{i\leq \cl }y_ix_i$ is a noisy estimate of $\mathbb{E}[xx^{\top}]\tv$ for that context. Hence, if the parameters of the model are such that $v_{22}M_{11}^{\top}$ is approximately $\mathbb{E}[xx^{\top}]^{-1}$, this term alone makes a good prediction for the output. Third, the third term, $ M_{11}^{\top}\sum_{i=1}^{\cl+1} x_{i}x_{i}^{\top}v_{21}$ does not depend on outputs $y$, and thus does not directly contribute to the ICL task that relies on the relationship between $x$ and $y$. Finally, the fourth term, $m_{21}\sum_{i=1}^{\cl}y_i x_{i}^{\top}v_{21}$, only considers a one dimensional projection of $x$ onto $v_{21}$. Because the task vectors $w$ and $x$ are isotropic in the statistical models that we consider, there are no special directions in the problem. Consequently, we expect the optimal $v_{21}$ to be approximately zero by symmetry considerations.

Motivated by these observations, and for analytical tractability, we study the linear attention module with the constraint $v_{21}=0$. In this case, collecting the remaining parameters in a matrix 
\begin{align}\label{eq:Gamma}
    \Gamma \bydef v_{22}\begin{bmatrix} M_{11}^{\top}/d & m_{21}\end{bmatrix} \in \mathbb{R}^{d\times(d+1)}
\end{align}
and the input sequence in another matrix $H_Z$, defined as
\begin{align}\label{eq:H_Z}
    H_Z \bydef  x_{\cl+1} \begin{bmatrix} \frac{d}{\cl} \sum_{i\leq \cl} y_i x_i^{\top} & \frac{1}{\cl}\sum_{i\leq \cl}y_i^2\end{bmatrix} \in \mathbb{R}^{d\times (d+1)},
\end{align}
we can rewrite the predicted label as 
\begin{equation}\label{eq:red}
\hat y = \inprod{\Gamma, H_Z}.
\end{equation}
The $1/d$ scaling of $M_{11}$ in $\Gamma$ is chosen so that the columns of $H_Z$ scale similarly; it does not affect the final predictor $\hat y$. 

We note that \cite{zhang2023trained} provides an analysis of the population risk (whereas we focus on empirical risk) for a related reduced model in which both $v_{21}$ and $m_{21}$ are set to $0$. Consequently, the predictors considered in \cite{zhang2023trained} slightly differ from ours in \eqref{eq:Gamma}--\eqref{eq:red} by an additive term (due to $m_{21}$). The authors of \cite{zhang2023trained} justify this reduced model through an optimization argument: if these parameters $v_{21}$ and $m_{21}$ are initialized to zero, they remain zero under gradient descent optimization of the population risk. 

In the remainder of this paper, we will examine the ICL performance of the reduced model given in \eqref{eq:H_Z} and \eqref{eq:red}, except when making comparisons to a full, nonlinear Transformer architecture. Henceforth, unless explicitly stated otherwise, we will refer to this reduced model as the linear attention module. 

\subsection{Model pretraining} The parameters of the linear attention module are learned from $\nsamp$ samples of input sequences, \begin{align}
\lbrace x_1^{\mu}, y_1^{\mu}, \ldots, x_{\cl+1}^{\mu}, y_{\cl+1}^{\mu}\rbrace, \qquad \mu =1, \ldots, \nsamp.
\end{align}
We estimate model parameters by minimizing MSE loss on next-output prediction with ridge regularistion, giving 
\begin{align}\label{eq:ridge_LT}
    \Gamma^\ast &= \underset{\Gamma}{\arg\,\min}\, \sum_{\mu =1 }^{\nsamp} \left(y_{\cl+1}^{\mu} - \inprod{\Gamma, H_{Z^{\mu}}} \right)^2 +   \frac{\nsamp}{d}\lambda \norm{\Gamma}_\mathrm{F}^2, 
\end{align}
where $\lambda > 0$ is a regularization parameter, and $H_{Z^\mu}$ refers to the input matrix \eqref{eq:H_Z} populated with the $\mu$th sample sequence. The factor $\nsamp/d$ in front of $\lambda$ makes sure that, when we take the $d \to \infty$ or $\nsamp \to \infty$ limits later, there is still a meaningful ridge regularization when $\lambda > 0$. The solution to the optimization problem in \eqref{eq:ridge_LT} can be expressed explicitly as \begin{align}\label{eq:gammastarexplicit}
\text{vec}(\Gamma^\ast) = \left(\frac{\nsamp}{d}\lambda I + \sum_{\mu=1}^{\nsamp}\text{vec}(H_{Z^{\mu}})\text{vec}(H_{Z^{\mu}})^\top\right)^{-1} \sum_{\mu=1}^{\nsamp}y_{\cl+1}^{\mu}\text{vec}(H_{Z^{\mu}}),
\end{align}
where $\vecop(\cdot)$ denotes the vectorization operation. Throughout this paper, we adopt the \emph{row-major} convention. Thus, for a $d_1 \times d_2$ matrix $A$, $\vecop(A)$ is a vector in $\R^{d_1 d_2}$, formed by stacking the rows of $A$ together.

\subsection{Evaluation}\label{sec:evaluation}  For a given set of parameters $\Gamma$, the model's generalization error is defined as
\begin{equation}
    e(\Gamma) \bydef \Eb{\Ptest}{\left(y_{\cl+1} - \inprod{\Gamma, H_Z}\right)^2},
\end{equation}
where $(Z,y_{\cl+1}) \sim \Ptest$ is a new sample drawn from the probability distribution of the test dataset. We consider two different test data distributions $\Ptest$: 
\begin{enumerate}[leftmargin=*]
    \item \emph{ICL task:} $x_i$ and $\epsilon_i$ are i.i.d. Gaussians as in the pretraining case. However, each task vector $\tv^\text{test}$ associated with a test input sequence of length $\ell$ is drawn independently from $\mathcal{N}(0,I_d)$. We will denote the test error under this setting by $\eicl(\Gamma)$.
    
\item \emph{In-distribution generalization (IDG) task:} The test data are generated in exactly the same manner as the training data, \emph{i.e.}, $\Ptest = \Ptrain$, hence the term in-distribution generalization. In particular, the set of unique task vectors $\lbrace\tv_1,\ldots,\tv_{\ntv}\rbrace$ is identical to that used in the pretraining data. We will denote the test error under this setting by $e^{\text{IDG}}(\Gamma)$. This task can also be referred to as in-weight learning. 
\end{enumerate}

The ICL task evaluates the true in-context learning performance of the linear attention module. The task vectors in the test set differ from those seen in training, requiring the model to infer them from context. The IDG task assesses the model's performance on task vectors it has previously encountered during pretraining. High performance on the IDG task but low performance on the ICL task indicates that the model memorizes the training task vectors. Conversely, high performance on the ICL task suggests that the model can learn genuinely new task vectors from the provided context.

To understand the performance of our model on both ICL and IDG tasks, we will need to evaluate these expressions for the pretrained attention matrix $\Gamma^\ast$ given in \eqref{eq:gammastarexplicit}. An asymptotically precise prediction of $\eicl(\Gamma^\ast)$ and $\eidg(\Gamma^\ast)$ will be a main result of this work.

\subsection{Baselines: Bayes-optimal within-context estimators} Following \cite{raventos2023pretraining}, it is useful to compare the predictions made by the trained linear attention to optimal estimators that use only the current context information. These estimators rely solely on the data within the given context for their predictions, but need oracle knowledge of the full statistical models underlying the data. Under the mean square loss, the optimal Bayesian estimator $\hat y_{\text{Bayes}} = \mathbb{E}_{\Ptest}[y_{\cl+1}|x_1, y_1, x_2, y_2, \ldots, x_{\cl},y_{\cl},x_{\cl+1}]$ in our setting has the form
\begin{align}
\hat y_{\text{Bayes}} =(\tv_{\text{Bayes}})^{\top} x_{\cl+1},\end{align}
where $\tv_{\text{Bayes}}$ is the Bayes estimator of the task vector $\tv$.

For the ICL task, the  Bayes-optimal ridge regression estimator is given by 
\begin{align}\label{eq:ridgeestimator}
\tv_{\text{Bayes}}^{\text{ridge}} \bydef \left(\sum_{i=1}^{\cl}x_ix_i^{\top}+\rho I_d\right)^{-1}\left(\sum_{i=1}^{\cl}y_ix_i\right),
\end{align}
where the ridge parameter is set to the noise variance $\rho$. We will refer to it as the \emph{ridge estimator}. 
For the IDG task, the Bayes-optimal estimator is given by
\begin{align}\label{eq:dmmse}
\tv_{\text{Bayes}}^{\text{dMMSE}} \bydef \frac{\sum_{j=1}^{\ntv} w_j e^{-\frac{1}{2\nv}\sum_{i=1}^{\cl}\left(y_i-w_{j}^{\top}x_i\right)^2}}{\sum_{j=1}^{\ntv}  e^{-\frac{1}{2\nv}\sum_{i=1}^{\cl}\left(y_i-w_{j}^{\top}x_i\right)^2}}.
\end{align}
Here, we assume that the training task vectors $\lbrace\tv_1,\ldots,\tv_{\ntv}\rbrace$ are known to the estimator. Following \cite{raventos2023pretraining}, we will refer to this estimator as the \emph{discrete minimum mean squared error (dMMSE) estimator}.

The test performance of these estimators are calculated by 
\begin{equation}
    e^{\text{Bayes}}_{\Ptest} = \Eb{\Ptest}{\left(y_{\cl+1} - (\tv_{\text{Bayes}})^{\top} x_{\cl+1}\right)^2},
\end{equation}
where $\Ptest$ can be associated with either the ICL or the IDG task, and $\tv_{\text{Bayes}}$ can be the ridge or the dMMSE estimator. To avoid possible confusion, we emphasize that we will sometimes plot the performance of an estimator on a task for which it is not optimal. For example, we will test the dMMSE estimator, which is Bayes-optimal for the pretraining distribution, on the ICL task, where it is not optimal. This will be done for benchmarking purposes.

\section{Theoretical results}\label{sec:theory_results}

To answer the questions raised in the introduction, we provide a precise asymptotic analysis of the learning curves of the linear attention module for ICL of linear regression. We then focus on various implications of these equations, and verify through simulations that our insights gained from this theoretical analysis extend to more realistic nonlinear Transformers.

\subsection{Joint asymptotic limit} We have now defined both the structure of the training data as well as the parameters to be optimized. For our theoretical analysis, we consider a joint asymptotic limit in which the input dimension $d$, the pretraining dataset size $\nsamp$, the context length $\cl$, and the number of task vectors in the training set $\ntv$, go to infinity together such that 
\begin{align}\label{eq:scalings}
\frac{\cl}{d} \bydef \cload = \Theta(1), \quad \frac{\ntv}{d} \bydef \tload =  \Theta(1), \quad \frac{\nsamp}{d^2} \bydef \load = \Theta(1).
\end{align}
Identification of these scalings constitutes one of the main results of our paper. 
As we will see, the linear attention module exhibits rich learning phenomena in this limit. 

The intuition for these scaling parameters can be seen as follows. Standard results in linear regression \cite{marchenko1967distribution,bai2010spectral,hastie2022surprises} show that to estimate a $d$-dimensional task vector $w$ from the $\cl$ samples within a context, one needs at least $\cl = \Theta(d)$. The number of unique task vectors that must be seen to estimate the covariance matrix of the true $d$-dimensional task distribution $\mathcal{N}(0,I_d)$ should also scale with $d$, \emph{i.e.} $k = \Theta(d)$.
Finally, we see from \eqref{eq:Gamma} that the number of linear attention parameters to be learned is $\Theta(d^2)$. This suggests that the number of individual contexts the model sees during pretraining should scale similarly, \emph{i.e.}, $n = \Theta(d^2)$.

\subsection{Learning curves for ICL of linear regression by a linear attention module} Our theoretical analysis, explained in detail in the Supplementary Information, leads to asymptotically precise expressions for the generalization errors under the two test distributions being studied. Specifically, our theory predicts that, as $d, \nsamp, \cl, \ntv \to \infty$ in the joint limit given in \eqref{eq:scalings},
\begin{align}
    \eicl(\Gamma^\ast) \longrightarrow \eicl(\tau, \cload, \tload, \nv, \lambda) \qquad \text{almost surely},
\end{align}
and
\begin{align}
    \eidg(\Gamma^\ast) \longrightarrow \eidg(\tau, \cload, \tload, \nv, \lambda) \qquad \text{almost surely},
\end{align}
where $\eicl(\tau, \cload, \tload, \nv, \lambda)$ and $\eidg(\tau, \cload, \tload, \nv, \lambda)$ are two deterministic functions of the parameters $\tau$, $\cload$, $\tload$, $\nv$ and $\lambda$. The exact expressions of these two functions can be found in Section \ref{sec:AsymptoticLimits} of the SI. For simplicity, we only present in what follows the ridgeless limit (\emph{i.e.}, $\lambda \to 0^+$) of the asymptotic generalization errors. \\

\begin{result}[ICL generalization error in the ridgeless limit]\label{res:eg_icl_ridgeless_main} Let
\begin{equation}\label{eq:mu_x_m_main}
    q^\ast \bydef \frac{1+\rho}{\alpha},\quad m^\ast \bydef \mathcal{M}_\kappa\left({q^\ast}\right), \quad \mu^\ast \bydef q^\ast \mathcal{M}_{\tload/\load}(q^\ast),
\end{equation}
where $\mathcal{M}_\kappa(\cdot)$, defined in Section \ref{appendix:Wishart} of the supplementary, is a function related to the Stieltjes transform of the Marchenko-Pastur law. Then
\begin{equation}\label{eq:ICLridgeless}
\begin{aligned}
    \eiclrl &\bydef \lim_{\lambda \to 0^+} \eicl(\load, \cload,\tload, \nv, \lambda) \\
    &= \begin{cases}
    {\frac{\tau(1+q^\ast)}{1-\tau}\left[1-\tau(1-\mu^\ast)^2+\mu^\ast(\rho/q^\ast-1)\right]} {-2\tau(1-\mu^\ast)+(1+\rho)} & \tau < 1\\
    \left(q^\ast+1\right)\left(1 - 2q^\ast m^\ast -(q^\ast)^2 \mathcal{M}'_\tload(q^\ast) + \frac{(\rho + q^\ast  - (q^\ast)^2 m^\ast) m^\ast}{\tau-1}\right) -2 (1-q^\ast m^\ast) + (1+\nv) & \tau > 1
    \end{cases},
\end{aligned}
\end{equation}
where $\mathcal{M}'_\tload(\cdot)$ means the derivative of $\mathcal{M}_\tload(q)$ with respect to $q$. \\
\end{result} 

\begin{result}[IDG generalization error in the ridgeless limit]\label{res:eg_idg_ridgeless_main} Let $q^\ast$, $m^\ast$, and $\mu^\ast$ be the scalars defined in \eqref{eq:mu_x_m_main}. We have
\begin{align}
    &\eidgrl \bydef \lim_{\lambda \to 0^+} \eidg(\load, \cload,\tload, \nv, \lambda) = \\
    &\begin{cases}
    \frac{\tau}{1-\tau}\left(\frac{\rho + q^\ast - 2 q^\ast(1-\tau)({q^\ast}/{\xi^\ast}+1)}{1 - p^\ast(1-\tau)} + \frac{\tau \mu^\ast(q^\ast+\xi^\ast)^2}{q^\ast}\right) & \tau < 1\\
         \frac{\tau}{\tau-1}[\rho + q^\ast (1 - q^\ast m^\ast)] & \tau > 1
    \end{cases},\label{eq:IDGridgeless}
\end{align}
where $\xi^\ast = \frac{(1-\tau)q^\ast}{\tau \mu^\ast}$ and $p^\ast = \big(1 - {\kappa}\big(\frac{\kappa \xi^\ast}{1-\tau}+1\big)^{-2}\big)^{-1}$.
\end{result}

We will discuss various implications of these equations in the next sections.

We derived these results using techniques from random matrix theory. The full setup and technical details are presented in the Supplementary Information. The computations involve analysis of the properties of the finite-sample optimal parameter matrix $\Gamma^*$ (see \eqref{eq:gammastarexplicit}). A key technical component of our analysis involves characterizing the spectral properties of the sample covariance matrix of  $n = \Theta(d^2)$ i.i.d. random vectors in dimension $\Theta(d^2)$. Each of these vectors is constructed as the vectorized version of the matrix in \eqref{eq:H_Z}. Related but simpler versions of this type of random matrices involving the tensor product of i.i.d. random vectors have been studied in recent work \cite{dubova2023universality}. Some of our derivations are based on non-rigorous yet technically plausible heuristics. We support these predictions with numerical simulations and discuss in the Supplementary Information the steps required to achieve a fully rigorous proof. 

\subsection{Sample-wise double-descent}
\begin{figure*}[ht!]
  \centering
  \includegraphics[width=\textwidth]{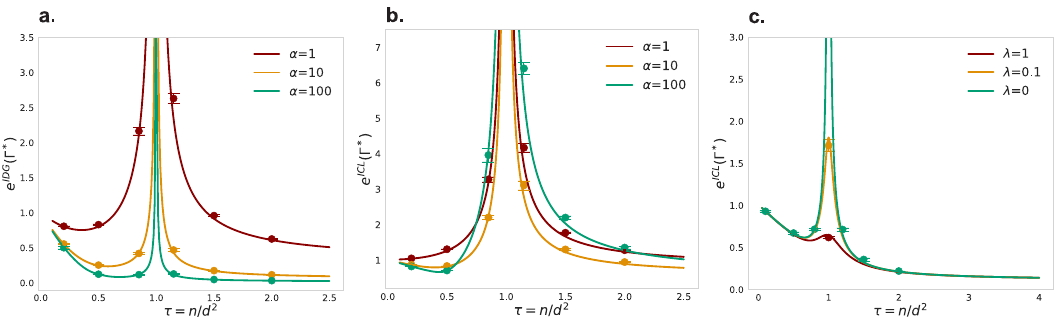}
  \caption{ICL performance as a function of $\load$: theory (solid lines) vs simulations (dots). Plots of  (a.) $\eidgrl(\tau,\alpha,\kappa,\rho)$, (b.) $\eiclrl(\tau,\alpha,\kappa,\rho)$, and (c.) $\eicl(\tau,\alpha,\kappa,\rho,\lambda)$ against $\tau$. Simulated errors calculated by evaluating the corresponding test error on the corresponding optimized $\Gamma^*$. \textit{Parameters:} $d=100$, $\rho = 0.01$ for all; (a.), (b.) $\kappa=0.5$, (c.) $\alpha = 10, \kappa = \infty$. Averages and standard deviations are computed over 10 runs.}
  \label{fig:wide_figure} 
\end{figure*}

How large should $\nsamp$, the pretraining dataset size, be for the linear attention to succesfully learn the task in-context? In \Cref{fig:wide_figure}, we plot our theoretical predictions for the ICL and IDG error as a function of $\tau = \nsamp/d^2$ and verify them with numerical simulations. Our results demonstrate that the quadratic scaling of sample size with input dimensions is indeed an appropriate regime where nontrivial learning phenomena can be observed.

As apparent in \Cref{fig:wide_figure}, we find that the generalization error for both ICL and IDG tasks are not monotonic in the number of samples. In the ridgeless limit, both ICL and IDG errors diverge at $\tau = 1$, with the leading order behavior in the $\tau \uparrow 1$ (respectively $\tau \downarrow 1$) limit given by $\frac {c_1} {1-\tau}$ (respectively $\frac {c_2} {\tau-1}$), where $c_1$ (respectively $c_2$) is a $\tau$-independent constant.  This leads to a ``double-descent'' behavior \cite{belkin2019reconciling,hastie2022surprises} in the number of samples. As in other models exhibiting double-descent \cite{belkin2019reconciling,hastie2022surprises,atanasov2024scaling}, the location of the divergence is at the interpolation threshold: the number of parameters of the model (elements of $\Gamma$) is, to leading order in $d$, equal to $d^2$, which matches the number of pretraining samples at $\load = 1$. Further, we can investigate the effect of ridge regularization on the steepness of the double descent, as illustrated in  \Cref{fig:wide_figure}(c.) for the ICL task. As we would expect from other models exhibiting double-descent \cite{belkin2019reconciling,hastie2022surprises,atanasov2024scaling}, increasing the regularization strength suppresses the peak in error around the interpolation threshold. 

Finally, we note that if we take the limit of $\kappa \to \infty$ and $\alpha \to \infty$ in \Cref{res:eg_icl_ridgeless_main} (in either order), the ICL generalization error in the ridgeless case reduces to the generalization error of simple ridgeless interpolation with isotropic Gaussian covariates in $d^2$ dimensions \cite{hastie2022surprises,atanasov2024scaling}:
\begin{align}\label{eq:linear_reg_limit}
    \lim_{\alpha\to\infty} \lim_{\kappa \to \infty} \eiclrl = \lim_{\kappa \to \infty} \lim_{\alpha\to\infty} \eiclrl = 
    \begin{dcases}
        1 - \tau + \frac{\rho}{1-\tau} & \tau < 1, \\ 
        \frac{\rho\tau}{\tau-1} & \tau>1 .
    \end{dcases}
\end{align}
This result makes sense, given that in this limit the ICL generalization problem reduces to the generalization error of ridge regression in $d^2$ dimensions with covariates formed as the tensor product of i.i.d. Gaussian vectors, which by universality results in \cite{dubova2023universality} should in turn be asymptotically equal to that for isotropic Gaussian covariates \cite{hastie2022surprises}. We note that taking the limit $\alpha \to \infty$ is not strictly necessary for this universality, but doing so simplifies the expressions and makes the double-descent obvious.

\subsection{ICL and IDG error curves can have non-monotonic dependence on context length} 
\begin{figure*}[ht!]
  \centering
  \includegraphics[width=\textwidth]{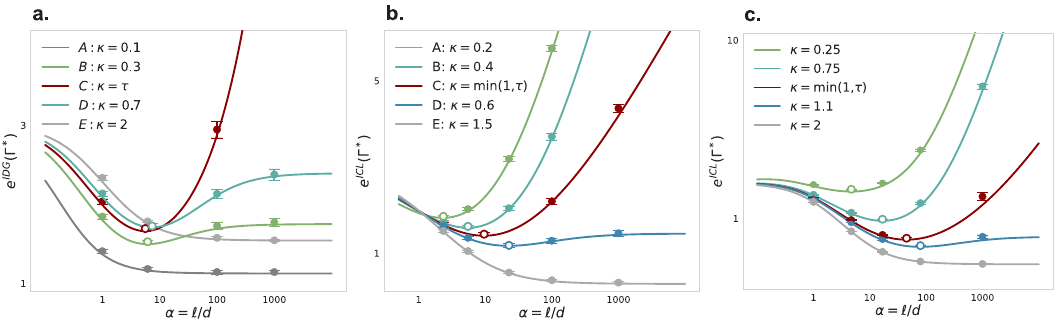}
  \caption{Error curves as functions of $\cload$: theory (solid lines) vs simulations (dots). Plots of {(a.)} $\eidgrl$ and {(b.)}, (c.) $\eiclrl$ against $\alpha$.  Hollowed markers, when plotted, indicate $\alpha^\ast$ value minimizing error if it exists. \textit{Parameters:} $d = 100$; (a.) $\tau=0.5,\rho=0.5$; (b.) $\tau=0.5,\rho=0.1$; (c.) $\tau = 20, \rho=0.5$. Averages and standard deviations are computed over 20-100 runs.}
  \label{fig:linear_alpha}
\end{figure*}

  %

How large should the context length be? In \Cref{fig:linear_alpha}, we plot our theoretical results verified with experiments. We observe that we have correctly identified the regime where in-context and in-weight learning appear: context length indeed scales linearly with input dimensions, as numerical simulations computed using finite $d$ fit the asymptotic error curves.  

An interesting observation is that the IDG and ICL errors do not always monotonically decrease with context length; we see that there are parameter configurations for which the error curves are minimized at some finite $\alpha^\ast$. 

\begin{figure*}[ht!]
  \centering
  \includegraphics[width=\textwidth]{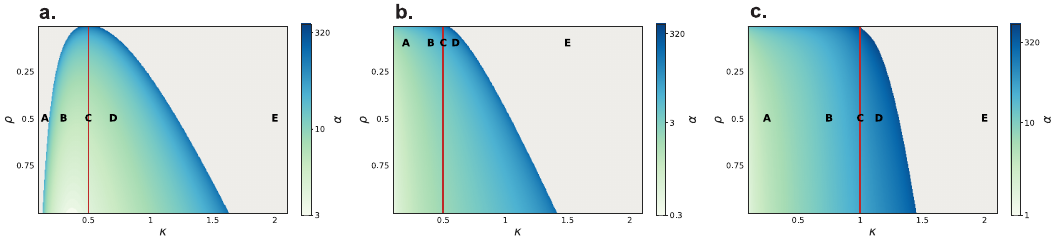}
  \caption{Phase diagram of non-monotonicity in $\alpha$ of ridgeless IDG and ICL error, for fixed $\tau$ against task diversity $\kappa$ and label noise $\rho$. The colormap corresponds to the value $\alpha^\ast$ that minimizes error in $\alpha$, if it exists, at that particular $\kappa, \rho$ pair; grey is plotted if the error curve is monotonic at that $\kappa, \rho$ pair. Figures (a.), (b.), (c.) in this plot correspond to the respective setup in Figure \ref{fig:linear_alpha}, and points A-E correspond to the respective curve in Figure \ref{fig:linear_alpha}. The dashed vertical lines are plotted at $\kappa = \text{min}(\tau,1)$. We know from \eqref{eq:lambdaalphalimit_tsmall} and surrounding discussion that ICL error diverges in $\alpha$ for all $\kappa,\rho$ to the left of this line, and IDG error diverges \textit{on} this line for $\tau < 1$.}
  \label{fig:nonmonotonicity}
\end{figure*}

While the functional form of $\eidgrl$ and $\eiclrl$ is too complex to study their minimizers analytically, we can investigate the non-monotonicity of IDG and ICL error in $\alpha$ numerically. This is done in  \Cref{fig:nonmonotonicity}, where the blue-green colormap corresponds to the value $\alpha^\ast$, if it exists, that minimizes $\eidgrl(\tau,\alpha,\kappa,\rho)$ or $\eiclrl(\tau,\alpha,\kappa,\rho)$ at fixed $\tau,\kappa,\rho$. The grey color indicates cases where the corresponding function is monotonic in $\alpha$. The results reveal that the non-monotonicity of the errors with respect to context length $\alpha$ depends in a nontrivial manner on both the task diversity $\kappa$ and the noise-to-signal ratio $\rho$.

In addition to the non-monotonicity of error curves in $\alpha$, we also notice a divergence in error as $\alpha$ increases for some $\kappa, \tau$ values. To determine when this occurs, we compute the large $\alpha$ limit of the above error curves. For ICL error, we have
\begin{align}\label{eq:lambdaalphalimit_tsmall}
    \lim_{\alpha \to \infty} \eiclrl &= 
    \begin{dcases}
        \infty & \kappa \leq \text{min}(\tau, 1), 
        \\
        1 - \tau + \rho + \frac{\rho \kappa \tau}{(\kappa-\tau)(1-\tau)} & \tau < 1, \kappa > \tau, \\
        \rho + \frac{\rho \kappa}{(\kappa - 1)(\tau-1)} & \tau > 1, \kappa > 1.
    \end{dcases}
\end{align} 
Note the explicit divergence of ICL error for $\kappa \leq \text{min}(\tau, 1)$ (independent of $\rho$). Similar investigation of Result 2 under the limit $\alpha \to \infty$ reveals an explicit divergence of IDG error for $\kappa = \tau$. The divergence in $\alpha$ of ICL error is a peculiarity of the ridgeless limit of errors we have taken in Results 1 and 2, and is further discussed in Section \ref{sec:AsymptoticLimits} of the Supplementary Information.

\subsection{Learning transition with increasing pretraining task diversity} 

\begin{figure*}[ht!]
  \centering
  \includegraphics{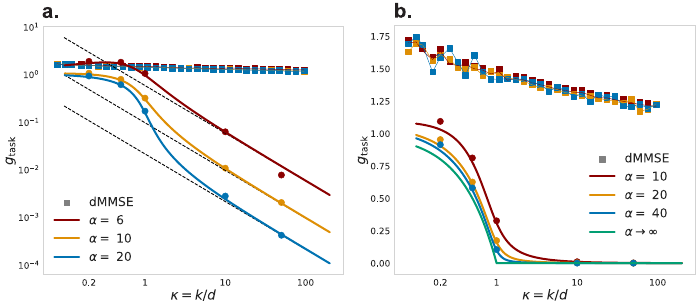}
  \caption{Plots of $g_\text{task}$ against $\kappa$ for linear transformer model vs dMMSE estimator. (a.) has plots on log-log scale highlighting the substantial difference in their rates of convergence towards 0. Dashed and solid lines are theory predictions; dots and squares are numerical simulation. (b.) has plots on log-linear scale with the addition of the green $\alpha\to\infty$ curve given by \eqref{eq:proportionallimit} demonstrating the phase transition at $\kappa = 1$. \textit{Parameters:} Linear transformer: $d = 100 - 140$, $\tau = 0.2\alpha$ throughout, simulations computed over 20 runs; dMMSE: $d=100$, simulations computed over 1000-5000 runs}
  \label{fig:gtask_linear}
\end{figure*}

 It is important to quantify if and when a given model is actually learning in-context, \textit{i.e.}, solving a new regression problem by adapting to the specific structure of the task rather than relying solely on memorized training task vectors. We refer to this phenomenon as \textit{task generalization}. 

We expect pretraining task diversity ($k$) to play a key role in task generalization. To provide some motivation for this, suppose the model was pretrained on only a single task vector ($k=1$). It is reasonable to expect that the model will memorize this task vector and make predictions under the prior that the task vector in the test examples is identical to that particular pretraining task vector. On the other extreme, suppose the model was pretrained by drawing a fresh task vector for every training sequence. Here, the expectation would be that the model achieves task generalization. Indeed, results from previous computational studies on the linear regression task \cite{raventos2023pretraining} suggest that transformers are better in-context learners if they have seen a greater task diversity in the training set, and further the models transition from a task memorization phase to a task generalization phase as task diversity increases. Hence, we study the task generalization behavior of the linear transformer as the task diversity increases and precisely quantify the task diversity scale at which such transition happens.

First, we need to discuss how to measure task generalization. To motivate our solution, we take a closer look at the ridge estimator ($\tv_{\text{Bayes}}^{\text{ridge}}$ as in \cref{eq:ridgeestimator}), which is the Bayes-optimal estimator given the ICL task structure. 
We analytically characterize $e_{\text{ICL}}$ and $e_{\text{IDG}}$ errors for this estimator in SI Section \ref{sec:bayesestimatorerror}, giving $$ e^{\text{ridge}}_\text{ICL} = \rho\left(1+\frac{1}{\alpha}\mathcal{M}_{\alpha}\left(\frac{\rho}{\alpha}\right)\right) = e^{\text{ridge}}_\text{IDG}.$$ 
Note that the performance of the ridge estimator is identical on the ICL and IDG test distributions, meaning that it does not rely at all on memorizing task vectors.

Motivated by these arguments, we propose to measure the task generalization capability of a model by studying the difference between its performance on the ICL test distribution and its performance on the IDG test distribution. Specifically, we consider the quantity $g_\text{task}=e_{\text{ICL}}-e_{\text{IDG}}$ for a given model or estimator. This difference being large implies the model, performing better on training tasks, has not learned the true task distribution and is not generalizing in task. Conversely, a small difference between ICL error and IDG error suggests that the model is leveraging the underlying structure of the task distribution rather than overfitting to and interpolating specific task instances seen in training.


Simulations of $g_\text{task}$ as a function of task diversity, $\kappa = k/d$, are shown in \Cref{fig:gtask_linear} for two inference models: dMMSE estimator, and the linear transformer. The dMMSE estimator is included as a benchmark quantifying the performance of a ``pure memorizer'', as it is the Bayes-optimal estimator that assumes task vectors only seen in the training test. We see that, as the task diversity parameter $\kappa$ increases,  $g_\text{task}$ falls for both estimators but at very different rates. 

 

The $g_\text{task}$ for the dMMSE estimator falls very slowly as a function of $\kappa$. This is expected because the explicit form of $\tv_{\text{Bayes}}^{\text{dMMSE}}$, given in \cref{eq:dmmse}, is that of a kernel smoother employing a Gaussian kernel as the weighting function around each task vector. Known results about this class of estimators show the performance of the dMMSE estimator on the ICL task suffers from the ``curse of dimensionality'' \cite{tsybakov2008introduction, belkinrakhlinTsybakov,zavatone2024nadaraya}, i.e. the required number of training samples $k=d\kappa$ would need to be exponential in $d$ to outperform some given $e_\text{ICL}$ error tolerance. Intuitively, the IDG task distribution $\mathcal{P}_{\text{test}}=\mathcal{P}_{\text{train}} = \text{Unif}\{w_1,\cdots,w_k\}, w_i \sim \mathcal{N}(0,I_d)$ and the ICL task distribution $\mathcal{P}_{\text{test}} = \mathcal{N}(0,I_d)$ are indistinguishable (by appropriate measures) only if $k$ is exponentially large in $d$.  
We present more formal arguments in SI Section \ref{sec:bayesestimatorerror}. 

On the other hand, the $g_\text{task}$ for the linear transformer estimator first behaves similarly to the dMMSE estimator, but starts falling sharply after around $\kappa = 1$. How quickly does $g^\text{LT}_\text{task}$ limit to 0 as $\kappa \to \infty$? By expanding $e_\text{ICL}-e_\text{IDG}$ from \eqref{eq:ICLridgeless} and \eqref{eq:IDGridgeless} in $\kappa$ we have the leading asymptotic behavior given by $$g^\text{LT}_\text{task} = 0 + \frac{1}{\kappa}\begin{cases}
    c_1 & \tau < 1 \\
    c_2 & \tau > 1 \\
\end{cases} + \mathcal{O}\left(\frac{1}{\kappa^2}\right)$$ for constants $c_1,c_2$ dependent on $\alpha,\tau,\rho$, provided in SI Section \ref{sec:bayesestimatorerror}.  This sharp fall signals a transition of the model to the task generalization regime. To further understand the nature of this transition and the role of $\kappa$ in the solution learned by the linear attention mechanism, 
consider the regime where $\tau, \alpha \to \infty$ with $\alpha/\tau \equiv c^\ast$ kept fixed. Under this setting, we have
\begin{equation}\label{eq:proportionallimit}
    \lim_{\substack{\tau \to \infty\\\alpha \to \infty}} g^\text{LT}_\text{task} = \begin{cases}  (1-\kappa)\left(1 + \frac{\rho}{1+\rho}c^\ast\right) \qquad &\kappa < 1\\
    0 &\kappa > 1
    \end{cases}\,.
\end{equation} 
This change in analytical behavior indicates a phase transition at $\kappa = 1$. We see this phenomena in both Figures \Cref{fig:gtask_linear}(a.) and \Cref{fig:gtask_linear}(b.) where the $g_\text{task}$ curve plotted against $\kappa$ becomes nondifferentiable at $\kappa = 1$ for $\alpha\to\infty$.

We conclude that the linear transformer model is a much more efficient task generalizer than the dMMSE estimator which is the Bayes-optimal memorizer, as shown in \Cref{fig:gtask_linear}. The $1/\kappa$ decay in $g^\text{LT}_\text{task}$ vs the much slower decay of $g^\text{dMMSE}_\text{task}$ suggests that the linear transformer \textit{quickly} learns an inference algorithm that generalizes in-context rather than interpolates between training tasks.

\section{Experiments with fully-parameterized nonlinear Transformers}\label{sec:experiments}

\begin{figure*}[t!]
    \centering
    \includegraphics{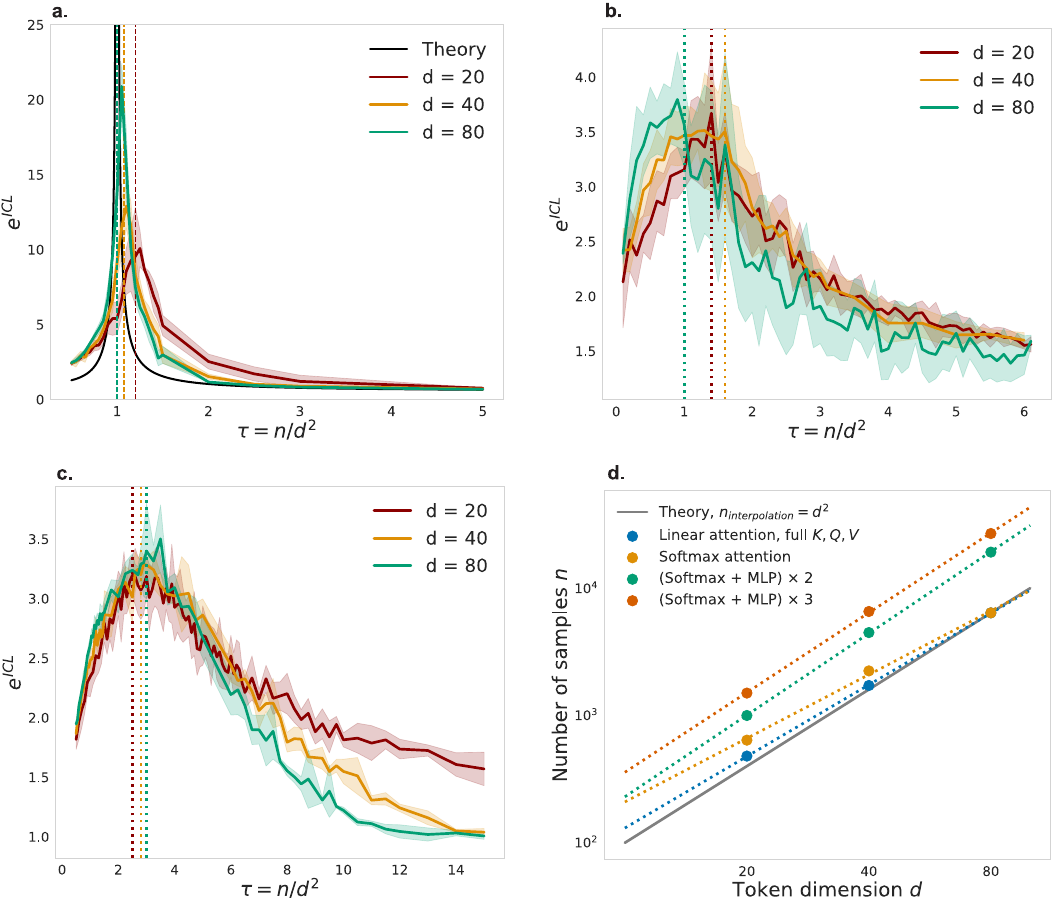}
    \caption{Experimental verification in full linear attention (a.) (full $K,Q,V$ matrices) and nonlinear models (b.) (one block \textit{softmax attention} only), (c.) (two blocks of  \textit{softmax attention and 1-layer MLP}), of both scaling definition for $\tau$ and double descent behavior in $\nsamp$. (a.), (b.), (c.) show error curves against $\tau$ for various architectures, consistent across token dimension $d = 20,40,80$. Some deviations arise near double descent in the linear model (aa), possibly due to parameter inversion or instabilities, and for large $\tau$ in the deep nonlinear model (c.), possibly due to the training procedure. Double-descent phenomena are confirmed: increasing $\nsamp$ will increase error until an interpolation threshold is reached. colored dashed lines indicate experimental interpolation threshold for that architecture and $d$ configuration. (d.) shows that the location of the interpolation threshold occurs for $\nsamp$ proportional to $d^2$, as predicted by the linear theory. Dots are experimental interpolation thresholds for various architectures, and dashed lines are best fit curves correspond to fitting $\log(\nsamp) = a\log(d) +b$, each with $a\approx 2$ (explicitly, $a_{\text{full linear}} = 1.87$, $a_{\text{softmax}} = 1.66$, $a_{\text{2 blocks}} = 2.13$, $a_{\text{3 blocks}} = 2.08$).  Interpolation threshold was computed empirically by searching for location in $\tau$ of sharp increase in value and variance of training error at a fixed number of gradient steps. \textit{Parameters:} $\alpha = 1, \kappa = \infty, \rho = 0.01$. For (a.), (b.), and (c.): variance shown comes from model trained over different samples of pretraining data; lines show averages over 10 runs and shaded region shows standard deviation.}
    \label{fig:nonlineardoubledecent}
\end{figure*}

Given that our theoretical results are derived from a simplified linear attention setting, we aim to determine if these insights can be extended to a nonlinear and deep Transformer models with full $K,Q,V$ parameterization. Even though we would not expect the specific algorithm discussed in \Cref{sec:reduction} to transfer to these nonlinear settings, we will test whether our proposed scalings and main qualitative predictions remain. Specifically, we will investigate the following: (1) whether we have identified the correct scaling regime for non-trivial learning in an in-context learning (ICL) task; (2) whether the full Transformer exhibits a sample-wise double descent, where the location of the peak error scales quadratically with input dimensions as predicted by our theory; and (3) whether the transition from memorization to task generalization occurs. 

\begin{figure}[ht!]
    \centering
        \includegraphics{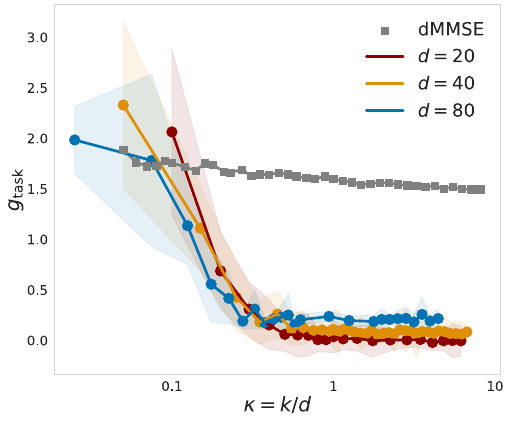}
    \caption{Plot of $g_\text{task}$ against $\kappa$ (loglinear scale) for the nonlinear architecture given in Figure \cref{fig:nonlineardoubledecent}(c.). This demonstrates consistency of $\kappa$ scaling across increasing dimension choices $d = 20,40,80$, as well as a transition in task-generalization measure $g_\text{task}$ near $\kappa = 1$, familiar from the linear theory. \textit{Parameters:} $\tau = 10, \alpha = 1, \rho = 0.01$. Variance shown comes from 10 models trained over different samples of pretraining data.}
    \label{fig:nonlinearbayes}
\end{figure}

Our experiments\footnote{Code to reproduce all experiments is available at \url{https://github.com/Pehlevan-Group/incontext-asymptotics-experiments}.} are done with a standard Transformer architecture, where each sample context initially  takes the form given by \eqref{eq:Zstructure}. The fully-parameterized linear transformer \cref{fig:nonlineardoubledecent}(a.) and the softmax-only transformer \cref{fig:nonlineardoubledecent}(b.) do not use a multilayer perception (MLP) in addition to attention. If MLPs are used (e.g. \cref{fig:nonlineardoubledecent}(c.) and \cref{fig:nonlineardoubledecent}(d.)), the architecture consists of blocks with: (1) a single-head softmax self-attention with $K,Q,V\in\mathbb{R}^{(d+1) \times (d+1)}$ matrices, followed by (2) a two-layer dense MLP with GELU activation and hidden layer of size $d+1$ \cite{vaswani2017attention}. Residual connections are used between the input tokens (padded from dimension $d$ to $d+1$), the pre-MLP output, and the MLP output. We use a variable number of attention+MLP blocks before returning the final logit corresponding to the $(d+1,\cl+1)$th element in the original embedding structure given by \eqref{eq:Zstructure}. The loss function is the mean squared error (MSE) between the predicted label (the output of the model for a given sample $Z$) and the true value $y_{\ell+1}$. We train the model in an offline setting with $n$ total samples $Z_1,\cdots,Z_n$, divided into 10 batches, using the Adam optimizer \cite{kingma2015adam} with a learning rate $10^{-4}$ until the training error converges, typically requiring 10000 epochs\footnote{Note that larger $d$ models are often trained for fewer epochs than smaller $d$ models due to early stopping; that said, whether or not early stopping is used in training does not affect either the alignment of error curves in $d$-scaling nor the qualitative behavior (double descent in $\tau$ and transition in $\kappa$.}. The structure of the pretraining and test distributions exactly follows the setup for the ICL and IDG tasks described in Section \ref{sec:formulation}. 

In \Cref{fig:nonlineardoubledecent}, we plot the average error for a range of fully-parameterized linear (a.) and nonlinear architectures (b. and c.) on the ICL task against the parameter $\load = n/d^2$, which measures the pretraining sample complexity. Notice that the large-$d$ full $K,Q,V$ parameterized linear architecture studied in (a.) follows the general trend of the theory derived on a simplified parameterization. Further, in all figures, plots across different values of $d$ roughly collapse, suggesting again that our scaling relations capture the observed behavior.

Across all architectures, we observe a double descent in $\load$. The peak of this curve occurs at the interpolation threshold, which we identify by tracking the value of $\tau$ at which the training loss becomes non-zero (see figure caption). Our theory predicts that the number of samples $\nsamp$ at the peak, as well as the interpolation threshold, should scale with $d^2$. Indeed, for the fully-parameterized linear transformer in (\cref{fig:nonlineardoubledecent}a.), the double descent occurs very close to $\tau = 1$ for larger $d$, as predicted by the reduced-parameter linear theory. This general quadratic scaling is summarized in (\cref{fig:nonlineardoubledecent}d.) across a range of architectures. These findings suggest that nonlinear Transformers exhibit a similar interpolation transition, with the same quadratic scaling in sample size as seen in the simplified model.

Finally, we can investigate the effects of task diversity on $g_\text{task}$ in a nonlinear model, shown in \Cref{fig:nonlinearbayes}. This provides empirical evidence towards the choice of $\kappa$ scaling, as errors are consistent over a range of token dimension choices. Further, we recover the memorization-to-task-generalization transition in $\kappa$, as observed before in \cite{raventos2023pretraining}. In \Cref{fig:nonlinearbayes}, we see the value of $\kappa$ for which $g_\text{task}$ appears to converge towards 0 is below 1 (as in \eqref{eq:proportionallimit}), suggesting that the more powerful architecture used here may be more ``task-efficient'' than the linear architecture in achieving memorization-to-generalization transition.

\section{Discussion}

In this paper, we computed sharp asymptotics for the in-context learning (ICL) performance in a simplified model of ICL for linear regression using linear attention. This exactly solvable model demonstrates a transition from memorizing pretraining task vectors to generalizing beyond them as the diversity of pretraining tasks increases, echoing empirical findings in full Transformers \cite{raventos2023pretraining}. Additionally, we observe a sample-wise double descent as the amount of pretraining data increases. Our numerical experiments show that full, nonlinear Transformers exhibit qualitatively similar behavior in the scaling regime relevant to our solvable model. Understanding the mechanistic underpinnings of ICL of well-controlled synthetic tasks in solvable models is an important prerequisite to understanding how it emerges from pretraining on natural data \cite{reddy2023mechanistic}.

Our paper falls within a broader program of research that seeks sharp asymptotic characterizations of the performance of machine learning algorithms. This program has a long history in statistical physics \cite{watkin1993rule,engel2001statistical,atanasov2024scaling}, and has in recent years attracted substantial attention in machine learning \cite{hastie2022surprises,gerace2020generalisation,loureiro2021learning,mei2022generalization,hu2022universality,hu2024asymptotics,dhifallah2020precise,atanasov2024scaling, cui2024phase}. For simplicity, we have assumed that the covariates in the in-context regression problem are drawn from an isotropic Gaussian distribution. However, our technical approach could be extended to anisotropic covariates, and, perhaps more interestingly, to featurized linear attention models in which the inputs are passed through some feature map before linear attention is applied \cite{katharopoulos2020transformers,shen2021efficient}. This extension would be possible thanks to an appropriate form of \emph{Gaussian universality}: for certain classes of regression problems, the asymptotic error coincides with that of a model where the true features are replaced with Gaussian features of matched mean and covariance \cite{hastie2022surprises,gerace2020generalisation,loureiro2021learning,mei2022generalization,hu2022universality,hu2024asymptotics,dhifallah2020precise,dubova2023universality,montanari2022universality}. This would allow for a theoretical characterization of ICL for realistic data structure in a closer approximation of full softmax attention, yielding more precise predictions of how performance scales in real Transformers. 

In our analysis, we have assumed that the model is trained to interpolation on a fixed dataset. This allows us to cast our simplified form of linear attention pretraining as a ridge regression problem, which in turn enables our random matrix analysis. In contrast, Transformer-based large language models are usually trained in a nearly-online setting, where each gradient update is estimated using fresh examples with no repeating data \cite{muennighoff2023scaling}. Some of our findings, such as double-descent in the learning curve as a function of the number of pretraining examples, are unlikely to generalize to the fully-online setting. It will be interesting to probe these potential differences in future work. 

Finally, our results have some bearing on the broad question of what architectural features are required for ICL \cite{brown2020language, wei2022emergent, reddy2023mechanistic}. Our work shows that a full Transformer---or indeed even full linear attention---is not required for ICL of linear regression. However, our simplified model retains the structured quadratic pairwise interaction between inputs that is at the heart of the attention mechanism. It is this quadratic interaction that allows the model to solve the ICL regression task, which it does essentially by reversing the data correlation (see \S\ref{sec:reduction}). One would therefore hypothesize that our model is minimal in that further simplifications within this model class would impair its ability to solve this ICL task. In the specific context of regression with isotropic data, a simple point of comparison would be to fix $\Gamma = I_d$, which gives a pretraining-free model that should perform well when the context length is very long. However, this further-reduced model would perform poorly if the covariates of the in-context task are anisotropic. Generally, it would be interesting to investigate when models lacking this precisely-engineered quadratic interaction can learn linear regression in-context \cite{tong2024mlpslearnincontext}, and if they're less sample-efficient than the attention-based models considered here. 

\section*{Acknowledgements}

We thank William L. Tong for helpful discussions regarding numerics. YML was supported by NSF Award CCF-1910410, by the Harvard FAS Dean's Fund for Promising Scholarship, and by a Harvard College Professorship. JAZV and CP were supported by NSF Award DMS-2134157 and NSF CAREER Award IIS-2239780. JAZV is presently supported by a Junior Fellowship from the Harvard Society of Fellows. CP is further supported by a Sloan Research Fellowship. AM acknowledges support from Perimeter Institute, which  is supported in part by the Government of Canada through the Department of Innovation, Science and Economic Development and by the Province of Ontario through the Ministry of Colleges and Universities. This work has been made possible in part by a gift from the Chan Zuckerberg Initiative Foundation to establish the Kempner Institute for the Study of Natural and Artificial Intelligence. This research was supported in part by grants NSF PHY-1748958 and PHY-2309135 to the Kavli Institute for Theoretical Physics (KITP), through the authors' participation in the Fall 2023 program ``Deep Learning from the Perspective of Physics and Neuroscience.''

\textbf{Author Contributions.} Y.M.L. led formal analysis, along with  M.L., J.A.Z-V., A.M., and C.P. M.L. led computational results and verification. C.P. and Y.M.L supervised the project. C.P. conceptualized the project. All authors contributed to key discussions and manuscript drafting.

\bibliography{biblio}

\newpage
\clearpage 
\appendix

\section*{SUPPLEMENTARY INFORMATION}

\noindent Some of the derivations in this document are based on non-rigorous yet technically sound heuristic arguments from random matrix theory. We support these predictions with numerical simulations and discuss the steps required to achieve a fully rigorous proof. All rigorously proven results will be clearly stated as lemmas, propositions, and the like.


\section{Notation}
\label{sec:notation}

\emph{Sets, vectors and matrices:} For each $n \in \N$, $[n] \bydef \set{1, 2, \ldots, n}$. The sphere in $\R^d$ with radius $\sqrt{d}$ is expressed as $\mathcal{S}^{d-1}(\sqrt{d})$. For a vector $v \in \R^{d}$, its $\ell_2$ norm is denoted by $\norm{v}$. For a matrix $A \in \R^{d \times d}$, $\norm{A}_{\mathsf{op}}$ and $\norm{\mA}_\mathsf{F}$ denote the operator (spectral) norm and the Frobenius norm of $\mA$, respectively. Additionally, $\norm{\mA}_\infty \bydef \max_{i,j \in [n]} \abs{A(i,j)}$ denotes the entry-wise $\ell_\infty$ norm. We use $e_1$ to denote the first natural basis vector $(1,0, \ldots, 0)$, and $I$ is an identity matrix. Their dimensions can be inferred from the context. The trace of $A$ is written as $\tr(A)$. 

Our derivations will frequently use the vectorization operation, denoted by $\vecop(\cdot)$. It maps a $d_1 \times d_2$ matrix $A \in \R^{d_1 \times d_2}$ to a vector $v_A = \vecop(A)$ in $\R^{d_1 d_2}$. Note that we shall adopt the \emph{row-major} convention, and thus the rows of $A$ are stacked together to form $v_A$. We also recall the standard identity:
\begin{equation}\label{eq:kronecker_identity}
    \vecop(E_1 E_2 E_3) = (E_1 \otimes E_3^\top) \vecop(E_2),
\end{equation}
where $\otimes$ denotes the matrix Kronecker product, and $E_1, E_2, E_3$ are matrices whose dimensions are compatible for the multiplication operation. For any square matrix $A \in \R^{(L+1)\times (L+1)}$, we introduce the notation
\begin{equation}\label{eq:principal_minor}
    [M]_{\setminus 0} \in \R^{L \times L}
\end{equation}
to denote the principal minor of $M$ after removing its first row and column. 

\emph{Stochastic order notation}: In our analysis, we use a concept of high-probability bounds known as \emph{stochastic domination}. This notion, first introduced in \cite{erdos2013LocalSemicircle,erdos2017Dynamicalapproach}, provides a convenient way to account for low-probability exceptional events where some bounds may not hold.  Consider two families of nonnegative random variables:
\[
X = \big(X^{(d)}(u) : d \in \N, u \in U^{(d)}\big), \quad Y = \big(Y^{(d)}(u) : d \in \N, u \in U^{(d)}\big),
\]
where $U^{(d)}$ is a possibly $d$-dependent parameter set. We say that $X$ is \emph{stochastically dominated} by $Y$, uniformly in $u$, if for every (small) $\varepsilon > 0$ and (large) $D > 0$ we have
\[
\sup_{u \in U^{(d)}} \P[X^{(d)}(u) > d^\varepsilon Y^{(d)}(u)] \le d^{-D}
\]
for sufficiently large $d \ge d_0(\varepsilon, D)$. If $X$ is stochastically dominated by $Y$, uniformly in $u$, we use the notation $X \prec Y$. Moreover, if for some family $X$ we have $\abs{X} \prec Y$, we also write $X = \mathcal{O}_\prec(Y)$. 

We also use the notation $X \simeq Y$ to indicate that two families of random variables $X, Y$ are asymptotically equivalent. Precisely, $X \simeq Y$, if there exists $\varepsilon > 0$ such that for every $D > 0$ we have
\begin{equation}\label{eq:asymp_equiv}
    \P\left[\,\abs{X - Y} > d^{-\varepsilon}\right] \le d^{-D}
\end{equation}
for all sufficiently large $d > d_0(\varepsilon, D)$.

\section{Moment Calculations and Generalization Errors}
\label{sec:moment}
For a given set of parameters $\Gamma$, its generalization error is defined as
\begin{equation}\label{eq:eg_Gamma_def}
    e(\Gamma) = \Eb{\Ptest}{\left(y_{\cl+1} - \inprod{\Gamma, H_Z}\right)^2},
\end{equation}
where $(Z,y_{\cl+1}) \sim \Ptest$ is a new sample drawn from the distribution of the test data set. Recall that $Z$ is the input embedding matrix defined by 
\begin{align}\label{eq:Zstructure_SI}
Z = \left[\begin{array}{ccccc} x_1 & x_2 & \ldots & x_{\cl} & x_{\cl+1} \\ y_1 & y_2 & \ldots & y_{\cl} & 0 \end{array}\right] \in \R^{(d+1)\times(\cl+1)},
\end{align} and $y_{\cl+1}$ denotes the missing value to be predicted. The goal of this section is to  derive an expression for the generalization error $e(\Gamma)$.

Note that the test distribution $\Ptest$ crucially depends on the probability distribution of the task vector $\tv$ used in the linear model $y=w^\top x + \eta$. For the ICL task, we have $\tv \sim \unifsp$, the uniform distribution on the sphere $\sph$. For the IDG task, $\tv$ is sampled uniformly from the set $\set{\tv_1, \tv_2, \ldots, \tv_\ntv}$, where these $\ntv$ vectors are the same as those used in the training data. In what follows, we slightly abuse the notation by writing $\tv \sim \Ptest$ to indicate that $\tv$ is sampled from the task vector distribution associated with $\Ptest$. 

You will note that the distribution $\tv \sim \unifsp$ chosen above does not match the distribution $\tv \sim \mathcal{N}(0,I_d).$ For analytical tractibility and convenience, the remaining calculations will have $\tv \sim \unifsp$ not $\tv \sim \mathcal{N}(0,I_d).$ For large $d$ (in practice, this seems to be any $d>80$), simulations show that these choices are indistinguishable. 

Let $\tv$ be the task vector used in the input matrix $Z$. Throughout the paper, we use $\Eb{\tv}{\cdot}$ to denote the conditional expectation with respect to the randomness in the data vectors $\set{x_i}_{i \in [\cl+1]}$ and the noise $\set{\nrv_i}_{i \in [\cl+1]}$, with the task vector $\tv$ kept fixed. We have the following expressions for the first two \emph{conditional} moments of $(H_Z, y_{\cl+1})$.

\begin{lemma}[Conditional moments]\label{lemma:H_y_moment} Let the task vector $\tv \in \sph$ be fixed. We have
\begin{equation}\label{eq:H_y_first_moment}
    \Eb{\tv}{y_{\cl+1}} = 0, \quad \text{and}\quad \Eb{\tv}{H_Z} = 0.
\end{equation}
Moreover, 
\begin{equation}\label{eq:E_yH}
    \Eb{\tv}{y_{\cl+1} H_Z} = \frac{1}{d} \tv \begin{bmatrix} \tv^\top, & 1 + \rho\end{bmatrix}
\end{equation}
and
\begin{equation}
    \Eb{\tv}{\vecop(H_Z) \vecop(H_Z)^\top} = \frac{(1+\rho)}{d} I_d \otimes \begin{bmatrix}\frac{d}{\cl}I_d + (1+\cl^{-1})(1+\rho)^{-1}\tv \tv^\top & (1+2\cl^{-1}) \tv\\
        (1+2\cl^{-1})\tv^\top & (1+2\cl^{-1})(1+\rho)\end{bmatrix}. \label{eq:E_HH}
\end{equation}
\end{lemma}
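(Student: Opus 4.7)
The strategy is to exploit the rank-one factorization $H_Z = x_{\cl+1}\, r^\top$, where $r \bydef \big[\tfrac{d}{\cl}\sum_{i\le \cl} y_i x_i;\, \tfrac{1}{\cl}\sum_{i\le \cl} y_i^2\big] \in \R^{d+1}$ depends only on the context $\{(x_i,\nrv_i)\}_{i\le \cl}$ and is therefore independent of $(x_{\cl+1},\nrv_{\cl+1})$. Throughout, I condition on the fixed task vector $\tv \in \sph$, using $\norm{\tv}^2 = d$ freely.

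\textbf{First moments and cross-moment.} Since $x_{\cl+1}$ and $\nrv_{\cl+1}$ are centered, $\Eb{\tv}{y_{\cl+1}} = 0$ immediately, and $\Eb{\tv}{H_Z} = \Eb{\tv}{x_{\cl+1}}\Eb{\tv}{r}^\top = 0$ by independence. For the cross-moment I substitute $y_{\cl+1} = x_{\cl+1}^\top \tv + \nrv_{\cl+1}$, factor out $r$ by independence, and apply $\Eb{\tv}{x_{\cl+1} x_{\cl+1}^\top} = I_d/d$ to get $\Eb{\tv}{y_{\cl+1} H_Z} = \tfrac{1}{d}\,\tv\,\Eb{\tv}{r}^\top$. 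The two entries of $\Eb{\tv}{r}$ then follow from $\Eb{\tv}{y_1 x_1} = \tv/d$ and $\Eb{\tv}{y_1^2} = \norm{\tv}^2/d + \nv = 1+\nv$, yielding \eqref{eq:E_yH}.

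\textbf{Covariance.} Under the row-major convention the rank-one structure gives $\vecop(H_Z) = x_{\cl+1} \otimes r$, so by independence
\begin{equation*}
\Eb{\tv}{\vecop(H_Z)\vecop(H_Z)^\top} = \Eb{\tv}{x_{\cl+1} x_{\cl+1}^\top} \otimes \Eb{\tv}{rr^\top} = \tfrac{1}{d}\, I_d \otimes \Eb{\tv}{rr^\top},
\end{equation*}
reducing the task to computing the three blocks of $\Eb{\tv}{rr^\top}$. Each block is a double sum over $i,j\in[\cl]$; the $i\ne j$ terms factor through the first moments already in hand, while the $\cl$ diagonal terms need fourth-order Gaussian moments. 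These I handle via the Isserlis (Wick) identities $\E[(\tilde x^\top \tv)^2 \tilde x \tilde x^\top] = \norm{\tv}^2 I + 2\tv\tv^\top$ and $\E[(\tilde x^\top \tv)^3 \tilde x] = 3\norm{\tv}^2 \tv$ for $\tilde x \sim \mathcal{N}(0,I_d)$, together with $\E[y_1^4] = 3(1+\nv)^2$. Aggregating the $\cl$ diagonal and $\cl(\cl-1)$ off-diagonal contributions and then substituting $\norm{\tv}^2 = d$ produces precisely the $(1+\cl^{-1})$ and $(1+2\cl^{-1})$ factors of \eqref{eq:E_HH}, with the overall $(1+\nv)/d$ prefactor arising from the $1/d$ in the covariance of $x_{\cl+1}$.

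\textbf{Main obstacle.} There is no probabilistic subtlety: the argument is an elementary Gaussian moment calculation under independence, and the Kronecker identity \eqref{eq:kronecker_identity} delivers the tensor structure for free. The difficulty is bookkeeping---consistently rescaling between $x_i \sim \mathcal{N}(0,I_d/d)$ and its standardization, orienting the row-major Kronecker identity so that $\vecop(x r^\top) = x \otimes r$ (rather than $r \otimes x$), and checking that the diagonal versus off-diagonal sums combine into exactly the advertised $(1+\cl^{-1})$ and $(1+2\cl^{-1})$ coefficients.
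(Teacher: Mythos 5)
Your proposal is correct, and I verified that the stated Wick identities and the diagonal/off-diagonal bookkeeping do reproduce the $(1+\cl^{-1})$ and $(1+2\cl^{-1})$ coefficients exactly. Your route differs from the paper's in one substantive way. Both proofs hinge on the same rank-one factorization $H_Z = x_{\cl+1} r^\top$ with $x_{\cl+1}$ independent of the context, so that $\vecop(H_Z) = x_{\cl+1}\otimes r$ and the second moment tensorizes as $\E[x_{\cl+1}x_{\cl+1}^\top]\otimes\E[rr^\top]$. But where you compute $\E[rr^\top]$ head-on by splitting the double sums over $i,j\in[\cl]$ into diagonal and off-diagonal parts and invoking Isserlis for the fourth-order terms, the paper first passes to the equivalent statistical representation of Lemma~\ref{lemma:H_tilde_rep}, rotating by Householder matrices $M_\tv$ and $M_{v_\epsilon}$ so that $H_Z \overset{(d)}{=} (d/\cl)\,z_a z_b^\top$ with $z_a, z_b$ independent and $\E[z_a z_a^\top]=I_d$; the blocks of $\E[z_b z_b^\top]$ then reduce to a handful of low-dimensional Gaussian moments. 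Your argument is more elementary and self-contained for this lemma; the paper's reparametrization costs more setup here but is amortized, since the same representation is reused in the resolvent analysis of \Cref{sec:deterministic_equivalent}. The only caution for a full write-out of your version is the one you already flag: keeping the $x_i\sim\mathcal{N}(0,I_d/d)$ scaling and the row-major orientation of $\vecop$ straight, both of which you handle correctly.
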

\begin{proof}
    Using the equivalent representations in \eqref{eq:H_rep} and \eqref{eq:y_target_rep}, it is straightforward to verify the estimates of the first (conditional) moments in \eqref{eq:H_y_first_moment}. To show \eqref{eq:E_yH}, we note that 
    \begin{equation}
        H_Z = (d/\cl) z_a z_b^\top,
    \end{equation}
    where
    \begin{equation}
        z_a = M_\tv \begin{bmatrix}s \\ u\end{bmatrix} \qquad \text{and}\qquad z_b = \begin{bmatrix}
            M_\tv h \\(\theta_\tv a/\sqrt{d} + \theta_\epsilon)^2 / \sqrt{d} + \theta_q^2 /\sqrt{d}
        \end{bmatrix}.
    \end{equation}
Using the representation in \eqref{eq:y_target_rep}, we have
\begin{equation}
    \Eb{\tv}{y_{\cl+1} H_Z} = (d/\cl) \Eb{\tv}{y_{\cl+1} z_a} \Eb{\tv}{z_b^\top}.
\end{equation}
Computing the expectations $\Eb{\tv}{y_{\cl+1} z_a}$ and $\Eb{\tv}{z_b^\top}$ then gives us \eqref{eq:E_yH}. Next, we show \eqref{eq:E_HH}. Since $z_a$ and $z_b$ are \emph{independent},
    \begin{equation}
        \Ea{\vecop(H_Z) \vecop(H_Z)^\top} = (d/\cl)^2 \, \Ea{z_a z_a^\top} \otimes \Ea{z_b z_b^\top}. \label{eq:E_HH0}
    \end{equation}
    The first expectation on the right-hand side is easy to compute. Since $M_\tv$ is an orthonormal matrix, 
    \begin{equation}
    \Eb{\tv}{z_a z_a^\top} = I_d  \label{eq:Ezaza}   
    \end{equation}
    To obtain the second expectation on the right-hand side of the above expression, we can first verify that
    \begin{equation}
        \Eb{\tv}{M_\tv h h^\top M_\tv }= \frac{\cl}{d^2} \left[(1 + \rho) I_d + \frac{ (\cl+1)}{d} \tv \tv^\top\right]. \label{eq:Ezbzb_1}
    \end{equation}
    Moreover, 
    \begin{equation}
        \Eb{\tv}{M_\tv h \left(( a/\sqrt{d} + \theta_\epsilon)^2 / \sqrt{d} + \theta_q^2 /\sqrt{d}\right)} = \frac{\cl(\cl+2)(1 + \rho)}{d^3}\tv \label{eq:Ezbzb_2}
    \end{equation}
    and
    \begin{equation}
        \Eb{\tv}{\left(( a/\sqrt{d} + \theta_\epsilon)^2 / \sqrt{d} + \theta_q^2 /\sqrt{d}\right)^2} = \frac{\cl(\cl+2)(1 + \rho)^2}{d^3}. \label{eq:Ezbzb_3}
    \end{equation}
Combining \eqref{eq:Ezbzb_1}, \eqref{eq:Ezbzb_2}, and \eqref{eq:Ezbzb_3}, we have
    \begin{equation}
        \Ea{z_b z_b^\top} = \frac{(\cl/d)^2(1+\rho)}{d}\begin{bmatrix}\frac{d}{\cl}I_d + (1+\cl^{-1})(1+\rho)^{-1}\tv\tv^\top & (1+2\cl^{-1}) \tv\\
        (1+2\cl^{-1})\tv^\top & (1+2\cl^{-1})(1+\rho)\end{bmatrix}. \label{eq:Ezbzb}
    \end{equation}
Substituting \eqref{eq:Ezaza} and \eqref{eq:Ezbzb}  into \eqref{eq:E_HH}, we reach the formula in \eqref{eq:E_HH}.
\end{proof}

\begin{proposition}[Generalization error]\label{prop:gen_err} 
For a given weight matrix $\Gamma$, the generalization error of the linear transformer is
\begin{equation}
    \begin{aligned}
    e(\Gamma) &= \frac{1+\rho}{d}\tr\left(\Gamma\begin{bmatrix}\frac{d}{\cl}I_d + (1+\cl^{-1})(1+\rho)^{-1}\Rte & (1+2\cl^{-1}) \bte\\
        (1+2\cl^{-1})\bte^\top & (1+2\cl^{-1})(1+\rho)\end{bmatrix}\Gamma^\top\right)\\
        &\qquad\qquad - \frac{2}{d} \tr\left( \Gamma \begin{bmatrix} \Rte\\(1+\rho)\bte^\top\end{bmatrix}\right) + 1 + \rho,
\end{aligned} \label{eq:population_loss}
\end{equation}
where
\begin{equation}
    \bte \bydef \Eb{\tv \sim \Ptest}{\tv} \quad\text{and}\quad \Rte \bydef \Eb{\tv \sim \Ptest}{\tv \tv^\top}. \label{eq:bte_Rte}
\end{equation}
\end{proposition}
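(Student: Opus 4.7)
The plan is to expand the squared loss, use the tower property to split the expectation into an inner conditional expectation given the task vector $\tv$ and an outer expectation over $\tv \sim \Ptest$, and then apply \Cref{lemma:H_y_moment} together with standard Kronecker-product identities to collect everything into the desired trace formula.

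First I would write
\begin{equation*}
    e(\Gamma) = \Eb{\Ptest}{y_{\cl+1}^2} - 2\,\Eb{\Ptest}{y_{\cl+1}\inprod{\Gamma, H_Z}} + \Eb{\Ptest}{\inprod{\Gamma, H_Z}^2}.
\end{equation*}
For the first term, condition on $\tv$: since $y_{\cl+1} = \inprod{\tv, x_{\cl+1}} + \nrv_{\cl+1}$ with $x_{\cl+1} \sim \mathcal{N}(0,I_d/d)$ independent of $\nrv_{\cl+1} \sim \mathcal{N}(0,\nv)$, one gets $\Eb{\tv}{y_{\cl+1}^2} = \norm{\tv}^2/d + \nv = 1 + \nv$, using $\norm{\tv}^2 = d$ for $\tv \in \sph$. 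Averaging over $\tv \sim \Ptest$ leaves the constant $(1+\nv)$, which accounts for the final constant term in \eqref{eq:population_loss}.

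For the cross term, I would pull the trace out of the expectation via $\inprod{\Gamma, H_Z} = \tr(\Gamma^\top H_Z)$, apply the conditional identity \eqref{eq:E_yH} to obtain $\Eb{\tv}{y_{\cl+1} H_Z} = \frac{1}{d}\begin{bmatrix}\tv\tv^\top & (1+\nv)\tv\end{bmatrix}$, and then average over $\tv$ using the definitions $\bte = \Eb{\Ptest}{\tv}$ and $\Rte = \Eb{\Ptest}{\tv\tv^\top}$. This yields the middle trace term in \eqref{eq:population_loss}.

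For the quadratic term, I would use $\inprod{\Gamma, H_Z} = \vecop(\Gamma)^\top \vecop(H_Z)$, so that
\begin{equation*}
    \Eb{\Ptest}{\inprod{\Gamma, H_Z}^2} = \vecop(\Gamma)^\top \Eb{\Ptest}{\vecop(H_Z)\vecop(H_Z)^\top} \vecop(\Gamma).
\end{equation*}
Applying \eqref{eq:E_HH} conditional on $\tv$ and averaging over $\tv$ produces the Kronecker-structured matrix $\tfrac{1+\nv}{d} I_d \otimes K_{\mathrm{test}}$, where $K_{\mathrm{test}}$ is the $(d+1)\times(d+1)$ block matrix appearing in \eqref{eq:population_loss} with $\Rte$ and $\bte$ replacing $\tv\tv^\top$ and $\tv$. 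Using the row-major identity \eqref{eq:kronecker_identity}, one then verifies that $\vecop(\Gamma)^\top (I_d \otimes K_{\mathrm{test}}) \vecop(\Gamma) = \tr(\Gamma K_{\mathrm{test}} \Gamma^\top)$. Combining the three pieces gives \eqref{eq:population_loss}.

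The main point requiring care -- modest in this case -- is the bookkeeping around the row-major vectorization convention when converting the Kronecker-structured quadratic form to a trace, since a column-major reading would transpose $\Gamma$ in the trace. Otherwise the derivation is a direct substitution of the conditional moments from \Cref{lemma:H_y_moment} into the expanded squared loss.
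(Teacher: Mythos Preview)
Your proposal is correct and follows essentially the same approach as the paper: expand the squared loss, invoke the conditional moments from \Cref{lemma:H_y_moment}, average over $\tv \sim \Ptest$ to introduce $\Rte$ and $\bte$, and use the Kronecker identity \eqref{eq:kronecker_identity} to convert the vectorized quadratic form into the trace expression. The paper's proof is slightly terser about the tower property but otherwise identical in structure and content.
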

\begin{remark}
        We use $\tv \sim \Ptest$ to indicate that $\tv$ is sampled from the task vector distribution associated with $\Ptest$. For the ICL task, $w \sim \unifsp$. It is straightforward to check that, in this case, 
    \begin{equation}
        \textrm{(ICL)}:\qquad\bte = 0 \quad \text{and} \quad \Rte = I_d. \label{eq:bR_ICL}
    \end{equation}
    For the IDG task, we have 
    \begin{equation}
        \text{(IDG)}:\qquad\bte = \frac{1}{\ntv}\sum_i \tv_i \quad \text{and} \quad \Rte =  \frac{1}{\ntv}\sum_{i \in [\ntv]} \tv_i \tv_i^\top, \label{eq:bR_IDG}
    \end{equation}
    where $\set{\tv_i}_{i \in [\ntv]}$ is the set of fixed task vectors used in the training data.
\end{remark}
\begin{proof}
   Recall the definition of the generalization error in \eqref{eq:eg_Gamma_def}. We start by writing
    \begin{align}
        e( \Gamma) = \vecop(\Gamma)^\top \Ea{\vecop( H_Z) \vecop( H_Z)^\top} \vecop(\Gamma) - 2 \vecop(\Gamma)^\top \vecop(\Ea{y_{N+1} H_Z}) + \Ea{y_{\cl+1}^2},
    \end{align}
    where $H_Z$ is a matrix in the form of \begin{align}
    H_Z \bydef  x_{\cl+1} \begin{bmatrix} \frac{d}{\cl} \sum_{i\leq \cl} y_i x_i^{\top} & \frac{1}{\cl}\sum_{i\leq \cl}y_i^2\end{bmatrix} \in \mathbb{R}^{d\times (d+1)}, \label{eq:H_Z_SI}
\end{align} and $H_Z$ is independent of $\Gamma$. Since $y_{\cl+1} = x_{\cl+1}^\top \tv + \epsilon$, with $\epsilon \sim \mathcal{N}(0, \rho)$ denoting the noise, it is straightforward to check that
\begin{equation}
    \Ea{y_{\cl+1}^2} = 1 + \rho.
\end{equation}
Using the moment estimate \eqref{eq:E_HH} in Lemma~\ref{lemma:H_y_moment} and the identity \eqref{eq:kronecker_identity}, we have
    \begin{equation}
     \begin{aligned}
         &\vecop(\Gamma)^\top \Ea{\vecop( H_Z) \vecop( H_Z)^\top} \vecop(\Gamma)\\
         &\qquad\qquad= \frac{1+\rho}{d}\tr\left(\Gamma\begin{bmatrix}\frac{d}{\cl}I_d + (1+\cl^{-1})(1+\rho)^{-1}\Rte & (1+2\cl^{-1}) \bte\\
        (1+2\cl^{-1})\bte^\top & (1+2\cl^{-1})(1+\rho)\end{bmatrix}\Gamma^\top\right).
     \end{aligned}
     \end{equation}
     Moreover, by \eqref{eq:E_yH},
     \begin{equation}
         \vecop(\Gamma)^\top \vecop\left(\Ea{y_{\cl+1} H_Z}\right) = \frac{1}{d} \tr\left( \Gamma \begin{bmatrix} \Rte\\(1+\rho)\bte^\top\end{bmatrix}\right).
     \end{equation}
\end{proof}

\begin{corollary}\label{cor:eg}
For a given set of parameters $\Gamma$, its generalization error can be written as
\begin{equation}
    e(\Gamma) = \frac 1 d\tr\left(\Gamma \Bte\Gamma^\top\right) - \frac{2}{d} \tr\left( \Gamma \Ate^\top\right) + (1 + \rho) + \mathcal{E}, \label{eq:eg_E_A}
\end{equation}
where
\begin{equation}
        \Ate \bydef \begin{bmatrix}
            \Rte & (1+\rho)\bte
        \end{bmatrix}, \label{eq:Ate}
    \end{equation}
    \begin{equation}
        \Bte \bydef \begin{bmatrix}\frac{1}{\cload}(1+\rho)I_d + \Rte & (1+\rho) \bte\\
        (1+\rho)\bte^\top & (1+\rho)^2\end{bmatrix}, \label{eq:Bte}
    \end{equation}
and $\Rte, \bte$ are as defined in \eqref{eq:bte_Rte}. Moreover, $\mathcal{E}$ denotes an ``error'' term such that
\begin{equation}
    \abs{\mathcal{E}} \le \frac{C_{\cload, \nv}\max\set{\norm{\Rte}_\mathsf{op}, \norm{\bte}, 1}\left(\norm{\Gamma}_\mathsf{F}^2/d\right)}{d}, \label{eq:ge_noise}
\end{equation}
where $C_{\cload, \nv}$ is some constant that only depends on $\cload$ and $\nv$.
\end{corollary}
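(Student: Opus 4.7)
The starting point is the exact expression for $e(\Gamma)$ already derived in Proposition~\ref{prop:gen_err}; the corollary is essentially an algebraic rearrangement, so the plan is to identify where each piece lands in the proposed form and then absorb the finite-$\cl$ discrepancies into the single error term $\mathcal{E}$. No further probabilistic analysis is needed; the result is deterministic given $\Rte$ and $\bte$.

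First I would handle the linear-in-$\Gamma$ term. From the definition $\Ate \bydef [\Rte,\ (1+\rho)\bte]$ one reads off
\begin{equation*}
\Ate^\top = \begin{bmatrix} \Rte \\ (1+\rho)\bte^\top \end{bmatrix},
\end{equation*}
which is precisely the inner matrix multiplying $\Gamma$ in the cross term of Proposition~\ref{prop:gen_err}. Thus this term is already exactly $-\frac{2}{d}\tr(\Gamma \Ate^\top)$ and contributes nothing to $\mathcal{E}$.

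For the quadratic-in-$\Gamma$ term, I would absorb the scalar prefactor $(1+\rho)$ into the inner matrix and use $\cl = \cload\, d$ to rewrite $d/\cl$ as $1/\cload$. Let $M_1$ denote the inner matrix from Proposition~\ref{prop:gen_err}. A direct block-by-block subtraction gives
\begin{equation*}
\Delta \bydef (1+\rho)\, M_1 - \Bte = \cl^{-1}\begin{bmatrix} \Rte & 2(1+\rho)\bte \\ 2(1+\rho)\bte^\top & 2(1+\rho)^2 \end{bmatrix},
\end{equation*}
since each off-diagonal correction in $M_1$ is of the form $(1+2\cl^{-1}) - 1 = 2\cl^{-1}$ and the $R_{\mathrm{test}}$ block contributes $(1+\cl^{-1}) - 1 = \cl^{-1}$. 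Setting $\mathcal{E} \bydef \frac{1}{d}\tr(\Gamma \Delta \Gamma^\top)$ then yields the target decomposition $e(\Gamma) = \frac{1}{d}\tr(\Gamma \Bte \Gamma^\top) - \frac{2}{d}\tr(\Gamma \Ate^\top) + (1+\rho) + \mathcal{E}$ exactly.

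Finally, to establish the bound on $\mathcal{E}$, I would combine the standard inequality $\abs{\tr(\Gamma \Delta \Gamma^\top)} \le \norm{\Delta}_{\mathsf{op}} \norm{\Gamma}_\mathsf{F}^2$ with a block-matrix operator-norm estimate on $\Delta$. Applying the triangle inequality to the four blocks of $\Delta$ gives $\norm{\Delta}_\mathsf{op} \le C_\nv\, \cl^{-1} \max\set{\norm{\Rte}_\mathsf{op}, \norm{\bte}, 1}$ for a constant $C_\nv$ depending only on $\nv$; substituting $\cl = \cload d$ and dividing by $d$ produces the claimed bound~\eqref{eq:ge_noise} with $C_{\cload,\nv} = C_\nv/\cload$. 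The entire argument is routine bookkeeping, and the only mildly non-trivial step is the block-matrix operator-norm bound; this can alternatively be handled by a Schur-complement estimate if a sharper constant is desired. There is no genuine analytical obstacle, since the substantive content of the corollary is simply the identification of $\Ate$ and $\Bte$ as the natural asymptotic objects, with the finite-$\cl$ corrections contributing a relative $O(1/d)$ error.
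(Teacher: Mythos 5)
Your proposal is correct and follows essentially the same route as the paper: start from Proposition~\ref{prop:gen_err}, note the linear term already equals $-\tfrac{2}{d}\tr(\Gamma\Ate^\top)$, define $\mathcal{E}=\tfrac1d\tr(\Gamma\Delta\Gamma^\top)$ with $\Delta$ the difference between the exact quadratic-form matrix and $\Bte$, and bound it via $\abs{\tr(\Gamma\Delta\Gamma^\top)}\le\norm{\Delta}_{\mathsf{op}}\norm{\Gamma}_\mathsf{F}^2$ together with $\norm{\Delta}_{\mathsf{op}}=\mathcal{O}(\cl^{-1})=\mathcal{O}(d^{-1})$. Your explicit computation of $\Delta$ and the blockwise operator-norm estimate are just slightly more detailed versions of steps the paper asserts as straightforward.
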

\begin{proof}
Let
    \begin{equation}
        \Delta = \begin{bmatrix}\frac{d}{\cl}(1+\rho)I_d + (1+\cl^{-1})\Rte & (1+2\cl^{-1})(1+\rho) \bte\\
        (1+2\cl^{-1})(1+\rho)\bte^\top & (1+2\cl^{-1})(1+\rho)^2\end{bmatrix} - \Bte.
    \end{equation}
It is straightforward to check that
\begin{align}
    \mathcal{E} &= \frac 1 d \tr\left(\Gamma \Delta \Gamma^\top \right)\\
    &= \frac 1 d \vecop(\Gamma)^\top (I_d \otimes \Delta) \vecop(\Gamma)\\
    &\le \norm{\Delta}_\mathsf{op} \frac{\norm{\Gamma}_\mathsf{F}^2}{d}.
\end{align}
The bound in \eqref{eq:ge_noise} follows from the estimate that $\norm{\Delta}_\mathsf{op} \le C_{\cload, \nv}\max\set{\norm{\Rte}_\mathsf{op}, \norm{\bte}, 1} /d$.
\end{proof}

\begin{remark}\label{rem:eg}
    Consider the optimal weight matrix $\Gamma^\ast$ obtained by solving the ridge regression problem in
    \begin{align}\label{eq:ridge_LT_SI}
    \Gamma^\ast &= \underset{\Gamma}{\arg\,\min}\, \sum_{\mu =1 }^{\nsamp} \left(y_{\cl+1}^{\mu} - \inprod{\Gamma, H_{Z^{\mu}}} \right)^2 +   \frac{\nsamp}{d}\lambda \norm{\Gamma}_\mathrm{F}^2\,. 
\end{align} Since $\Gamma^\ast$ is the optimal solution of \eqref{eq:ridge_LT_SI}, we must have
    \begin{equation}
        \frac{n}{d}\lambda\norm{\Gamma^\ast}_\mathsf{F}^2 \le \sum_{\mu \in [n]} (y_{\cl+1}^\mu)^2,
    \end{equation}
    where the right-hand side is the value of the objective function of \eqref{eq:ridge_LT_SI} when we choose $\Gamma$ to be the all-zero matrix. It follows that
    \begin{equation}\label{eq:Gamma_norm}
        \frac{\norm{\Gamma^\ast}_\mathsf{F}^2}{d} \le  \frac{\sum_{\mu \in [n]} (y_{\cl+1}^\mu)^2}{\lambda \, n}.
    \end{equation}
    By the law of large numbers, $\frac{\sum_{\mu \in [n]} y_\mu^2}{n} \to  1 + \rho$ as $n \to \infty$. Thus, $\norm{\Gamma^\ast}_\mathsf{F}^2/{d}$ is asymptotically bounded by the constant $(1+\nv)/\lambda$. Furthermore, it is easy to check that $\norm{\Rte}_\mathsf{op} = \mathcal{O}(1)$ and $\norm{\bte} = \mathcal{O}(1)$ for both ICL [see \eqref{eq:bR_ICL}] and IDG [see \eqref{eq:bR_IDG}]. It then follows from Corollary~\ref{cor:eg} that the generalization error associated with the optimal parameers $\Gamma^\ast$ is asymptotically determined by the first three terms on the right-hand side of \eqref{eq:eg_E_A}.
\end{remark}

\section{Analysis of Ridge Regression: Extended Resolvent Matrices}\label{sec:extendedresolvent}

We see from Corollary~\ref{cor:eg} and Remark~\ref{rem:eg} that the two key quantities in determining the generalization error $e(\Gamma^\ast)$ are
\begin{equation}\label{eq:GA_GBG}
        \frac 1 d \tr(\Gamma^\ast \Ate^\top) \qquad \text{and}\qquad \frac 1 d \tr( \Gamma^\ast \Bte (\Gamma^\ast)^\top),
    \end{equation}
where $\Ate$ and $\Bte$ are the matrices defined in \eqref{eq:Ate} and \eqref{eq:Bte}, respectively. In this section, we show that the two quantities in \eqref{eq:GA_GBG} can be obtained by studying a parameterized family of extended resolvent matrices.

To start, we observe that the ridge regression problem in \eqref{eq:ridge_LT_SI} admits the following closed-form solution:
\begin{equation}\label{eq:ridge_sol}
    \vecop(\Gamma^\ast) = G \left(\textstyle\sum_{\mu \in [n]} y_\mu \vecop({H_\mu})\right)/d,
\end{equation}
where $G$ is a resolvent matrix defined as
\begin{equation}\label{eq:resolvent}
    G = \left(\textstyle \sum_{\mu \in [n]} \vecop({H_\mu}) \vecop({H_\mu})^\top/d+ \load\lambda I\right)^{-1}.
\end{equation}
For our later analysis of the generalization error, we need to consider a more general, ``parameterized'' version of $G$, defined as
\begin{equation}\label{eq:resolvent_pM}
    G(\ps) = \left(\textstyle \sum_{\mu \in [n]} \vecop({H_\mu}) \vecop({H_\mu})^\top/d+ \ps\pM + \load\lambda I\right)^{-1},
\end{equation}
where $\pM \in \R^{(d^2+d) \times (d^2+d)}$ is a symmetric positive-semidefinite matrix and $\ps$ is a nonnegative scalar. The original resolvent $G$ in \eqref{eq:resolvent} is a special case, corresponding to $\ps = 0$. 

The objects in \eqref{eq:ridge_sol} and \eqref{eq:resolvent_pM} are the submatrices of an \emph{extended} resolvent matrix, which we construct as follows. For each $\mu \in [n]$, let
\begin{equation}\label{eq:z_mu}
    z_\mu = \begin{bmatrix}y_\mu / d \\ \vecop(H_\mu)/\sqrt d\end{bmatrix}
\end{equation}
be an $(d^2+d+1)$-dimensional vector. Let \begin{equation}\label{eq:Ke}
    \pM_e = \begin{bmatrix} 0 & \\
    & \pM\end{bmatrix},
\end{equation}
where $\pM$ is the $(d^2+d)\times (d^2+d)$ matrix in \eqref{eq:resolvent_pM}.
Define an extended resolvent matrix
\begin{equation}\label{eq:Ge}
    G_e(\ps) = \frac{1}{\sum_{\mu \in [n]} z_\mu z_\mu^\top + \ps \pM_e + \load\lambda I}.
\end{equation}
By block-matrix inversion, it is straightforward to check  that
\begin{equation}\label{eq:Ge_block}
    G_e(\ps) = \begin{bmatrix}
        c(\ps) & -c(\ps)q^\top(\ps) \\
        -c(\ps)q(\ps)  & G(\ps) + c(\ps) q(\ps)q^\top(\ps)
    \end{bmatrix},
\end{equation}
where
\begin{equation}\label{eq:q_Gamma}
    q(\ps) \bydef \frac{1}{d^{3/2}} G(\ps) \left(\textstyle\sum_{\mu \in [n]} y_\mu \vecop({H_\mu})\right)
\end{equation}
is a vector in $\R^{d(d+1)}$, and $c(\ps)$ is a scalar such that
\begin{equation}\label{eq:c_lam_ps}
    \frac{1}{c(\ps)} = \frac{1}{d^2}\sum_{\mu \in [n]} y_\mu^2 + \load\lambda - \frac{1}{d^3} \sum_{\mu, \nu\in [n]} y_\mu y_\nu \vecop(H_\mu)^\top G(\ps) \vecop(H_\nu).
\end{equation}
By comparing \eqref{eq:q_Gamma} with \eqref{eq:ridge_sol}, we see that
\begin{equation}\label{eq:Gamma_q_0}
    \vecop(\Gamma^\ast) = \sqrt{d}\,q(0).
\end{equation}
Moreover, as shown in the following lemma, the two key quantities in \eqref{eq:GA_GBG} can also be obtained from the extended resolvent $G_e(\ps)$.

\begin{lemma}\label{lemma:Gamma_AB}
    For any matrix $A \in \R^{d \times (d+1)}$,
    \begin{equation}\label{eq:GA_Ge}
        \frac 1 d \tr(\Gamma^\ast A^\top) = \frac{-1}{c(0)\sqrt{d}} \begin{bmatrix}
            0 & \vecop(A)^T
        \end{bmatrix} G_e(0) e_1,
    \end{equation}
where $e_1$ denotes the first natural basis vector in $\R^{d^2+d+1}$. Moreover, for any symmetric and positive semidefinite matrix $B \in \R^{(d+1) \times (d+1)}$, if we set 
\begin{equation}\label{eq:pM_B}
    \pM = I_d \otimes B
\end{equation}
in \eqref{eq:Ke}, then
\begin{equation}\label{eq:GBG_Ge}
    \frac 1 d \tr( \Gamma^\ast B (\Gamma^\ast)^\top) = \frac{\diff}{\diff\ps} \left(\frac{1}{c(\ps)}\right) \biggr|_{\ps = 0}.
\end{equation}
\end{lemma}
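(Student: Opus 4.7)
Both identities follow directly by unpacking the block structure of the extended resolvent in \eqref{eq:Ge_block} and using the identification $\vecop(\Gamma^\ast) = \sqrt{d}\,q(0)$ from \eqref{eq:Gamma_q_0}, with essentially no random matrix input needed at this stage. I expect no real obstacle — the work is purely algebraic bookkeeping involving the row-major $\vecop$ convention and the Kronecker identity \eqref{eq:kronecker_identity}.

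For \eqref{eq:GA_Ge}, the plan is to read off the first column of $G_e(0)$ from \eqref{eq:Ge_block}, which gives $G_e(0) e_1 = \bigl[\,c(0),\ -c(0)\,q(0)^\top\,\bigr]^\top$. Then
\begin{equation*}
    \begin{bmatrix} 0 & \vecop(A)^\top \end{bmatrix} G_e(0) e_1 = -c(0)\,\vecop(A)^\top q(0),
\end{equation*}
and using $q(0) = \vecop(\Gamma^\ast)/\sqrt{d}$ together with the standard identity $\vecop(A)^\top \vecop(\Gamma^\ast) = \tr(\Gamma^\ast A^\top)$ immediately yields \eqref{eq:GA_Ge} after dividing by $-c(0)\sqrt{d}$.

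For \eqref{eq:GBG_Ge}, I will differentiate \eqref{eq:c_lam_ps} in $\ps$: the first two terms are $\ps$-independent, and the resolvent identity $\tfrac{\diff}{\diff \ps} G(\ps) = -G(\ps)\,\pM\,G(\ps)$ gives
\begin{equation*}
    \frac{\diff}{\diff\ps}\!\left(\frac{1}{c(\ps)}\right) = \frac{1}{d^3} \sum_{\mu,\nu \in [n]} y_\mu y_\nu \vecop(H_\mu)^\top G(\ps)\,\pM\,G(\ps) \vecop(H_\nu).
\end{equation*}
Evaluating at $\ps = 0$ and recognizing the sum $\frac{1}{d^{3/2}} G(0)\sum_\mu y_\mu \vecop(H_\mu) = q(0)$ from \eqref{eq:q_Gamma} reduces the right-hand side to $q(0)^\top \pM\, q(0)$. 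Substituting $q(0) = \vecop(\Gamma^\ast)/\sqrt{d}$ and $\pM = I_d \otimes B$, and applying \eqref{eq:kronecker_identity} with $B = B^\top$ to rewrite $(I_d \otimes B)\vecop(\Gamma^\ast) = \vecop(\Gamma^\ast B)$, converts the quadratic form into $\tfrac{1}{d}\tr(\Gamma^\ast B (\Gamma^\ast)^\top)$, as required.

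The only minor subtlety will be to keep the row-major convention straight when applying \eqref{eq:kronecker_identity} so that the matrix $B$ ends up sandwiched between $\Gamma^\ast$ and $(\Gamma^\ast)^\top$ in the correct order; beyond this, the proof is a short computation.
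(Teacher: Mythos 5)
Your proposal is correct and follows exactly the paper's own argument: identity \eqref{eq:GA_Ge} is read off from the first column of the block form \eqref{eq:Ge_block} together with \eqref{eq:Gamma_q_0}, and \eqref{eq:GBG_Ge} comes from differentiating \eqref{eq:c_lam_ps} via the resolvent identity, recognizing $q(\ps)^\top \pM\, q(\ps)$, and applying the row-major Kronecker identity with $B = B^\top$. No gaps; you have simply spelled out the steps the paper labels as immediate.
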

\begin{proof}
    The identity \eqref{eq:GA_Ge} follows immediately from the block form of $G_e(\ps)$ in \eqref{eq:Ge_block} and the observation in \eqref{eq:Gamma_q_0}. To show \eqref{eq:GBG_Ge}, we take the derivative of $1/c(\ps)$ with respect to $\ps$. From \eqref{eq:c_lam_ps}, and using the identity
    \begin{equation}
        \frac{\diff}{\diff \ps} G(\ps) = -G(\ps) \pM G(\ps),
    \end{equation}
    we have
    \begin{align}
\frac{\diff}{\diff \ps} \left(\frac{1}{c(\ps)}\right) &= \frac 1 {d^3}  \sum_{\mu, \nu\in [n]} y_\mu y_\nu \vecop(H_\mu)^\top G(\ps) \pM G(\ps) \vecop(H_\nu)\\
&= q^\top(\ps) \pM q(\ps).\label{eq:c_partial_derivative}
\end{align}
Thus, by \eqref{eq:Gamma_q_0},
\begin{align}
    \frac{\diff}{\diff\ps} \left(\frac{1}{c(\ps)}\right) \biggr|_{\ps = 0} &= \frac 1 d \left(\vecop(\Gamma^\ast)\right)^\top \pM \vecop(\Gamma^\ast)\\
    &= \frac 1 d \left(\vecop(\Gamma^\ast)\right)^\top \left(I_d \otimes B\right) \vecop(\Gamma^\ast).
\end{align}
Applying the identity in \eqref{eq:kronecker_identity} to the right-hand side of the above equation, we reach \eqref{eq:GBG_Ge}.
\end{proof}

\begin{remark}
    To lighten the notation, we will often write $G_e(\ps)$ [resp. $G(\ps)$] as $G_e$ [resp. $G$], leaving their dependence on the parameter $\ps$ implicit.
\end{remark}

\begin{remark}
    In light of \eqref{eq:pM_B} and \eqref{eq:GBG_Ge}, we will always choose
    \begin{equation}\label{eq:pM_Bte}
        \pM = I_d \otimes \Bte,
    \end{equation}
where $\Bte$ is the matrix defined in \eqref{eq:Bte}.
\end{remark}

\section{An Asymptotic Equivalent of the Extended Resolvent Matrix}
\label{sec:deterministic_equivalent}

In this section, we derive an asymptotic equivalent of the extended resolvent $G_e$ defined in \eqref{eq:Ge}. From this equivalent version, we can then obtain the asymptotic limits of the right-hand sides of \eqref{eq:GA_Ge} and \eqref{eq:GBG_Ge}. Our analysis relies on non-rigorous but technically sound heuristic arguments from random matrix theory. Therefore, we refer to our theoretical predictions as \emph{results} rather than propositions.

Recall that there are $\ntv$ unique task vectors $\set{\tv_i}_{i \in [\ntv]}$ in the training set. Let 
\begin{equation}\label{eq:btr_Rtr}
    \btr \bydef \frac 1 \ntv \sum_{i \in [\ntv]} \tv_i \quad \text{and} \quad \Rtr \bydef \frac 1 \ntv \sum_{i \in [\ntv]} \tv_i \tv_i^\top 
\end{equation}
denote the empirical mean and correlation matrix of these $\ntv$ regression vectors, respectively. Define
\begin{equation}\label{eq:Atr}
        \Atr \bydef \begin{bmatrix}\Rtr & (1+\rho)\btr\end{bmatrix}.
    \end{equation}
and
\begin{equation}\label{eq:Etr}
    \Etr \bydef \begin{bmatrix} \frac{(1+\rho)}{\alpha}I_d +  \Rtr & (1+\rho)\btr\\
    (1+\rho)\btr^\top & (1+\rho)^2\end{bmatrix}.
\end{equation}

\begin{definition}\label{def:equivalent}
    Consider the extended resolvent $G_e(\ps)$ in \eqref{eq:Ge}, with $\pM_e$ chosen in the forms of \eqref{eq:Ke} and \eqref{eq:pM_Bte}. Let $\widetilde G_e$ be another matrix of the same size as $G_e(\ps)$. We say that $\widetilde G_e$ and $G_e(\ps)$ are asymptotically equivalent, if the following conditions hold.
\begin{enumerate}
    \item[(1)] For any two deterministic and unit-norm vectors $u, v \in \R^{d^2+d+1}$, 
    \begin{equation}\label{eq:equiv_uv}
        u^\top G_e(\ps) v \simeq u^\top \widetilde G_e v,
    \end{equation}
    where $\simeq$ is the asymptotic equivalent notation defined in \eqref{eq:asymp_equiv}.
    \item[(2)] Let $\Atr = \begin{bmatrix} \Rtr & (1+\nv) \btr\end{bmatrix}$. For any deterministic, unit-norm vector $v \in \R^{d^2+d+1}$,
    \begin{equation}\label{eq:equiv_Av}
        \frac{1}{\sqrt{d}}\begin{bmatrix} 0 &\vecop(\Atr)^\top\end{bmatrix} G_e(\ps) v \simeq \frac{1}{\sqrt{d}}\begin{bmatrix} 0 &\vecop(\Atr)^\top\end{bmatrix} \widetilde G_e v.
    \end{equation}
    \item[(3)] Recall the notation introduced in \eqref{eq:principal_minor}. We have
    \begin{equation}\label{eq:equiv_trace}
        \frac{1}{d^2} \tr\left(\left[G_e(\ps)\right]_{\setminus 0} \cdot \left[I \otimes \Etr\right]\right) = \frac{1}{d^2} \tr\left(\left[\widetilde G_e\right]_{\setminus 0} \cdot \left[I \otimes \Etr\right]\right) + \mathcal{O}_\prec(d^{-1/2}),
    \end{equation}
    where $\left[G_e(\ps)\right]_{\setminus 0}$ and $\left[\mathcal{G}_e(\ps)\right]_{\setminus 0}$ denote the principal minors of $G_e(\ps)$ and $\mathcal{G}_e(\ps)$, respectively.
\end{enumerate}
\end{definition}

\begin{result}\label{res:equivalent}
Let $\chi_\ps$ denote the unique positive solution to the equation
\begin{equation}\label{eq:chi_Etr}
    \chi_\ps = \frac{1}{d}\tr\left[\Big(\frac{\tau }{1+\chi_\ps} \Etr + \ps \Bte + \lambda \tau  I_d\Big)^{-1} \Etr\right],
\end{equation}
where $\Bte$ is the positive-semidefinite matrix in \eqref{eq:Bte}, with $\bte, \Rte$ chosen accroding to \eqref{eq:bR_ICL} or \eqref{eq:bR_IDG}. The extended resolvent $G_e(\ps)$ in \eqref{eq:Ge} is asymptotically equivalent to
\begin{equation}\label{eq:Ge_equiv_ps}
    \mathcal{G}_e(\ps) \bydef \left(\frac{\load}{1+\chi_\ps}  \begin{bmatrix}
    1+\rho & \frac{1}{\sqrt d} \vecop\left(\begin{bmatrix}\Rtr &  (1+\rho)\btr\end{bmatrix}\right)^\top\\
    \frac{1}{\sqrt d} \vecop\left(\begin{bmatrix}\Rtr &  (1+\rho)\btr\end{bmatrix}\right) & I_d \otimes \Etr
    \end{bmatrix} + \ps \pM_e + \load\lambda I\right)^{-1},
\end{equation}
in the sense of Definition~\ref{def:equivalent}. In the above expression, $\pM_e$ is the matrix in \eqref{eq:Ke} with $\pM = I_d \otimes \Bte$.
\end{result}

In what follows, we present the steps in reaching the asymptotic equivalent $\mathcal{G}_e(\ps)$ given in \eqref{eq:Ge_equiv_ps}. To start, let $G_e^{[\mu]}$ to denote a ``leave-one-out'' version of $G_e$, defined as
\begin{equation}
    G_e^{[\mu]} = \frac{1}{\sum_{\nu \neq \mu} z_\nu z_\nu^\top + \ps \pM_e + \load\lambda I}.
\end{equation}
By \eqref{eq:Ge}, we have
\begin{equation}
    G_e \left(\textstyle\sum_{\mu \in [n]} z_\mu z_\mu^\top + \ps \pM_e + \load\lambda I\right) = I.
\end{equation}
Applying the Woodbury matrix identity then gives us
\begin{equation}\label{eq:Ge_loo_id}
    \sum_{\mu \in [n]} \frac{1}{1 + z_\mu^\top G_e^{[\mu]} z_\mu} G_e^{[\mu]} z_\mu z_\mu^\top + G_e  (\ps \pM_e + \load\lambda I)= I.
\end{equation}
To proceed, we study the quadratic form $z_\mu^\top G_e^{[\mu]} z_\mu$. Let $\tv_\mu$ denotes the task vector associated with $z_\mu$. Conditioned on $\tv_\mu$ and $G_e^\mu$, the quadratic form $z_\mu^\top G_e^{[\mu]} z_\mu$ concentrates around its \emph{conditional expectation} with respect to the remaining randomness in $z_\mu$. Specifically, 
\begin{equation}
    z_\mu^\top G_e^{[\mu]} z_\mu = \chi^\mu(\tv_\mu) + \mathcal{O}_\prec(d^{-1/2}),
\end{equation}
where
\begin{equation}\label{eq:zmu_quadratic}
    \chi^\mu(\tv_\mu) \bydef \frac{1}{d^2} \tr\left(\left[G_e^\mu\right]_{\setminus 0} \cdot \left[I \otimes E(\tv_\mu)\right]\right),
\end{equation}
and 
\begin{equation}\label{eq:Etv}
    E(\tv) \bydef  \begin{bmatrix}\frac{1+\nv}{\cload}I_d + \tv\tv^\top & (1+\nv) \tv\\
        (1+\nv)\tv^\top & (1+\nv)^2\end{bmatrix}.
\end{equation}

Substituting $z_\mu^\top G_e^{[\mu]} z_\mu$ in \eqref{eq:Ge_loo_id} by $\chi^\mu(\tv_\mu)$, we get
\begin{equation}\label{eq:Ge_loo_id_quadratic}
    \sum_{\mu \in [n]} \frac{1}{1 + \chi^\mu(\tv_\mu)} G_e^{[\mu]} z_\mu z_\mu^\top + G_e  (\ps \pM_e + \load\lambda I)= I + \Delta_1,
\end{equation}
where
\begin{equation}\label{eq:Delta1}
    \Delta_1 \bydef \sum_{\mu \in [n]} \frac{z_\mu^\top G_e^{[\mu]} z_\mu-\chi^\mu(\tv_\mu)}{(1 + \chi^\mu(\tv_\mu))(1+z_\mu^\top G_e^{[\mu]} z_\mu)} G_e^{[\mu]} z_\mu z_\mu^\top
\end{equation}
is a matrix that captures the approximation error of the above substitution.

Next, we replace $z_\mu z_\mu^\top$ on the left-hand side of \eqref{eq:Ge_loo_id_quadratic} by its \emph{conditional} expectation $\Eb{\tv_\mu}{z_\mu z_\mu^\top}$, conditioned on the task vector $\tv_\mu$. This allows us to rewrite \eqref{eq:Ge_loo_id_quadratic} as
\begin{equation}\label{eq:Ge_loo_id_expectation0}
    \sum_{\mu \in [n]} \frac{1}{1 + \chi^\mu(\tv_\mu)} G_e^{[\mu]} \Eb{\tv_\mu}{z_\mu z_\mu^\top} + G_e  (\ps \pM_e + \load\lambda I)= I + \Delta_1 + \Delta_2,
\end{equation}
where
\begin{equation}
    \Delta_2 \bydef \sum_{\mu \in [n]} \frac{1}{1 + \chi^\mu(\tv_\mu)} G_e^{[\mu]} \left(\Eb{\tv_\mu}{z_\mu z_\mu^\top} - z_\mu z_\mu^\top \right)
\end{equation}
captures the corresponding approximation error. Recall the definition of $z_\mu$ in \eqref{eq:z_mu}. Using the moment estimates in Lemma~\ref{lemma:H_y_moment}, we have
\begin{equation}\label{eq:Ezz_gen}
    \Eb{\tv_\mu}{z_\mu z_\mu^\top} = \frac{1}{d^2} \begin{bmatrix}
    1+\rho & \frac{1}{\sqrt d} \tv_\mu^\top \otimes \begin{bmatrix} \tv_\mu^\top & 1+\rho\end{bmatrix}\\
    \frac{1}{\sqrt d} \tv_\mu \otimes \begin{bmatrix}\tv_\mu \\ 1+\rho\end{bmatrix} & I_d \otimes E(\tv_\mu)
    \end{bmatrix} + \frac{1}{d^2}\begin{bmatrix}0 & \\
    & I_d \otimes \mathcal{E}_\mu
    \end{bmatrix},
\end{equation}
where $E(\tv_\mu)$ is the matrix defined in \eqref{eq:Etv} and 
\begin{equation}
    \mathcal{E}_\mu = \frac{1}{\cl}\begin{bmatrix}
    \tv_\mu\tv_\mu^\top & 2(1+\nv) \tv_\mu\\
    2(1+\nv) \tv_\mu^\top & 2 (1+\nv)^2
    \end{bmatrix}.
\end{equation}
Replacing the conditional expectation $\Eb{\tv_\mu}{z_\mu z_\mu^\top}$ in \eqref{eq:Ge_loo_id_expectation0} by the main (i.e. the first) term on the right-hand side of \eqref{eq:Ezz_gen}, we can transform \eqref{eq:Ge_loo_id_expectation0} to
\begin{equation}\label{eq:Ge_loo_id_expectation}
    \frac{\load}{n}\sum_{\mu \in [n]} \frac{1}{1 + \chi^\mu(\tv_\mu)} G_e^{[\mu]} \begin{bmatrix}
    1+\rho & \frac{1}{\sqrt d} \tv_\mu^\top \otimes \begin{bmatrix} \tv_\mu^\top & 1+\rho\end{bmatrix}\\
    \frac{1}{\sqrt d} \tv_\mu \otimes \begin{bmatrix}\tv_\mu \\ 1+\rho\end{bmatrix} & I_d \otimes E(\tv_\mu)
    \end{bmatrix} + G_e  (\ps \pM_e + \load\lambda I)= I + \Delta_1 + \Delta_2 + \Delta_3,
\end{equation}
where we recall $\load = n/d^2$, and we use $\Delta_3$ to capture the approximation error associated with $\mathcal{E}_\mu$.

Next, we replace the ``leave-one-out'' terms $G_e^\mu$ and $\chi^\mu(\tv_\mu)$ in \eqref{eq:Ge_loo_id_expectation} by their ``full'' versions. Specifically, we replace $G_e^\mu$ by $G_e$, and $\chi^\mu(\tv_\mu)$ by
\begin{equation}\label{eq:zmu_quadratic_full}
    \chi(\tv_\mu) \bydef \frac{1}{d^2} \tr\left(\left[G_e\right]_{\setminus 0} \cdot \left[I \otimes E(\tv_\mu)\right]\right).
\end{equation}
It is important to note the difference between \eqref{eq:zmu_quadratic} and \eqref{eq:zmu_quadratic_full}: the former uses $G_e^\mu$ and the latter $G_e$. After these replacements and using $\Delta_4$ to capture the approximation errors, we have
\begin{equation}\label{eq:Ge_equiv_0}
    G_e\left(\frac{\load}{n}\sum_{\mu \in [n]} \frac{1}{1 + \chi(\tv_\mu)}  \begin{bmatrix}
    1+\rho & \frac{1}{\sqrt d} \tv_\mu^\top \otimes \begin{bmatrix} \tv_\mu^\top & 1+\rho\end{bmatrix}\\
    \frac{1}{\sqrt d} \tv_\mu \otimes \begin{bmatrix}\tv_\mu \\ 1+\rho\end{bmatrix} & I_d \otimes E(\tv_\mu)
    \end{bmatrix} + \ps \pM_e + \load\lambda I\right)= I + \sum_{j \le 4} \Delta_j.
\end{equation}

Recall that there are $\ntv$ unique task vectors $\set{\tv_i}_{1 \le i \le \ntv}$ in the training set consisting of $n$ input samples. Each sample is associated with one of these task vectors, sampled uniformly from the set $\set{\tv_i}_{1 \le i \le \ntv}$. In our analysis, we shall assume that $\ntv$ divides $n$ and that each unique task vector is associated with exactly $n/\ntv$ input samples. (We note that this assumption merely serves to simplify the notation. The asymptotic characterization of the random matrix $G_e$ remains the same even without this assumption.) Observe that there are only $\ntv$ unique terms in the sum on the left-hand side of \eqref{eq:Ge_equiv_0}. Thus,
\begin{equation}\label{eq:Ge_equiv_1}
    G_e\left(\frac{\load}{\ntv}\sum_{i \in [\ntv]} \frac{1}{1 + \chi(\tv_i)}  \begin{bmatrix}
    1+\rho & \frac{1}{\sqrt d} \tv_i^\top \otimes \begin{bmatrix} \tv_i^\top & 1+\rho\end{bmatrix}\\
    \frac{1}{\sqrt d} \tv_i \otimes \begin{bmatrix}\tv_i \\ 1+\rho\end{bmatrix} & I_d \otimes E(\tv_i)
    \end{bmatrix} + \ps \pM_e + \load\lambda I\right)= I + \sum_{j \le 4} \Delta_j.
\end{equation}

So far, we have been treating the $\ntv$ task vectors $\set{\tv_i}_{i \in [\ntv]}$ as fixed vectors, only using the randomness in the input samples that are associated with the data vectors $\set{x_i^\mu}$. To further simplify our asymptotic characterization, we take advantage of the fact that $\set{\tv_i}_{i \in [\ntv]}$ are independently sampled from $\unifsp$. To that end, we can first show that $\chi(\tv_i)$ in \eqref{eq:zmu_quadratic_full} concentrates around its expectation. Specifically, 
\begin{equation}
    \chi(\tv_i) = \Ea{\frac{1}{d^2} \tr\left(\left[G_e\right]_{\setminus 0} \cdot \left[I \otimes E(\tv_i)\right]\right)} + \mathcal{O}_\prec(d^{-1/2}).
\end{equation}
By symmetry, we must have
\begin{equation}
    \Ea{\frac{1}{d^2} \tr\left(\left[G_e\right]_{\setminus 0} \cdot \left[I \otimes E(\tv_{i})\right]\right)} = \Ea{\frac{1}{d^2} \tr\left(\left[G_e\right]_{\setminus 0} \cdot \left[I \otimes E(\tv_{j})\right]\right)}
\end{equation}
for any $1 \le i < j \le \ntv$. It follows that $
    \abs{\chi(\tv_{i}) - \chi(\tv_{j})} = \mathcal{O}_\prec(d^{-1/2})$, and thus, by a union bound,
\begin{equation}\label{eq:chi_k_chi_n}
    \max_{i \in [\ntv]}\,\abs{\chi(\tv_{k_1}) - \widehat\chi_\mathrm{ave}} = \mathcal{O}_\prec(d^{-1/2}),
\end{equation}
where
\begin{equation}\label{eq:chi_n}
    \widehat\chi_\mathrm{ave} \bydef \frac{1}{\ntv} \sum_{i \in [\ntv]} \chi(\tv_i).
\end{equation}
 Upon substituting \eqref{eq:zmu_quadratic_full} into \eqref{eq:chi_n}, it is straightforward to verify the following characterization of $\widehat\chi_\mathrm{ave}$:
\begin{equation}\label{eq:chi_ave}
    \widehat\chi_\mathrm{ave} = \frac{1}{d^2} \tr\left(\left[G_e\right]_{\setminus 0} \cdot \left[I \otimes \Etr\right]\right).
\end{equation}

The estimate in \eqref{eq:chi_k_chi_n} prompts us to replace the terms $\chi(\tv_i)$ in the right-hand side of \eqref{eq:Ge_equiv_1} by the common value $\widehat\chi_\mathrm{ave}$. As before, we introduce a matrix $\Delta_5$ to capture the approximation error associated with this step. Using the newly introduced notation $\Etr, \btr$ and $\Rtr$ in \eqref{eq:Etr} and \eqref{eq:btr_Rtr}, we can then simplify \eqref{eq:Ge_equiv_1} as
\begin{equation}\label{eq:Ge_equiv_2}
\begin{aligned}
    &G_e \left(\frac{\load}{1+\widehat\chi_\mathrm{ave}}  \begin{bmatrix}
    1+\rho & \frac{1}{\sqrt d} \vecop\left(\begin{bmatrix}\Rtr &  (1+\rho)\btr\end{bmatrix}\right)^\top\\
    \frac{1}{\sqrt d} \vecop\left(\begin{bmatrix}\Rtr &  (1+\rho)\btr\end{bmatrix}\right) & I_d \otimes \Etr
    \end{bmatrix} + \ps \pM_e + \load\lambda I\right)\\
    &= I + \sum_{1 \le j \le 5} \Delta_j.
\end{aligned}
\end{equation}
Define
\begin{equation}\label{eq:G_equiv_hat}
    \widehat{\mathcal{G}}_e(\ps) \bydef \left(\frac{\load}{1+\widehat\chi_\mathrm{ave}}  \begin{bmatrix}
    1+\rho & \frac{1}{\sqrt d} \vecop\left(\begin{bmatrix}\Rtr &  (1+\rho)\btr\end{bmatrix}\right)^\top\\
    \frac{1}{\sqrt d} \vecop\left(\begin{bmatrix}\Rtr &  (1+\rho)\btr\end{bmatrix}\right) & I_d \otimes \Etr
    \end{bmatrix} + \ps \pM_e + \load\lambda I\right)^{-1}.
\end{equation}
Then
\begin{equation}\label{eq:Ge_Gequiv_diff}
    G_e = \widehat{\mathcal{G}}_e(\ps) + \underbrace{ \widehat{\mathcal{G}}_e(\ps) \left(\Delta_1 + \Delta_2 + \Delta_3 + \Delta_4 + \Delta_5\right)}_{\text{approximation errors}}.
\end{equation}

\begin{remark}
We claim that $\widehat{\mathcal{G}}_e$ is asymptotically equivalent to $G_e$, in the sense of Definition~\ref{def:equivalent}. Given \eqref{eq:Ge_Gequiv_diff}, proving this claim requires showing that, for $j = 1, 2, \ldots, 5$,
\begin{subequations}
    \begin{equation}
        u^\top \left(\widehat{\mathcal{G}}_e(\ps) \Delta_j\right) v \simeq 0,
    \end{equation}
    \begin{equation}
        \frac{1}{\sqrt{d}}\begin{bmatrix} 0 &\vecop(\Atr)^\top\end{bmatrix} \left(\widehat{\mathcal{G}}_e(\ps) \Delta_j\right) v  \simeq 0,
    \end{equation}
and
\begin{equation}
    \frac{1}{d^2} \tr\left(\left[\widehat{\mathcal{G}}_e(\ps) \Delta_j\right]_{\setminus 0} \cdot \left[I \otimes \Etr\right]\right) \simeq 0,
\end{equation}
\end{subequations}
for any deterministic and unit-norm vectors $u, v$ and for $\Atr = \begin{bmatrix} \Rtr & (1+\nv) \btr\end{bmatrix}$.
\end{remark}

We note the equivalent matrix $\widehat{\mathcal{G}}_e(\ps)$ still involves one scalar $\widehat{\chi}_\mathrm{ave}$ that depends on the original resolvent $G_e(\ps)$. Next, we show that $\widehat{\chi}_\mathrm{ave}$ can be replaced by $\chi_\ps$, the unique positive solution to \eqref{eq:chi_Etr}. To that end, we recall the characterization in \eqref{eq:chi_ave}. Using the claim that $G_e(\ps)$ and $\widehat{\mathcal{G}}_e(\ps)$ are asymptotically equivalent (in particular, in the sense of \eqref{eq:equiv_trace}), we have 
\begin{equation}\label{eq:chi_ave_Ge_hat}
  \widehat{\chi}_\mathrm{ave} \simeq   \frac{1}{d^2} \tr\left(\left[\widehat{\mathcal{G}}_e(\ps) \right]_{\setminus 0} \cdot \left[I \otimes \Etr\right]\right).
\end{equation}
To compute the first term on the right-hand side of the above estimate, we directly invert the block matrix $\widehat{\mathcal{G}}_e(\ps)$ in \eqref{eq:G_equiv_hat}. Recall that $\pM_e$ is chosen in the forms of \eqref{eq:Ke} and \eqref{eq:pM_B}. It is then straightforward to verify that
\begin{equation}\label{eq:Ge_hat_block}
    \widehat{\mathcal{G}}_e = \begin{bmatrix}
        \bar{c} & -\bar{c}\, {\bar q}^\top\\
        -\bar{c} \, \bar{q} & I \otimes F_E(\widehat{\chi}_\mathrm{ave})+ \bar c\, \bar{q} {\bar q}^\top
    \end{bmatrix},
\end{equation}
where $F_E(\chi)$ is a matrix valued function such that
\begin{equation}\label{eq:F_E}
    F_E(\chi) = \Big(\frac{\tau  }{1+\chi} \Etr + \ps B + \lambda \tau I_{d+1}\Big)^{-1},
\end{equation}
\begin{equation}
    \bar q = \frac{\tau}{(1+\widehat{\chi}_\mathrm{ave})\sqrt{d}}\vecop\left(\begin{bmatrix}\Rtr &  (1+\rho)\btr\end{bmatrix}F_E(\widehat{\chi}_\mathrm{ave})\right),
\end{equation}
and
\begin{equation}
    {1/}\bar c = \frac{\tau(1+\rho)}{1+\widehat{\chi}_\mathrm{ave}} + \lambda \tau- \frac{\tau^2}{(1+\widehat{\chi}_\mathrm{ave})^2 d}\tr\left(\begin{bmatrix}\Rtr &  (1+\rho)\btr\end{bmatrix}F_E(\widehat{\chi}_\mathrm{ave})\begin{bmatrix}\Rtr &  (1+\rho)\btr\end{bmatrix}^\top\right).
\end{equation}

Using \eqref{eq:Ge_hat_block}, we can now write the equation \eqref{eq:chi_ave_Ge_hat} as
\begin{equation}\label{eq:chi_LT_1}
\begin{aligned}
    \widehat{\chi}_\mathrm{ave} &\simeq \frac{1}{d}\tr\left(F_E(\widehat{\chi}_\mathrm{ave}) \Etr\right)\\
    &\quad+ \frac{\bar c\, \tau^2}{(1+\widehat{\chi}_\mathrm{ave})^2 d^3} \tr\left(\begin{bmatrix}\Rtr &  (1+\rho)\btr\end{bmatrix}F_E(\widehat{\chi}_\mathrm{ave}) \Etr F_E(\widehat{\chi}_\mathrm{ave}) \begin{bmatrix}\Rtr &  (1+\rho)\btr\end{bmatrix}^\top\right).
\end{aligned}
\end{equation}
The second term on the right-hand side of \eqref{eq:chi_LT_1} is negligible. Indeed, 
\begin{equation}
\begin{aligned}
    &\tr\left(\begin{bmatrix}\Rtr &  (1+\rho)\btr\end{bmatrix}F_E(\widehat{\chi}_\mathrm{ave}) \Etr F_E(\widehat{\chi}_\mathrm{ave}) \begin{bmatrix}\Rtr &  (1+\rho)\btr\end{bmatrix}^\top\right)\\ &\qquad\qquad\qquad\le \norm{F_E(\widehat{\chi}_\mathrm{ave}) \Etr F_E(\widehat{\chi}_\mathrm{ave})}_\mathsf{op} (\, \norm{\Rtr}_\mathsf{F}^2 + (1+\rho)^2 \norm{\btr}^2).
\end{aligned}
\end{equation}
By construction, $\norm{F_E(\widehat{\chi}_\mathrm{ave})}_\mathsf{op} \le (\lambda \tau)^{-1}$. Moreover, since the task vectors $\set{\tv_i}_{i \in [\ntv]}$ are independent vectors sampled from $\unifsp$, it is easy to verify that
\begin{equation}
    \norm{\Etr}_\mathsf{op} = \mathcal{O}_\prec(1), \qquad \norm{\Rtr}_\mathsf{F} = \mathcal{O}_\prec(\sqrt{d}) \qquad \text{and}\qquad \norm{\btr}_2 = \mathcal{O}_\prec(1).
\end{equation}
Finally, since $\bar c$ is an element of $\widehat{\mathcal{G}}_e$, we must have $\abs{\bar c} \le \norm{\widehat{\mathcal{G}}_e}_\mathsf{op} \le (\tau \lambda)^{-1}$. Combining these estimates gives us
\begin{equation}
    \frac{\bar c\, \tau^2}{(1+\widehat{\chi}_\mathrm{ave})^2 d^3} \tr\left(\begin{bmatrix}\Rtr &  (1+\rho)\btr\end{bmatrix}F_E(\widehat{\chi}_\mathrm{ave}) \Etr F_E(\widehat{\chi}_\mathrm{ave}) \begin{bmatrix}\Rtr &  (1+\rho)\btr\end{bmatrix}^\top\right) = \mathcal{O}_\prec(d^{-2}),
\end{equation}
and thus we can simplify \eqref{eq:chi_LT_1} as
\begin{equation}\label{eq:chi_LT_2}
    \widehat{\chi}_\mathrm{ave} \simeq \frac{1}{d}\tr\left[\Big(\frac{\tau }{1+\widehat{\chi}_\mathrm{ave}} \Etr + \ps B + \lambda \tau  I_d\Big)^{-1} \Etr\right].
\end{equation}
Observe that \eqref{eq:chi_LT_2} is a small perturbation of the self-consistent equation in \eqref{eq:chi_Etr}. By the stability of the equation \eqref{eq:chi_Etr}, we then have
\begin{equation}\label{eq:chi_diff}
    \widehat{\chi}_\mathrm{ave} \simeq \chi_\ps,
\end{equation}
where $\chi_\ps$ is the unique positive solution to \eqref{eq:chi_Etr}.

Recall the definitions of $\mathcal{G}_e(\ps)$ and $\widehat{\mathcal{G}}_e(\ps)$ in \eqref{eq:G_equiv_hat} and \eqref{eq:Ge_equiv_ps}, respectively. By the standard resolvent identity,
\begin{equation}\label{eq:G_Ghat_diff}
\begin{aligned}
    &\widehat{\mathcal{G}}_e(\ps) - \mathcal{G}_e(\ps)\\ &= \frac{\tau[\widehat{\chi}_\mathrm{ave} - \chi_\ps]}{[1+\chi_\ps][1+\widehat{\chi}_\mathrm{ave}]}\widehat{\mathcal{G}}_e(\ps) \begin{bmatrix}
    1+\rho & \frac{1}{\sqrt d} \vecop\left(\begin{bmatrix}\Rtr &  (1+\rho)\btr\end{bmatrix}\right)^\top\\
    \frac{1}{\sqrt d} \vecop\left(\begin{bmatrix}\Rtr &  (1+\rho)\btr\end{bmatrix}\right) & I_d \otimes \Etr
    \end{bmatrix}\mathcal{G}_e(\ps).
\end{aligned}
\end{equation}
By construction, $\norm{\widehat{\mathcal{G}}_e(\ps)}_\mathsf{op} \le 1/(\tau \lambda)$ and $\norm{\mathcal{G}_e(\ps)}_\mathsf{op} \le 1/(\tau \lambda)$. Moreover, $\norm{\Etr}_\mathsf{op} \prec 1$ and 
\begin{equation}
\norm{\frac{1}{\sqrt d} \vecop\left(\begin{bmatrix}\Rtr &  (1+\rho)\btr\end{bmatrix}\right)} \prec 1.
\end{equation}
It then follows from \eqref{eq:chi_diff} and \eqref{eq:G_Ghat_diff} that
\begin{equation}\label{eq:G_Ghat_diff_bnd}
    \norm{\widehat{\mathcal{G}}_e(\ps) - \mathcal{G}_e(\ps)}_\mathsf{op} \simeq 0.
\end{equation}
If $\widehat{\mathcal{G}}_e(\ps)$ satisfies the equivalent conditions \eqref{eq:equiv_uv}, \eqref{eq:equiv_Av} and \eqref{eq:equiv_trace} (as claimed in our analysis above), then the estimate in \eqref{eq:G_Ghat_diff_bnd} allows us to easily check that $\mathcal{G}_e(\ps)$ also satisfies \eqref{eq:equiv_uv}, \eqref{eq:equiv_Av} and \eqref{eq:equiv_trace}. Thus, we claim that $\mathcal{G}_e(\ps)$ is asymptotically equivalent to the extended resolvent matrix $G_e(\ps)$ in the sense of Definition~\ref{def:equivalent}.

\section{Asymptotic Limits of the Generalization Errors}\label{sec:AsymptoticLimits}

In this section, we use the characterization in Result~\ref{res:equivalent} to derive the asymptotic limits of the generalization errors of associated with the set of parameters $\Gamma^\ast$ learned from ridge regression. 

\subsection{Asymptotic Limits of the Linear and Quadratic Terms}

From Corollary~\ref{cor:eg} and the discussions in Remark~\ref{rem:eg}, characterizing the test error $e(\Gamma^\ast)$ boils down to computing the linear term $\frac{1}{d} \tr\left( \Gamma^\ast \Ate^\top\right)$ and the quadratic term $\frac 1 d\tr\left(\Gamma^\ast \Bte(\Gamma^\ast)^\top\right)$, where $\Ate$ and $\Bte$ are the matrices defined in \eqref{eq:Ate} and \eqref{eq:Bte}, respectively. 

We consider two different types of test data distributions $\Ptest$: ICL and IDG. [See the main text for details.] From \eqref{eq:bR_ICL} and \eqref{eq:bR_IDG}, these two settings correspond to choosing
\begin{equation}\label{eq:AteBte_ICL}
    \text{(ICL)}:\qquad\Ate = \begin{bmatrix}
        I_d & 0 
    \end{bmatrix} \quad \text{and}\quad \Bte = \begin{bmatrix}(\frac{1+\rho}{\alpha} + 1) I_d  & \\
        & (1+\rho)^2\end{bmatrix}.
\end{equation}
and
\begin{equation}\label{eq:AteBte_IDG}
  \text{(IDG)}:\qquad  \Ate = \Atr \quad \text{and} \quad \Bte = \Etr,
\end{equation}
respectively. In \eqref{eq:AteBte_IDG}, $\Atr$ and $\Etr$ are the matrices defined in \eqref{eq:Atr} and \eqref{eq:Etr}. 

\begin{result}\label{res:Ge_AB}
    Let $\Gamma^\ast$ be the set of parameters learned from the ridge regression problem in \eqref{eq:ridge_LT_SI}. Let $\Ate \in \R^{d \times (d+1)}$ and $\Bte \in \R^{(d+1) \times (d+1)}$ be two matrices constructed as in \eqref{eq:AteBte_ICL} or \eqref{eq:AteBte_IDG}. We have 
    \begin{equation}\label{eq:Gamma_A_trace}
        \frac 1 d \tr(\Gamma^\ast \Ate^\top) \simeq \frac{1}{d}\tr\left(\GammaEq \Ate^\top\right),
    \end{equation}
    and
    \begin{equation}\label{eq:GammaBGamma}
    \begin{aligned}
        &\frac 1 d \tr(\Gamma^\ast \Bte ( \Gamma^\ast)^\top) \simeq \frac 1 d \tr(\GammaEq \Bte (\GammaEq)^T)- \frac{c_e}{d} \tr\left(\Bte\left[(\Etr + \xi I)^{-1}- \xi(\Etr + \xi I)^{-2}\right]\right).
    \end{aligned}
    \end{equation}
    In the above displays, $\GammaEq$ is an asymptotic equivalent of $\Gamma^\ast$, defined as
    \begin{equation}\label{eq:Gamma_e}
    \begin{aligned}
        \GammaEq &\bydef  \begin{bmatrix} \Rtr & (1+\rho)\btr\end{bmatrix} (\Etr + \xi I)^{-1},
    \end{aligned}
    \end{equation}
     where $\xi$ is the unique positive solution to the self-consistent equation
\begin{equation}\label{eq:xi_equation}
    \xi\mathcal{M}_\tload\left(\frac{1+\nv}{\alpha}+\xi\right) - \frac{\tau \lambda}{\xi} = 1 - \tau,
\end{equation}
and $\mathcal{M}_\kappa(\cdot)$ is the function defined in \eqref{eq:m_Wishart}. For convenience, this can be recast so that $\xi$ is the positive \textit{double} root of $$
\kappa \tau \xi^4 + 
(-\kappa \lambda \tau + \kappa \tau x + \kappa \tau - \kappa x - \tau^2 - \kappa + \tau) \xi^3 + 
(-\kappa \lambda \tau x - \kappa \lambda \tau + 2 \lambda \tau^2 - \tau^2 x - \lambda \tau + 2 \tau x - x) \xi^2 + 
(-\lambda^2 \tau^2 + 2 \lambda \tau^2 x - 2 \lambda \tau x) \xi -\lambda^2 \tau^2 x.$$
Moreover, the scalar $c_e$ in \eqref{eq:GammaBGamma} is defined as
\begin{equation}\label{eq:GammaBGamma_c}
    c_e = \frac{\rho +\nu - \nu^2 \mathcal{M}_\tload(\nu)- \xi\left[1-2\nu\mathcal{M}_\tload(\nu) - \nu^2 \mathcal{M}'_\tload(\nu)\right]}{1 - {2\xi}\mathcal{M}_\tload(\nu) - 
    \xi^2 \mathcal{M}'_\tload(\nu) - \tau},
\end{equation}
where
\begin{equation}\label{eq:nu_xi}
    \nu \bydef \frac{1+\nv}{\cload} + \xi.
\end{equation}
\end{result}

To derive the asymptotic characterizations \eqref{eq:Gamma_A_trace} and \eqref{eq:GammaBGamma} in Result~\ref{res:Ge_AB}, we first use block-matrix inversion to rewrite $\mathcal{G}_e(\ps)$ in \eqref{eq:Ge_equiv_ps} as
\begin{equation}\label{eq:Ge_equiv_block}
    {\mathcal{G}}_e(\ps) = \begin{bmatrix}
        c^\ast(\ps) & -c^\ast(\ps)\, (q^\ast(\ps))^\top\\
        -c^\ast(\ps) \, q^\ast(\ps) & I \otimes F_E(\chi_\ps)+ c^\ast(\ps) q^\ast(\ps) (q^\ast(\ps))^\top
    \end{bmatrix},
\end{equation}
where $F_E(\cdot)$ is the matrix-valued function defined in \eqref{eq:F_E}, \emph{i.e.},
\begin{equation}
    F_E(\chi_\ps) = \Big(\frac{\tau  }{1+\chi_\ps} \Etr + \ps \Bte + \lambda \tau I_{d+1}\Big)^{-1}.
\end{equation}
Moreover,
\begin{equation}\label{eq:qast_ps}
    q^\ast(\ps) = \frac{\tau}{(1+\chi_\ps)\sqrt{d}}\vecop\left(\begin{bmatrix}\Rtr &  (1+\rho)\btr\end{bmatrix}F_E(\chi_\ps)\right),
\end{equation}
and
\begin{equation}\label{eq:cinv_ast_ps}
    \frac{1}{c^\ast(\ps)} = \frac{\tau(1+\rho)}{1+\chi_\ps} + \lambda \tau- \frac{\tau^2}{(1+\chi_\ps)^2 d}\tr\left(\begin{bmatrix}\Rtr &  (1+\rho)\btr\end{bmatrix}F_E(\chi_\ps)\begin{bmatrix}\Rtr &  (1+\rho)\btr\end{bmatrix}^\top\right).
\end{equation}
Observe that there is a one-to-one correspondence between the terms in \eqref{eq:Ge_equiv_block} and those in \eqref{eq:Ge_block}. 

To derive the asymptotic characterization given in \eqref{eq:Gamma_A_trace}, we note that
\begin{align}
    \frac 1 d \tr(\Gamma^\ast \Ate^\top) &\simeq \frac{-1}{c(0)\sqrt{d}} \begin{bmatrix}
            0 & \vecop(\Ate)^T
        \end{bmatrix} \mathcal{G}_e(0) e_1 \label{eq:Gamma_A_trace_0}\\
        &= \frac{c^\ast(0)}{c(0)}\cdot \frac 1 d \tr\left(\begin{bmatrix} \Rtr & (1+\rho)\btr\end{bmatrix} (\Etr + \lambda(1+\chi_0) I)^{-1} \Ate^\top\right)\label{eq:Gamma_A_trace_1}\\
        &\simeq \frac 1 d \tr\left(\begin{bmatrix} \Rtr & (1+\rho)\btr\end{bmatrix} (\Etr + \lambda(1+\chi_0) I)^{-1} \Ate^\top\right).\label{eq:Gamma_A_trace_2}
\end{align}
In the above display, \eqref{eq:Gamma_A_trace_0} follows from \eqref{eq:GA_Ge} and the asymptotic equivalence between $G_e(0)$ and $\mathcal{G}_e(0)$. The equality in \eqref{eq:Gamma_A_trace_1} is due to \eqref{eq:Ge_equiv_block} and \eqref{eq:qast_ps}. To reach \eqref{eq:Gamma_A_trace_2}, we note that $c(0) = e_1^\top G_e(0) e_1$ and $c^\ast(0) = e_1^\top \mathcal{G}_e(0) e_1$. Thus, $c(0) \simeq c^\ast(0)$ due to the asymptotic equivalence between $G_e(0)$ and $\mathcal{G}_e(0)$. In Appendix~\ref{appendix:Wishart}, we show that
\begin{equation}\label{eq:xi0_xi}
    \lambda(1+\chi_0) \simeq \xi,
\end{equation}
where $\xi$ is the scalar defined in \eqref{eq:xi_equation}. The asymptotic characterization given in \eqref{eq:Gamma_A_trace} then follows from \eqref{eq:Gamma_A_trace_2} and from the definition of $\GammaEq$ given in \eqref{eq:Gamma_e}.

Next, we use \eqref{eq:GBG_Ge} to derive the asymptotic characterization of the quadratic term in \eqref{eq:GammaBGamma}. Taking the derivative of \eqref{eq:cinv_ast_ps} gives us
\begin{align}
       &\frac{\diff}{\diff \ps}\left(\frac{1}{c^\ast(\ps)}\right) \biggr|_{\ps = 0} = \frac 1 d \tr(\GammaEq \Bte (\GammaEq)^\top) \nonumber\\
       &\qquad- \frac{\tau \chi'_0}{(1+\chi_0)^2}\left(1+\rho - \frac{2}{d} \tr(A_\mathrm{tr} (\Etr + \xi I)^{-1}A_\mathrm{tr}^T) + \frac{1}{d} \tr(A_\mathrm{tr} (\Etr +\xi I)^{-1} \Etr (\Etr +\xi I)^{-1} A_\mathrm{tr}^T)\right)\\
       &= \frac 1 d \tr(\GammaEq \Bte (\GammaEq)^\top) - \frac{\tau \chi'_0}{(1+\chi_0)^2}\left(1+\rho - \frac{1}{d} \tr(\GammaEq A_\mathrm{tr}^T) - \frac{\xi}{d} \tr(\GammaEq (\GammaEq)^\top)\right),\label{eq:diff_cinv_ast}
\end{align}
where $\Atr$ is the matrix defined in \eqref{eq:Atr}. In reaching the above expression, we have also used the estimate in \eqref{eq:xi0_xi}. 

To further simplify our formula, we note that
\begin{equation}\label{eq:Atr_S}
\Atr = S \left(\Etr + \xi I_{d+1} - \Big(\frac{1+\rho}{\alpha} + \xi\Big) I_{d+1}\right),  
\end{equation}
where $S$ is a $d \times (d+1)$ matrix obtained by removing the last row of $I_{d+1}$. Using this identity, we can rewrite the matrix $\GammaEq$ in \eqref{eq:Gamma_e} as
\begin{align}\label{eq:Gamma_e_S}
    \GammaEq &=  S \left(I - \Big(\frac{1+\rho}{\alpha}+\xi\Big) (\Etr + \xi I)^{-1}\right)\\
    &= \begin{bmatrix} I - \nu F_R(\nu) - a^\ast(1+\rho)^2\nu F_R(\nu) \btr \btr^\top F_R(\nu) & a^\ast(1+\rho)\nu F_R(\nu) \btr\end{bmatrix},\label{eq:Gamma_e_precise}
\end{align}
where $F_R(\cdot)$ is the function defined in \eqref{eq:F_R}, and $\nu$ is the parameter given in \eqref{eq:nu_xi}. The second equality \eqref{eq:Gamma_e_precise} is obtained from the explicit formula for $(\Etr + \xi I)^{-1}$ in \eqref{eq:Etr_inv}.

From \eqref{eq:Atr_S} and \eqref{eq:Gamma_e_S}, it is straightforward to check that
\begin{equation}\label{eq:Gamma_e_Atr}
    \frac{1}{d} \tr(\GammaEq A_\mathrm{tr}^T) = 1 - \nu + \nu^2 \frac 1 d \tr(S (\Etr + \xi I)^{-1}S^\top),
\end{equation}
and
\begin{equation}
    \frac{\xi}{d} \tr(\GammaEq (\GammaEq)^\top) = \xi\left[1 - 2\nu \frac 1 d \tr(S (\Etr + \xi I)^{-1}S^\top) + \nu^2 \frac 1 d \tr(S (\Etr + \xi I)^{-2}S^\top\right].
\end{equation}
By using the asymptotic characterizations given in \eqref{eq:GEtr_asymp} and \eqref{eq:GEtr2_asymp}, we then have
\begin{equation}\label{eq:Gamma_e_Atr_asymp}
    \frac{1}{d} \tr(\GammaEq A_\mathrm{tr}^T) \simeq 1 - \nu + 
    \nu^2 \mathcal{M}_\tload(\nu),
\end{equation}
and
\begin{equation}\label{eq:Gamma_e_Gamma_e_asymp}
    \frac{\xi}{d} \tr(\GammaEq (\GammaEq)^\top) \simeq \xi\left[1 - 2\nu \mathcal{M}_\tload(\nu) - \nu^2 \mathcal{M}'_\tload(\nu)\right].
\end{equation}
Substituting \eqref{eq:Gamma_e_Atr_asymp}, \eqref{eq:Gamma_e_Gamma_e_asymp}, and \eqref{eq:dchi} into \eqref{eq:diff_cinv_ast} yields
\begin{equation}
       \frac{\diff}{\diff \ps}\left(\frac{1}{c^\ast(\ps)}\right) \biggr|_{\ps = 0}  \simeq \frac 1 d \tr(\GammaEq \Bte (\GammaEq)^T)- \frac{c_e}{d} \tr\left(\Bte\left[(\Etr + \xi I)^{-1}- \xi(\Etr + \xi I)^{-2}\right]\right),
\end{equation}
where $c_e$ is the scalar defined in \eqref{eq:GammaBGamma_c}. The asymptotic characterization of the quadratic term in \eqref{eq:GammaBGamma} then follows from \eqref{eq:GBG_Ge} and the claim that
\begin{equation}
    \frac{\diff}{\diff \ps}\left(\frac{1}{c(\ps)}\right) \biggr|_{\ps = 0} \simeq \frac{\diff}{\diff \ps}\left(\frac{1}{c^\ast(\ps)}\right) \biggr|_{\ps = 0}.
\end{equation}

\subsection{The Generalization Error of In-Context Learning}\label{sec:icl_generalisation_theory}
\begin{result}\label{res:eg_ICL}
Consider the test distribution $\Ptest$ associated with the ICL task. We have
\begin{equation}\label{eq:eicl_limit}
    e(\Gamma^\ast) \simeq \eicl(\load, \cload,\tload, \nv, \lambda),
\end{equation}
where
\begin{equation}\label{eq:eg_ICL}
\begin{aligned}
    \eicl(\load, \cload,\tload, \nv, \lambda)&\bydef \left(\frac{1+\rho}{\alpha} + 1\right)\left(1 -2\nu\mathcal{M}_\tload(\nu) -\nu^2\mathcal{M}'_\tload(\nu) - c_e\left[\mathcal{M}_\tload(\nu) + \xi \mathcal{M}'_\tload(\nu)\right]\right)\\
    &\qquad\qquad\qquad- 2\left[1 - \nu \mathcal{M}_\tload(\nu)\right] + 1 + \rho,
\end{aligned}
\end{equation}
and $c_e$ is the constant given in \eqref{eq:GammaBGamma_c}.
\end{result}
\begin{remark}
    Recall the definition of the asymptotic equivalence notation ``$\simeq$'' introduced in \sref{notation}. The characterization given in \eqref{eq:eicl_limit} implies that, as $d \to \infty$, the generalization error $e(\Gamma^\ast)$ converges almost surely to the deterministic quantity $\eicl(\load, \cload,\tload, \nv, \lambda)$.
\end{remark}

To derive \eqref{eq:eicl_limit}, our starting point is the estimate
\begin{equation}\label{eq:eg_AB}
    e(\Gamma^\ast) \simeq \frac{1}{d}\tr\left(\Gamma^\ast B_\mathrm{test} (\Gamma^\ast)^\top\right)- \frac{2}{d} \tr\left(\Gamma^\ast A_\mathrm{test}^\top\right) + 1 + \rho,
\end{equation}
which follows from Corollary~\ref{cor:eg} and the discussions in Remark~\ref{rem:eg}. We consider the ICL task here, and thus $\Ate$ and $\Bte$ are given in \eqref{eq:AteBte_ICL}. The asymptotic limits of the first two terms on the right-hand side of the above equation can be obtained by the characterizations given in Result~\ref{res:Ge_AB}.  

Using \eqref{eq:Gamma_A_trace} and the expressions in \eqref{eq:Gamma_e_precise} and \eqref{eq:AteBte_ICL}, we have
\begin{align}
    \frac 1 d \tr(\Gamma^\ast \Ate^\top) &\simeq \frac{1}{d}\tr\left(\GammaEq \Ate^\top\right)\\
    &= 1 - \frac{\nu}{d}\tr F_R(\nu) - a^\ast(1+\rho)^2 \nu \frac{\norm{F_R(\nu)\btr}^2}{d}\\
    &\simeq 1 - \nu \mathcal{M}_\tload(\nu),\label{eq:GA_ICL}
\end{align}    
where $\nu$ is the constant defined in \eqref{eq:nu_xi}. To reach the last step, we have used the estimate given in \eqref{eq:GEtr_asymp}.
 
Next, we use \eqref{eq:GammaBGamma} to characterize the first term on the right-hand side of \eqref{eq:eg_AB}. From the formulas in \eqref{eq:Gamma_e_precise} and \eqref{eq:AteBte_ICL}, we can check that
\begin{align}
    \frac{1}{d}\tr\left(\GammaEq B_\mathrm{test} (\GammaEq)^\top\right) &\simeq \left(\frac{1+\rho}{\alpha} + 1\right) \frac 1 d \tr\left(I - \nu F(\nu)\right)^2 \\&\simeq \left(\frac{1+\rho}{\alpha} + 1\right)\left(1 -2\nu\mathcal{M}_\tload(\nu) -\nu^2\mathcal{M}'_\tload(\nu)\right),\label{eq:GeBGe_ICL}
\end{align}
where the second step follows from \eqref{eq:GEtr_asymp} and \eqref{eq:GEtr2_asymp}.
From \eqref{eq:Etr_inv},
\begin{equation}\label{eq:BGE_ICL}
    \frac 1 d \tr(B_\mathrm{test}(\Etr + \xi I)^{-1}) \simeq \left(\frac{1+\rho}{\alpha} + 1\right) \frac 1 d \tr F_R(\nu) \simeq \left(\frac{1+\rho}{\alpha} + 1\right) \mathcal{M}_\tload(\nu).
\end{equation}
Similarly, we can check that
\begin{equation}\label{eq:BGE2_ICL}
    \frac 1 d \tr(B_\mathrm{test}(\Etr + \xi I)^{-2}) \simeq \left(\frac{1+\rho}{\alpha} + 1 \right) \frac 1 d \tr F^2_R(\nu) \simeq -\left(\frac{1+\rho}{\alpha} + 1 \right) \mathcal{M}'_\tload(\nu).
\end{equation}
Substituting \eqref{eq:GeBGe_ICL}, \eqref{eq:BGE_ICL}, and \eqref{eq:BGE2_ICL} into \eqref{eq:GammaBGamma} gives us
\begin{equation}\label{eq:GBG_ICL}
    \frac 1 d \tr(\Gamma^\ast B ( \Gamma^\ast)^\top) \simeq \left(\frac{1+\rho}{\alpha} + 1\right)\left(1 -2\nu\mathcal{M}_\tload(\nu) -\nu^2\mathcal{M}'_\tload(\nu) - c_e\left[\mathcal{M}_\tload(\nu) + \xi \mathcal{M}'_\tload(\nu)\right]\right),
\end{equation}
where $c_e$ is the constant given in \eqref{eq:GammaBGamma_c}. Combining \eqref{eq:GA_ICL}, \eqref{eq:GBG_ICL}, and \eqref{eq:eg_AB}, we are done.

In what follows, we further simplify the characterizations in Result~\ref{res:eg_ICL} by considering the ridgeless limit, \emph{i.e.}, when $\lambda \to 0^+$. 

\begin{result}\label{res:eg_ICL_ridgeless}
    Let
\begin{equation}\label{eq:mu_x_m}
    q^\ast \bydef \frac{1+\rho}{\alpha},\qquad m^\ast \bydef \mathcal{M}_\kappa\left({q^\ast}\right), \qquad \text{and} \qquad \mu^\ast \bydef q^\ast \mathcal{M}_{\tload/\load}(q^\ast),
\end{equation}
where $\mathcal{M}_\kappa(x)$ is the function defined in \eqref{eq:m_Wishart}. Then
\begin{equation}\label{eq:eg_ICL_ridgeless}
\begin{aligned}
    &\eiclrl \bydef \lim_{\lambda \to 0^+} \eicl(\load, \cload,\tload, \nv, \lambda)\\
    &= \begin{cases}
    {\frac{\tau(1+q^\ast)}{1-\tau}\left[1-\tau(1-\mu^\ast)^2+\mu^\ast(\rho/q^\ast-1)\right]} {-2\tau(1-\mu^\ast)+(1+\rho)} &\quad \tau < 1\\
    \left(q^\ast+1\right)\left(1 - 2q^\ast m^\ast -(q^\ast)^2 \mathcal{M}'_\tload(q^\ast) + \frac{(\rho + q^\ast  - (q^\ast)^2 m^\ast) m^\ast}{\tau-1}\right) -2 (1-q^\ast m^\ast) + (1+\nv) &\quad \tau > 1
    \end{cases},
\end{aligned}
\end{equation}
where $\mathcal{M}'_\tload(\cdot)$ denotes the derivative of $\mathcal{M}_\tload(x)$ with respect to $x$.
\end{result}

We start with the case of $\tau < 1$. Examining the self-consistent equation in \eqref{eq:xi_equation}, we can see that the parameter $\xi$ tends to a nonzero constant, denoted by $\xi^\ast$, as $\lambda \to 0^+$. It follows that the original equation in \eqref{eq:xi_equation} reduces to 
\begin{equation}\label{eq:xi_ridgeless_tau_l1}
{\xi^\ast}\mathcal{M}_\kappa\left(\frac{1+\rho}{\alpha} + \xi^\ast\right) = 1 - \tau.
\end{equation}
Introduce a change of variables
\begin{equation}\label{eq:mu_xi}
    \mu^\ast \bydef \frac{(1-\tau)(1+\rho)}{\alpha \tau \xi^\ast}.
\end{equation}
By combining \eqref{eq:xi_ridgeless_tau_l1} and the characterization in \eqref{eq:m_Wishart_identity}, 
we can directly solve for $\mu$ and get $\mu^\ast = q^\ast \mathcal{M}_{\tload/\load}(q^\ast)$ as given in \eqref{eq:mu_x_m}. The characterization in \eqref{eq:eg_ICL_ridgeless}  (for the case of $\tau < 1$) then directly follows from \eqref{eq:GA_ICL}, \eqref{eq:GBG_ICL}, and Result \ref{res:Ge_AB} after some lengthy calculations. 

Next, we consider the case of $\tau > 1$. It is straightforward to verify from \eqref{eq:xi_equation} that
    \begin{equation}
        \xi = \frac{\tau}{\tau-1}\lambda + \mathcal{O}(\lambda^2).
    \end{equation}
Thus, when $\tau > 1$, $\xi \to 0$ as $\lambda \to 0^+$. It follows that
\begin{equation}
    \lim_{\lambda \to 0^+} \nu = \lim_{\lambda \to 0^+} \left(\frac{1+\rho}{\alpha} + \xi\right) = q^\ast \quad \text{and}\quad \lim_{\lambda \to 0^+} \mathcal{M}_\tload(\nu) = m^\ast.
\end{equation}
Substituting these estimates into \eqref{eq:GA_ICL}, \eqref{eq:GBG_ICL}, and \eqref{res:Ge_AB}, we then reach the characterizations in \eqref{eq:eg_ICL_ridgeless} for the case of $\tau > 1$.

\subsection{The Generalization Error of In-Distribution Generalization}\label{sec:idg_generalisation_theory}

In what follows, we derive the asymptotic limit of the generalization error for the IDG task.

\begin{result}\label{res:eg_IDG}
Consider the test distribution $\Ptest$ associated with the IDG task. We have
\begin{equation}\label{eq:eg_IDG}
    e(\Gamma^\ast) \simeq \eidg(\load, \cload,\tload, \nv, \lambda) \bydef \tau \frac{\rho +\nu-\nu^2 \mathcal{M}_\tload(\nu) - \xi\left[1 - {2\nu}\mathcal{M}_\tload(\nu) - {\nu^2}\mathcal{M}'_\tload(\nu)\right]}{ \tau  - \left[1 - {2\xi}\mathcal{M}_\tload(\nu) - 
    \xi^2\mathcal{M}'_\tload(\nu)\right]},
\end{equation}
where $\xi$ the unique positive solution to the self-consistent equation \eqref{eq:xi_equation} and $\nu$ is the constant given in \eqref{eq:nu_xi}.
\end{result}

Similar to our derivation of Result~\ref{res:eg_ICL}, we only need to use \eqref{eq:Gamma_A_trace} and \eqref{eq:GammaBGamma} to characterize the asymptotic limits of the first and second terms on the right-hand side of \eqref{eq:eg_AB}. Note that, for the IDG task, $\Ate = \Atr$. It follows from \eqref{eq:Gamma_A_trace} and \eqref{eq:Gamma_e_Atr_asymp} that
\begin{equation}\label{eq:Gamma_A_IDG}
    \frac 1 d \tr(\Gamma^\ast \Ate^\top) \simeq 1 - \nu + \nu^2 \mathcal{M}_\tload(\nu).
\end{equation}
Similarly, since $\Bte = \Etr$, we can verify from \eqref{eq:Gamma_e} that
\begin{align}\label{eq:Gamma_e_B_Gamma_e_IDG}
    \frac{1}{d}\tr\left(\GammaEq \Bte (\GammaEq)^\top\right) &= \frac{1}{d} \tr(\GammaEq \Atr^\top) - \frac{\xi}{d} \tr(\GammaEq (\GammaEq)^\top)\\
    &\simeq 1 - \nu + \nu^2 \mathcal{M}_\tload(\nu) - \xi\left[1 - 2\nu \mathcal{M}_\tload(\nu) - \nu^2 \mathcal{M}'_\tload(\nu)\right],
\end{align}
where the second step follows from \eqref{eq:Gamma_e_Atr_asymp} and \eqref{eq:Gamma_e_Gamma_e_asymp}. 
Moreover,
\begin{equation}\label{eq:BE_BE2_IDG}
    \frac 1 d \tr\left(\Bte\left[(\Etr + \xi I)^{-1}- \xi(\Etr + \xi I)^{-2}\right]\right) = 1 - 2 \xi \mathcal{M}_\tload(\nu) - \xi^2 \mathcal{M}'_\tload(\nu).
\end{equation}
Substituting \eqref{eq:Gamma_e_B_Gamma_e_IDG} and \eqref{eq:BE_BE2_IDG} into \eqref{eq:GammaBGamma}, we have
\begin{equation}
\begin{aligned}
    &\frac 1 d \tr(\Gamma^\ast B ( \Gamma^\ast)^\top)\\
    &\ \simeq \tau \frac{\rho +\nu-\nu^2 \mathcal{M}_\tload(\nu) - \xi\left[1 - {2\nu}\mathcal{M}_\tload(\nu) - {\nu^2}\mathcal{M}'_\tload(\nu)\right]}{ \tau  - \left[1 - {2\xi}\mathcal{M}_\tload(\nu) - 
    \xi^2\mathcal{M}'_\tload(\nu)\right]}
    + 2(1 - \nu + \nu^2 \mathcal{M}_\tload(\nu)) - (1+\nv).
\end{aligned}
\end{equation}
The final result in \eqref{eq:eg_IDG} then follows from combining the above expression with \eqref{eq:Gamma_A_IDG} and \eqref{eq:eg_AB}.

Finally, we derive the ridgeless limit of the characterization given in Result~\ref{res:eg_IDG}.

\begin{result}
Let $q^\ast$, $m^\ast$, and $\mu^\ast$ be the scalars defined in \eqref{eq:mu_x_m}. We have
\begin{align}
    \eidgrl &\bydef \lim_{\lambda \to 0^+} \eidg(\load, \cload,\tload, \nv, \lambda) = \begin{cases}
    \frac{\tau}{1-\tau}\left(\frac{\rho + q^\ast - 2 q^\ast(1-\tau)({q^\ast}/{\xi^\ast}+1)}{1 - p^\ast(1-\tau)} + \frac{\tau \mu^\ast(q^\ast+\xi^\ast)^2}{q^\ast}\right) &\qquad \tau < 1\\
         \frac{\tau}{\tau-1}[\rho + q^\ast (1 - q^\ast m^\ast)] &\qquad \tau > 1
    \end{cases},\label{eq:eidg_ridgeless}
\end{align}
where $\xi^\ast = \frac{(1-\tau)q^\ast}{\tau \mu^\ast}$ and $p^\ast = \big(1 - {\kappa}\big(\frac{\kappa \xi^\ast}{1-\tau}+1\big)^{-2}\big)^{-1}$.
\end{result}

The derivation of this result closely follows that of Result~\ref{res:eg_ICL_ridgeless}. We analyze the cases of $\tau < 1$ and $\tau > 1$ separately. For $\tau < 1$, the equation in \eqref{eq:xi_equation} simplifies to \eqref{eq:xi_ridgeless_tau_l1} as $\lambda \to 0^+$. For $\tau > 1$, $\xi$ approaches zero as $\lambda \to 0^+$. Substituting these estimates into \eqref{eq:eg_IDG} then yields \eqref{eq:eidg_ridgeless} after some detailed calculations.

\subsection[A note on ordering of limits]{A note on ordering of $\lambda \to 0^{+}$ and $\alpha \to \infty$ limits}\label{sec:limit_commutations} To understand why the divergence in $\alpha$ occurs for certain configurations of $\kappa$ and $\tau$, we investigate the problem where $\alpha$ is taken to infinity before $\lambda \to 0^{+}$, which is equivalent to first taking $\ell$ to infinity. This means that our features $H_Z$ have no token $x_i$ or noise $\eta$ disorder remaining, and instead depend only on $w$ and query $x_{\ell+1}$ as \begin{equation}
    H_Z = \begin{bmatrix}
        x_{\ell+1}w^\top & \frac{1}{d}\left(\|w\|^2 + \rho\right)x_{\ell+1}
    \end{bmatrix}
\end{equation} Computing the ridgeless limit of the generalisation error for the finite $n$ predictor \eqref{eq:ridge_LT_SI} on the above $H_Z$ gives the following ridgeless, infinite-$\alpha$ generalisation error \begin{align}\label{eq:alphalambdalimit}
    \lim_{\lambda \to 0^+}&\lim_{\alpha\to\infty} \eicl(\load, \cload,\tload, \nv, \lambda) =
    \begin{dcases}
        1 - \kappa + \rho + \frac{\rho\kappa^2}{(\tau-\kappa)(1-\kappa)} &  \kappa \leq \text{min}(\tau,1) \\
        1 - \tau + \rho + \rho\frac{\kappa\tau}{(\kappa-\tau)(1-\tau)} & \tau < 1, \kappa > \tau 
        \\
        \rho + \frac{\rho\kappa}{(\tau-1)(\kappa-1)} & \tau > 1, \kappa> 1
    \end{dcases},
\end{align} noting that the second and third cases match their equivalent finite cases \begin{align}\label{eq:lambdaalphalimit_tsmall_SI}
    \lim_{\alpha \to \infty} \eiclrl &= 
    \begin{dcases}
        \infty & \kappa \leq \text{min}(\tau, 1), 
        \\
        1 - \tau + \rho + \frac{\rho \kappa \tau}{(\kappa-\tau)(1-\tau)} & \tau < 1, \kappa > \tau, \\
        \rho + \frac{\rho \kappa}{(\kappa - 1)(\tau-1)} & \tau > 1, \kappa > 1.
    \end{dcases}
\end{align} from the main manuscript. This result follows from applying the asymptotics given in prior works for the generalization error of ridgeless regression with structured covariates \cite{hastie2022surprises,dubova2023universality,atanasov2024scaling}. By comparison with \eqref{eq:lambdaalphalimit_tsmall_SI}, we see that switching the order of limits in $\alpha$ and $\lambda$ has mitigated the divergence for $\kappa \leq \min(\tau,1)$.

As these limits do not commute, which order is correct? Simulations with arbitrarily small numerical ridge parameter at large $\alpha$ track the  \eqref{eq:lambdaalphalimit_tsmall_SI} curve, not the \eqref{eq:alphalambdalimit} curve. These numerical experiments are computed with the ridge scaling $n/d$ that we use elsewhere. Were we to scale this regularisation with some power of $\alpha$, we could remove this large $\alpha$ divergence; whether or not that choice is either theoretically or practically motivated is a question we leave open for future work.

\section{Task Generalization and Bayesian Estimators}\label{sec:bayesestimatorerror}
In the main document, we have compared the performance of our linear transformer with that of two Bayesian estimators: the discrete task prior estimator and the optimal ridge estimator. Here we will derive various claims about these estimators made in the main discussion, particularly relating to their behavior against task diversity $\kappa$. 

\subsection{Ridge estimator error} Here we prove claim that the ridge estimator error is characterized by $$e^{\text{ridge}}_\text{ICL} = \rho\left(1+\frac{1}{\alpha}\mathcal{M}_{\alpha}\left(\frac{\rho}{\alpha}\right)\right) = e^{\text{ridge}}_\text{IDG}\,.$$ First, it is obvious that we must have $e^{\text{ridge}}_\text{ICL} = e^{\text{ridge}}_\text{IDG}$, as a ridge estimator by definition only operates on a single context (i.e. a single task); it cannot distinguish between $w$ drawn from $\mathcal{N}(0,I_d)$ or $w$ drawn uniformly from $\{w_1,\cdots,w_k\}$ each drawn from $\mathcal{N}(0,I_d)$. Thus in the following we will not distinguish between these two testing regimes, and refer only to task $w \sim \mathcal{N}(0,I_d)$.

Let $x_1, \ldots, x_\ell$ be a collection of data vectors. We observe
\begin{equation}
    y_i = x_i^\top w + \epsilon_i, \quad \text{for } i = 1, 2, \ldots, \ell,
\end{equation}
where $\epsilon_i \sim \mathcal{N}(0, \rho)$ is the observation noise. Let $y = [y_i ]_{i \le \ell} \in \mathbb{R}^\ell$. Then we can write down second moments as
\begin{equation}\label{eq:Bayesian_correlation}
    \E[w w^\top] = I_d, \qquad \E[y w^\top] = X^\top, \qquad \text{and } \E[y y^\top] = X^\top X + \rho I_\ell, 
\end{equation}
where $X = [x_1, x_2, \ldots, x_\ell]$ is the data matrix in $\R^{d \times \ell}$. Since $y$ and $w$ are joint Gaussian random vectors, we can write
\begin{equation}
    \beta = (X X^\top + \rho I_d)^{-1} X y + Z,
\end{equation}
where $Z$ is a centered Gaussian vector independent of $y$. Note that the matrix in front of $y$ in the above expression is chosen so that the cross-correlation $\E[y w^\top]$ matches that in \eqref{eq:Bayesian_correlation}. It then immediately follows that the Bayesian estimator of $w$ is
\begin{equation}
    w^\text{ridge}_\text{Bayes} = (X X^\top + \rho I_d)^{-1} X y.
\end{equation}
Given a new input vector $x_{\ell+1}$ and its label $y_{\ell+1} = x_{\ell+1}^\top w + \epsilon_{\ell+1}$, the Bayesian optimal estimator of the new label is
\begin{align}
    \hat y_{\ell+1} &= x_{\ell+1}^\top w^\text{ridge}_\text{Bayes},
\end{align}
and the corresponding Bayesian prediction error is 
\begin{align}
    e_\text{Bayesian} &= \E[(y_{\ell+1} - x_{\ell+1}^\top w^\text{ridge}_\text{Bayes})]\\
    &= \rho + \frac 1 d \E\left[\norm{w - w^\text{ridge}_\text{Bayes}}^2\right]\\
    &\simeq \rho \left(1 + \frac 1 \alpha \mathcal{M}_\alpha\left(\frac{\rho}{\alpha(1-\gamma)}\right)\right),
\end{align}
where $\mathcal{M}_\alpha(x)$ is the function defined in \eqref{eq:m_Wishart_identity}. In the large $\alpha$ limit, we have
\begin{equation}
  e_\text{Bayesian} \simeq \rho(1 + 1/\alpha).  
\end{equation}

\subsection{Asymptotics of the linear transformer in $\kappa$} 

We wish to expand \eqref{eq:eg_ICL_ridgeless} and \eqref{eq:eidg_ridgeless} in large $\kappa$ to study the decay of $g_\text{task} = e_\text{ICL}-e_\text{IDG}$ toward 0 as $\kappa \to \infty$. By expanding auxillary variables in $\kappa$, we have 
\begin{align}
    \mathcal{M}_\kappa(q) \approx \frac{1}{1+q}\left(1+\frac{1}{(1+q)^2\kappa}\right)\,, \qquad \mathcal{M}'_\kappa(q) \approx -\frac{1}{(1+q)^2}\left(1+\frac{3}{(1+q)^2\kappa}\right) 
\end{align}
and \begin{align}
    \mu = q\mathcal{M}_{\kappa/\tau}(q) \approx \frac{q}{1+q}\left(1 + \frac{\tau}{(1+q)^2\kappa}\right)\,,\qquad 
    \xi \approx \frac{1-\tau}{\tau}(1+q)\left(1 - \frac{\tau}{(1+q)^2\kappa}\right)\,,\qquad 
    p \approx 1+\frac{\tau^2}{(1+\tau)^2\kappa}\,.
\end{align} Substituting these simplifications into \eqref{eq:eg_ICL_ridgeless} and \eqref{eq:eidg_ridgeless} gives the desired characterization $$g^\text{LT}_\text{task} = 0 + \frac{1}{\kappa}\begin{cases}
    c_1 & \tau < 1 \\
    c_2 & \tau > 1 \\
\end{cases} + \mathcal{O}\left(\frac{1}{\kappa^2}\right)$$ for \begin{align}
    c_1 &\equiv \frac{2 \left(\left(-2 q-\frac{1}{2}\right) \tau^{2}+\left(q^{2}+\left(-\rho +3\right) q+1-\rho \right) \tau -\left(1+q\right) \left(q+\frac{1}{2}-\frac{\rho}{2}\right)\right) \tau}{\left(-1+\tau \right) \left(1+q\right)^{3}} \,,\\
    c_2 &\equiv 2\frac{q^2}{(1+q)^3} + \frac{1}{(\tau-1)(q+1)^2}\left(\rho+q-\frac{q^2}{q+1}\right) \,.
\end{align}

\subsection{Asymptotics of the dMMSE estimator in $\kappa$}\label{sec:asymptoticsofdmmse}
To study the slowness of the dMMSE estimator more explicitly, consider the $\alpha\to\infty$ limit. The exponential weight terms in the estimator in this limit behave as
$$e^{-\frac{1}{2\nv}\sum_{i=1}^{\cl}\left(y_i-w_{j}^{\top}x_i\right)^2} \to e^{-\frac{\ell}{2\rho}\left(\frac{1}{d}\|w^\ast-w_j\|^2 + \rho\right)}\,,$$ and these weightings exponentially favor choosing $\w_j$ that minimizes $\|w^\ast-w_j\|^2$ over the set of $k$ training tasks $w_j$. It's immediately clear that $e_\text{IDG}^\text{dMMSE} = \rho$ in this limit as the minimal value of $\|w^\ast-w_j\|^2$ when $w^\ast \in \{w_1,\cdots,w_k\}$ is 0. Taking $$w_\text{est}(w^\ast,w_i) = \text{argmin}_{i\in[k]} \|w^\ast-w_i\|^2$$ we have \begin{align}
    g^\text{dMMSE}_\text{task} 
    &= \frac1d \mathbb{E}_{w^\ast \sim \mathcal{P}_\text{test}} \left[ \mathbb{E}_{w_i\sim \mathcal{P}_\text{train}}\left[ \min_{i\in[k]} \|w^\ast - w_i\|^2\right]\right].
\end{align} It is equivalent in high-dimensions to work with $w^\ast,w_i$ sampled uniformly on the sphere $\mathcal{S}^{d-1}(\sqrt{d}).$ For analytical tractability, we will proceed with this. We exploit spherical symmetry in both $w^\ast$ and $w_i$ to simplify \begin{align} 
\|w^\ast - w_i\|^2 &= 2d-2\sqrt{d}x_i \qquad \text{ for $x_i$ a single component of vector $w_i \sim \text{Unif}(\mathcal{S}^{d-1}(\sqrt{d}))$}\\
&=  2d - 2\sqrt{d}(2\sqrt{d}\,b_i - \sqrt{d}) \qquad \text{for $b_i\sim\text{Beta}\left(\frac{d-1}{2},\frac{d-1}{2}\right)$} \\
&= 4d - 4db_i
\end{align}
This gives \begin{align}
    g_\text{task} &= \rho + 4 - 4\,\mathbb{E}_{b_i\sim\text{Beta}\left(\frac{d-1}{2},\frac{d-1}{2}\right)} \left[\max_{i\in[k]}b_i \right].
\end{align} We wish to study the behavior of this expected maximum of a set of $k$ Beta variables as $k$ grows large. We are thus interested in calculating
\begin{align}
\mathbb{E}_{b_i\sim\text{Beta}\left(\frac{d-1}{2},\frac{d-1}{2}\right)} \left[\max_{i\in[k]}b_i \right],
\end{align}
Let $D = \frac{d-1}{2}$. Following the standard procedure for deriving the PDF for the distribution of the extreme value, have 
\begin{align}
p(b_i = x) = \frac{1}{B(D)} (1-x)^{D-1}x^{D-1}
\end{align}
where
\begin{align}
B(D) = \frac{\Gamma(D)^2}{\Gamma(2D)}.
\end{align}
Then
\begin{align}
\mathbb{P}(b_i \leq M, \forall i \in [k]) = \left(\frac{1}{B(D)} \int_0^M dx (1-x)^{D-1}x^{D-1}\right)^k
\end{align}
and
\begin{align}
p(\max_{i\in[k]} b_i = M) = \frac{k}{B(D)} M^{D-1}(1-M)^{D-1} \left(\frac{1}{B(D)}\int_0^M dx \, (1-x)^{D-1}x^{D-1}\right)^{k-1}
\end{align}
and
\begin{align}
\mathbb{E}_{b_i\sim\text{Beta}\left(\frac{d-1}{2},\frac{d-1}{2}\right)} \left[\max_{i\in[k]}b_i \right] = \frac{k}{B(D)} \int_0^1 dM M^{D}(1-M)^{D-1} \left(\frac{1}{B(D)}\int_0^M dx\,  (1-x)^{D-1}x^{D-1}\right)^{k-1}
\end{align}

Note that the integral can be written as
\begin{align}
\frac{k}{B(D)}\int_{0}^1dM \,M^{D}(1-M)^{D-1} \exp\left((k-1)\log \frac 1{B(D)}\int_{0}^M dx\, (1-x)^{D-1}x^{D-1}\right)
\end{align}
To proceed, let's consider the variable substitution
\begin{align}\label{var}
e^{-t} = \frac 1{B(D)}\int_{0}^M dx\, (1-x)^{D-1}x^{D-1}.
\end{align}
By this definition, $t(M=0) = \infty $, $t(M=1) = 0$
Plugging in
\begin{align}
k \int_{0}^\infty dt \, M(t) e^{-kt}.
\end{align}
Naively, it looks like one could get a simple series expansion in $1/k$ through integration by parts, but this will fail because $\left.\frac{dM}{dt}\right |_{t=0} = \infty$ for $D>1$. One can see this by differentiating \eqref{var} with respect to $t$. Instead, have 
\begin{align}
k \int_{0}^\infty dt \, M(t) e^{-kt} = \int_{0}^\infty dt \, M(t/k) e^{-t} = 1 - \int_{0}^\infty dt \, \epsilon(t/k) e^{-t}
\end{align}
where we defined
\begin{align}
\epsilon(t/k) \equiv 1 - M(t/k).
\end{align}
Now, we can expand $\epsilon(t/k)$ and integrate. We have to find an appropriate series expansion from the implicit definition of $\epsilon(t/k)$.
\begin{align}
e^{-t/k} &= \frac{1}{B(D)}\int_{0}^{1-\epsilon(t/k)} dx\, (1-x)^{D-1}x^{D-1} 
\end{align}
%
To get the leading behavior, one proceeds by expanding in leading order to $1-M$ or the right and to $t/k$ on the left
\begin{align}
1 - \frac{t}k = 1 - \frac{\epsilon(t/k)^D}{B(D)} \quad \implies \epsilon \approx \left(\frac{B(D)t}k\right)^{1/D}
\end{align}
which in turn implies the leading behavior of the expectation we are looking for is
\begin{align}
1 - \frac{B(D)^{1/D}\Gamma(1/D+1)}{k^{1/D}}
\end{align}

From this calculation we recover that the decrease of $g_\text{task}^\text{dMMSE}$ towards 0 is explicitly dimensionally cursed at $\alpha \to \infty$. We expect a similar rate to hold for finite $\alpha$ by the curse-of-dimensionality arguments given previously.

\section{Equivalent Statistical Representations}

In this appendix, we present an equivalent (but simplified) statistical model for the regression vector $H_Z$. This statistically-equivalent model will simplify the moment calculations in \sref{moment} and the random matrix analysis in \sref{deterministic_equivalent}. 

\begin{lemma}\label{lemma:H_tilde_rep} Let $\tv$ be a given task vector with $\norm{\tv} = \sqrt{d}$. Meanwhile, let $a \sim \mathcal{N}(0, 1)$, $s \sim \mathcal{N}(0, 1)$, $\epsilon \sim \mathcal{N}(0, \rho)$ be three scalar normal random variables, and $q \sim \mathcal{N}(0, I_{\cl-1})$, $g \sim \mathcal{N}(0, I_{d-1})$, $u \sim \mathcal{N}(0, I_{d-1})$, and $v_\epsilon \sim \mathcal{N}(0, \rho I_\cl)$ be isotropic normal random vectors. Moreover, $\tv$ and all of the above random variables are mutually independent. We have the following equivalent statistical representation of the pair $(H_Z, y_{\cl+1})$:
    \begin{equation}\label{eq:H_rep}
        H_Z \overset{(d)}{=} (d/\cl) M_{\tv}\begin{bmatrix}
            s \\ u
        \end{bmatrix}\begin{bmatrix}
        h^\top M_\tv, & ( a/\sqrt{d} + \theta_\epsilon)^2 / \sqrt{d} + \theta_q^2 /\sqrt{d}
        \end{bmatrix},
    \end{equation}
    and
    \begin{equation}\label{eq:y_target_rep}
        y_{\cl+1} \overset{(d)}{=}  s + \epsilon.
    \end{equation}
In the above displays, $M_{\tv}$ denotes a \emph{symmetric} and \emph{orthonormal} matrix such that 
    \begin{equation}\label{eq:Mtv}
        (M_{\tv}) e_1 = \frac{\tv}{\norm{\tv}},
    \end{equation}
where $e_1$ denotes the first natural basis vector in $\R^d$; $h \in \R^d$ is a vector defined as 
\begin{equation}\label{eq:h_equiv}
        h \bydef \begin{bmatrix}
        \frac{{\theta_\epsilon} a}{\sqrt{d}} + \frac{ a^2}{d} +  \theta_q^2\\
        \big[(\theta_\epsilon +  a/\sqrt{d})^2 +  \theta_q^2\big]^{1/2} { g}/{\sqrt{d}}
        \end{bmatrix};
    \end{equation}
and $\theta_\epsilon$, $\theta_q$ are scalars such that 
    \begin{equation}\label{eq:theta_scalars}
        \theta_\epsilon = {\norm{v_\epsilon}}/{\sqrt d} \qquad\text{and}\qquad \theta_q = \norm{q}/\sqrt d.
    \end{equation}   
\end{lemma}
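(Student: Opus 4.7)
My plan is to apply an orthogonal change of coordinates that aligns $\tv$ with the first standard basis vector, then exploit the rotational invariance of isotropic Gaussians to peel off the relevant degrees of freedom. Fix $\tv$ with $\norm{\tv} = \sqrt d$ and set $\tilde x_i \bydef M_\tv^\top x_i$ for every $i \in [\cl+1]$. Since $M_\tv$ is orthogonal, $\set{\tilde x_i}$ are still i.i.d. $\mathcal{N}(0, I_d/d)$, and $M_\tv^\top \tv = \sqrt d\, e_1$ by the defining property of $M_\tv$. I parameterize $\tilde x_i = (\alpha_i/\sqrt d,\, \gamma_i/\sqrt d)$ with $\alpha_i \sim \mathcal{N}(0,1)$ and $\gamma_i \sim \mathcal{N}(0, I_{d-1})$ independent across $i$, and likewise $\tilde x_{\cl+1} = (s/\sqrt d,\, u/\sqrt d)$. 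Then $y_i = \alpha_i + \epsilon_i$ for $i \le \cl$ and $y_{\cl+1} = s + \epsilon_{\cl+1}$, which immediately gives \eqref{eq:y_target_rep} upon identifying $\epsilon \bydef \epsilon_{\cl+1}$.

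Next I rewrite $H_Z$ in the rotated frame. Using $x_{\cl+1} = M_\tv \tilde x_{\cl+1}$, $\sum_i y_i x_i = M_\tv \sum_i y_i \tilde x_i$, and the symmetry of $M_\tv$, a short manipulation yields
\begin{equation*}
    H_Z = \frac{d}{\cl}\, M_\tv \begin{bmatrix} s\\ u\end{bmatrix} \begin{bmatrix} h^\top M_\tv, & \sum_i y_i^2/d^{3/2}\end{bmatrix}, \qquad h \bydef \frac{1}{\sqrt d}\, M_\tv^\top \sum_{i\le \cl} y_i x_i.
\end{equation*}
Writing $\alpha \bydef (\alpha_1, \ldots, \alpha_\cl)$ and $\epsilon \bydef (\epsilon_1, \ldots, \epsilon_\cl) \sim \mathcal{N}(0, \nv I_\cl)$, componentwise computation gives $h_1 = (\norm{\alpha}^2 + \alpha^\top \epsilon)/d$, $h_{2:d} = \sum_i(\alpha_i+\epsilon_i)\gamma_i/d$, and $\sum_i y_i^2 = \norm{\alpha+\epsilon}^2$. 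Matching against \eqref{eq:H_rep}--\eqref{eq:h_equiv} therefore reduces to verifying the joint law of these three objects.

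The main analytical step is to decouple the cross term $\alpha^\top \epsilon$. Conditioning on $\epsilon$ and applying any measurable Householder reflection $R = R(\epsilon)$ with $R\epsilon = \norm{\epsilon} e_1$, the rotated vector $\tilde\alpha \bydef R\alpha$ is again standard Gaussian and independent of $\epsilon$. Setting $a \bydef \tilde\alpha_1$, $q \bydef \tilde\alpha_{2:\cl}$, and $v_\epsilon \bydef \epsilon$ produces jointly independent Gaussians with the prescribed laws and the distributional identities
\begin{equation*}
    \alpha^\top \epsilon = \norm{v_\epsilon}\, a, \quad \norm{\alpha}^2 = a^2 + \norm{q}^2, \quad \norm{\alpha+\epsilon}^2 = (\norm{v_\epsilon}+a)^2 + \norm{q}^2.
\end{equation*}
Substituting the first two into $h_1$ yields $h_1 = a^2/d + \norm{q}^2/d + \norm{v_\epsilon}a/d = a^2/d + \theta_q^2 + \theta_\epsilon a/\sqrt d$, and substituting the third into $\sum_i y_i^2/d^{3/2}$ yields $(\theta_\epsilon + a/\sqrt d)^2/\sqrt d + \theta_q^2/\sqrt d$, matching \eqref{eq:h_equiv} and the last entry of the row vector in \eqref{eq:H_rep}.

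To finish, I handle $h_{2:d}$ by further conditioning on $\beta \bydef \alpha + \epsilon$. Since $\set{\gamma_i}$ are independent of $(\alpha, \epsilon)$, the sum $\sum_i \beta_i \gamma_i$ given $\beta$ is $\mathcal{N}(0, \norm{\beta}^2 I_{d-1})$ and hence equals $\norm{\beta}\, g$ in distribution for a fresh $g \sim \mathcal{N}(0, I_{d-1})$ independent of $(a, q, v_\epsilon)$ and of $(s, u, \epsilon_{\cl+1})$. Plugging in $\norm{\beta}^2 = (\norm{v_\epsilon}+a)^2 + \norm{q}^2$ gives the second block of \eqref{eq:h_equiv}. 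Mutual independence of the tuple $(s, u, a, q, v_\epsilon, g, \epsilon)$ follows immediately from the construction: $(s,u)$ come from the independent query token $x_{\cl+1}$, $g$ is extracted from the $\gamma_i$'s which are independent of $(\alpha, \epsilon)$, and $(a,q,v_\epsilon)$ arise from a rotation internal to $(\alpha, \epsilon)$. The only delicate point I anticipate is measurability of $R(\epsilon)$ and careful bookkeeping of conditional independence across these successive conditionings; beyond that the argument is routine Gaussian manipulation.
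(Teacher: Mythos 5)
Your proof is correct and follows essentially the same route as the paper's: both exploit rotational invariance to align the feature space with $\tv$ and the sample space with the context noise vector, and both then use Gaussian conditioning to replace the residual cross term by a fresh standard Gaussian $g$. The only difference is organizational — the paper packages the two rotations as a single bi-rotation of the data matrix, $X = M_\tv[\,\cdot\,]M_{v_\epsilon}/\sqrt{d}$, whereas you apply them sequentially and condition — which does not change the substance of the argument.
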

\begin{remark}
For two random variables $A$ and $B$, the notation $A \overset{(d)}{=} B$ indicates that $A$ and $B$ have identical probability distributions. Note that $A$ and $B$ can be either scalars [as in the case of \eqref{eq:y_target_rep}], or matrices of matching dimensions [as in the case of \eqref{eq:H_rep}].  
\end{remark}
\begin{remark}
    A concrete construction of the symmetric and orthonormal matrix $M_\tv$ satisfying \eqref{eq:Mtv} can be based on the Householder transformation \cite{householder1958unitary,lu2021dice,trefethen1997numerical}. 
\end{remark}
\begin{proof}
Recall that the data vector $x_{\cl+1}$ is independent of the task vector $\tv$. Then, by the rotational symmetry of the isotropic normal distribution, we can rewrite
\begin{equation}\label{eq:xn_rep}
    x_{\cl+1} \overset{(d)}{=} \frac{1}{\sqrt{d}} M_{\tv}\begin{bmatrix}
        s \\ u
    \end{bmatrix},
\end{equation}
where $s \sim \mathcal{N}(0, 1)$ and $u \sim \mathcal{N}(0, I_{d-1})$ are two independent normal random variables (vectors), and $M_\tv$ is the symmetric orthonormal matrix specified in \eqref{eq:Mtv}. Note that $y_{\cl+1} = x_{\cl+1}^\top \tv + \nrv$, with $\nrv \sim \mathcal{N}(0,\nv)$ denoting the noise. The representation in \eqref{eq:y_target_rep} then follows immediately from \eqref{eq:xn_rep} and the identity in \eqref{eq:Mtv}. 

To show \eqref{eq:H_rep}, we first reparameterize the $d \times \cl$ Gaussian data matrix $X$ as
\begin{equation}\label{eq:X_rep}
    X = M_{\tv} \begin{bmatrix}
        a & q^\top\\
        p & U
    \end{bmatrix} M_{v_\epsilon} / \sqrt d.
\end{equation}
In the above display, $a \sim \mathcal{N}(0, 1)$, $p \sim \mathcal{N}(0, I_{d-1})$, $q \sim \mathcal{N}(0, I_{\cl-1})$; $U \in \R^{(d-1)\times(\cl-1)}$ is a matrix with iid standard normal entries; and $M_{v_\epsilon}$ is a symmetric orthonormal matrix such that
\begin{equation}\label{eq:M_ve}
    M_{v_\epsilon} e_1 = \frac{v_\epsilon}{\norm{v_\epsilon}},
\end{equation}
where $e_1$ denotes the first natural basis vector in $\R^\cl$. Since the data matrix $X$, the task vector $\tv$, and the noise vector $v_\epsilon$ are mutually independent, it is straightforward to verify via the rotational symmetry of the isotropic normal distribution that both sides of \eqref{eq:X_rep}  have identical probability distributions. Using this new representation, we have
\begin{equation}
    X v_\epsilon = \theta_\epsilon M_{\tv} \begin{bmatrix}
        a\\
        p
    \end{bmatrix}.
\end{equation}
Meanwhile,
\begin{equation}\label{eq:H_rep1}
    X^\top \tv = M_{v_\epsilon}\begin{bmatrix}
        a\\
        q
    \end{bmatrix},
\end{equation}
and thus
\begin{equation}\label{eq:H_rep2}
    X X^\top \tv = \frac{1}{\sqrt{d}} M_{\tv}\begin{bmatrix}
        a^2 + \norm{q}^2\\
        a p + U q
    \end{bmatrix}.
\end{equation}
Combining \eqref{eq:H_rep1} and \eqref{eq:H_rep2} yields
\begin{align}
Xy &= XX^\top \tv + X v_\epsilon\\
    &= M_{\tv}\begin{bmatrix}
        \theta_\epsilon a +  a^2/\sqrt{d} + \theta_q^2 \sqrt{d}\\
        (\theta_\epsilon +  a/\sqrt{d}) p + U q/\sqrt d
    \end{bmatrix}.
\end{align}
Observe that $U q/\sqrt{d} \overset{(d)}{=} \theta_q\, p'$, where $p' \sim \mathcal{N}(0, I_{d-1})$ is a normal random variable independent of everything else. Using this reparametrization for $U q/\sqrt{d}$ and the fact that $p, p'$ are two independent Gaussian vectors, we can conclude that
\begin{equation}\label{eq:Xy_rep}
    \frac{1}{\sqrt{d}} Xy \overset{(d)}{=} M_\tv h,
\end{equation}
where $h$ is the random vector defined in \eqref{eq:h_equiv}.

Lastly, we consider the term $y^\top y$ in $H_Z$. Since $y = X^\top \tv + v_\epsilon$, 
\begin{align}
    y^\top y &= \norm{X^\top \tv + v_\epsilon}^2\\
    &= \norm{X^\top \tv + \theta_\epsilon \sqrt{d}M_{v_\epsilon}e_1}^2\\
    &= (a + \theta_\epsilon \sqrt{d})^2 + \theta_q^2 d,\label{eq:yy_rep}
\end{align}
where the second equality follows from \eqref{eq:M_ve} and to reach the last equality we have used the representation in \eqref{eq:H_rep1}. To show \eqref{eq:H_rep}, we recall the definition of $H_Z$ in \eqref{eq:H_Z_SI}. Substituting \eqref{eq:xn_rep}, \eqref{eq:Xy_rep} and \eqref{eq:yy_rep} into \eqref{eq:H_Z_SI}, we are done.
\end{proof}

\section{The Stieltjes Transforms of Wishart Ensembles}
\label{appendix:Wishart}

In this appendix, we first recall several standard results related to the Stieltjes transforms of Wishart ensembles. In our problem, we assume that there are $\ntv$ unique task vectors $\set{\tv_i}_{i \in [\ntv]}$ in the training set. Moreover, these task vectors $\set{\tv_i}_{i \in [\ntv]}$ are independently sampled from the uniform distribution on the sphere $\mathcal{S}^{d-1}(\sqrt{d})$ with radius $\sqrt{d}$. Let 
\begin{equation}\label{eq:F_R}
    F_R(\nu) \bydef (\Rtr + \nu I_d)^{-1},
\end{equation}
where $\Rtr$ is the sample covariance matrix of the task vectors as defined in \eqref{eq:btr_Rtr} and $\nu$ is a positive scalar.

Note that the distribution of $\Rtr$ is asymptotically equivalent to that of a Wishart ensemble. By standard random matrix results on the Stieltjes transforms of Wishart ensembles (see, \emph{e.g.}, \cite{bai2010spectral}), we have
\begin{equation}\label{eq:F_trace_asymp}
    \frac 1 d \tr F_R(\nu) \simeq \mathcal{M}_\tload(\nu)
\end{equation}
as $d, \ntv \to \infty$ with $\ntv / d = \tload$. Here,
\begin{equation}\label{eq:m_Wishart}
    \mathcal{M}_\tload(\nu) \bydef \frac{2}{\nu + 1 - 1/\tload + \left[(\nu+1-1/\tload)^2 + 4\nu/\tload\right]^{1/2}}.
\end{equation}
is the solution to the self-consistent equation
\begin{equation}\label{eq:m_Wishart_identity}
    \frac{1}{\mathcal{M}_\tload(\nu)} = \frac{1}{1+ \mathcal{M}_\tload(\nu)/\tload} + \nu.
\end{equation}
Moreover, 
\begin{equation}
    \frac 1 d \tr F^2(\nu) \simeq -  \mathcal{M}'_\kappa(\nu) =  \frac{\mathcal{M}^2_\kappa(\nu)}{1 - \frac{\kappa \mathcal{M}_\kappa^2(\nu)}{[\kappa+\mathcal{M}_\kappa(\nu)]^2}}.
\end{equation}

For the remainder of this appendix, we will further explore the self-consistent equation given by \eqref{eq:chi_Etr}. We will show that the solution $\chi_\ps$ and its derivative $\frac{d}{d \ps} \chi_\ps $, at $\ps = 0$, can be characterized by the function $\mathcal{M}_\tload(\nu)$ in \eqref{eq:m_Wishart}. To start, note that at $\ps = 0$, the equation in \eqref{eq:chi_Etr} can be written as
\begin{equation}\label{eq:chi_xi_0}
    \frac{\tau \chi_0}{1+\chi_0} = (1+1/d) - \frac{\lambda(1+\chi_0)}{d} \tr(\Etr + \lambda(1+\chi_0) I)^{-1}.
\end{equation}
Recall the definition of $\Etr$ given in \eqref{eq:Etr}. It is straightforward to verify that
\begin{equation}\label{eq:Etr_inv}
    ( \Etr  + \lambda(1+\chi_0) I_{d+1})^{-1} = \begin{bmatrix}
        F_R(\nu_0) + a^\ast (1+\rho)^2 F_R(\nu_0) \btr \btr^\top F_R(\nu_0) & -a^\ast (1+\rho) F_R(\nu_0) \btr\\
        -a^\ast (1+\rho) \btr^\top F_R(\nu_0) & a^\ast
    \end{bmatrix},
\end{equation}
where $F_R(\cdot)$ is the function defined in \eqref{eq:F_R}, 
\begin{equation}\label{eq:nu0}
    \nu_0 = \frac{1+\rho}{\alpha} + \lambda(1+\chi_0)
\end{equation}
and
\begin{equation}\label{eq:a_inv}
    \frac{1}{a^\ast} = (1 + \rho)^2 + \lambda(1+\chi_0) - (1+\rho)^2 \btr^\top F_R(\nu_0) \btr.
\end{equation}
From \eqref{eq:Etr_inv}, the equation \eqref{eq:chi_xi_0} becomes
\begin{equation}\label{eq:chi_xi_1}
    \frac{\tau \chi_0}{1+\chi_0} = (1+1/d) - \frac{\lambda(1+\chi_0)}{d} \tr F_R(\nu_0) - (1+\nv)^2 \frac{a^\ast \lambda(1+\chi_0)}{d} \norm{F_R(\nu_0) \btr}^2.
\end{equation}
By the construction of $F_R(\nu_0)$ and $\btr$, we can verify that
\begin{equation}\label{eq:Fbtr_norm}
    \btr^\top F_R(\nu_0) \btr \le 1 \qquad\text{and}\qquad \norm{F_R(\nu_0) \btr}^2 \le \frac{1}{\nu_0} \le \frac{\alpha}{1+\nv}.
\end{equation}
Substituting the first inequality above into \eqref{eq:a_inv} gives us
\begin{equation}
    a^\ast \lambda(1+\chi_0) \le 1.
\end{equation}
Combining this estimate with the second inequality in \eqref{eq:Fbtr_norm}, we can conclude that the last term on the right-hand side of \eqref{eq:chi_xi_1} is negligible as $d \to \infty$. Moreover, using the asymptotic characterization given in \eqref{eq:F_trace_asymp}, the equation \eqref{eq:chi_xi_1} leads to
\begin{equation}\label{eq:chi_xi_2}
    \frac{\tau \chi_0}{1+\chi_0} \simeq 1 - \lambda(1+\chi_0)\mathcal{M}_\tload(\nu_0).
\end{equation}
Introducing a change of variables
\begin{equation}
    \xi_0 = \lambda (1+\chi_0),
\end{equation}
and also recalling the definition of $\nu_0$ in \eqref{eq:nu0}, we can further transform \eqref{eq:chi_xi_2} to
\begin{equation}
\xi_0\mathcal{M}_\tload\left(\frac{1+\nv}{\alpha}+\xi_0\right) - \frac{\tau \lambda}{\xi_0} \simeq 1 - \tau.    
\end{equation}
Observe that the above is identical to the equation in \eqref{eq:xi_equation}, except for a small error term captured by $\simeq$. By the stability of \eqref{eq:xi_equation}, we can then conclude that
\begin{equation}\label{eq:xi0_xi_1}
 \xi_0 \simeq \xi,   
\end{equation}
thus verifying \eqref{eq:xi0_xi}.

Next, we compute $\chi'_0$, the derivative of $\chi_\ps$ (with respect to $\ps$) evaluated at $\ps = 0$. Differentiating \eqref{eq:chi_Etr} give us
\begin{equation}\label{eq:dchi_0}
    \tau \chi'_0 = \frac 1 d \tr\left[(\Etr +\xi_0 I)^{-1}\Big(\chi'_0 \Etr - \frac{(1+\chi_0)^2}{\tau}\Bte\Big)(\Etr +\xi_0 I)^{-1}\Etr\right].
\end{equation}
Thus,
\begin{align}\label{eq:dchi_LT}
    \frac{\tau \chi'_0}{(1+\chi_0)^2} &\simeq  \frac{\frac 1 d \tr\left(\Bte[(\Etr + \xi I)^{-1}- \xi(\Etr + \xi I)^{-2}]\right)}{1  - 2\xi \tr (\Etr + \xi I)^{-1}/d + \xi^2 \tr(\Etr + \xi I)^{-2}/d - \tau},
\end{align}
where we have used \eqref{eq:xi0_xi_1} to replace $\xi_0$ in \eqref{eq:dchi_0} by $\xi$, with the latter being the solution to the self-consistent equation in \eqref{eq:xi_equation}. Using the decomposition in \eqref{eq:Etr_inv} and following similar arguments that allowed us to simplify \eqref{eq:chi_xi_1} to \eqref{eq:chi_xi_2}, we can check that
\begin{equation}\label{eq:GEtr_asymp}
    \frac{1}{d} \tr (\Etr + \xi I)^{-1} \simeq \frac{1}{d} \tr S(\Etr + \xi I)^{-1}S^\top \simeq \frac 1 d \tr F\left(\frac{1+\nv}{\cload} + \xi\right) \simeq \mathcal{M}_\tload\left(\frac{1+\nv}{\cload} + \xi\right),
\end{equation}
and
\begin{equation}\label{eq:GEtr2_asymp}
    \frac{1}{d} \tr (\Etr + \xi I)^{-2} \simeq \frac{1}{d} \tr S(\Etr + \xi I)^{-2}S^\top \simeq \frac 1 d \tr F^2\left(\frac{1+\nv}{\cload} + \xi\right) \simeq -\mathcal{M}'_\tload\left(\frac{1+\nv}{\cload} + \xi\right),
\end{equation}
where $S$ is a $d \times (d+1)$ matrix obtained by removing the last row of $I_{d+1}$, and $\mathcal{M}_\kappa(\cdot)$ is the function defined in \eqref{eq:m_Wishart}. Substituting \eqref{eq:GEtr_asymp} and \eqref{eq:GEtr2_asymp} into \eqref{eq:dchi_LT} yields
\begin{align}\label{eq:dchi}
    \frac{\tau \chi'_0}{(1+\chi_0)^2} &\simeq  \frac{\frac 1 d \tr\left(\Bte[(\Etr + \xi I)^{-1}- \xi(\Etr + \xi I)^{-2}]\right)}{1  - 2\xi \mathcal{M}_\tload\left(\frac{1+\nv}{\cload} + \xi\right) - \xi^2 \mathcal{M}'_\tload\left(\frac{1+\nv}{\cload} + \xi\right) - \tau}.
\end{align}

\end{document}